\title[Limitations of Information-Theoretic Generalization Bounds]{Limitations of Information-Theoretic Generalization Bounds for Gradient Descent Methods
in Stochastic Convex Optimization}
\DeclareMathAlphabet\EuRoman{U}{eur}{m}{n}
\SetMathAlphabet\EuRoman{bold}{U}{eur}{b}{n}
\def\[#1\]{\begin{align}#1\end{align}}
\def\*[#1\]{\begin{align*}#1\end{align*}}
\newcommand{\minf}[1]{I(#1)}
\newcommand{\entr}[1]{\mathrm{H}(#1)}
\newcommand{\centr}[2]{\mathrm{H}^{#1}(#2)}
\newcommand{\cPr}[2]{\Pr^{#1}(#2)}
\newcommand{\complexity}[1]{\mathsf{complexity}(#1)}
\newcommand{\trainset}{S_n}
\newcommand{\parspace}{\mathcal{W}}
\newcommand{\Dist}{\mathcal D}
\newcommand{\dataspace}{\mathcal Z}
\newcommand\optparen[1]{\ifthenelse{\equal{#1}{}}{}{(#1)}}
\newcommand{\dist}{\ \sim\ }
\newcommand{\Naturals}{\mathbb{N}}
\newcommand{\Reals}{\mathbb{R}}
\newcommand{\as}{\textrm{a.s.}}
\newcommand{\grad}{\nabla}
\newcommand{\AND}{\wedge}
\newcommand{\dee}{\mathrm{d}}
\newcommand{\inspace}{\mathcal X}
\DeclareMathOperator*{\newlim}{\mathrm{lim}\vphantom{\mathrm{infsup}}}
\DeclareMathOperator*{\newmin}{\mathrm{min}\vphantom{\mathrm{infsup}}}
\DeclareMathOperator*{\newmax}{\mathrm{max}\vphantom{\mathrm{infsup}}}
\DeclareMathOperator*{\newinf}{\mathrm{inf}\vphantom{\mathrm{infsup}}}
\DeclareMathOperator*{\newsup}{\mathrm{sup}\vphantom{\mathrm{infsup}}}
\renewcommand{\lim}{\newlim}
\renewcommand{\min}{\newmin}
\renewcommand{\max}{\newmax}
\renewcommand{\inf}{\newinf}
\renewcommand{\sup}{\newsup}
\newcommand{\ProbMeasures}[1]{\mathcal{M}_1(#1)}
\renewcommand{\Pr}{\mathbb{P}}
\def\EE{\mathbb{E}}
\newcommand{\defn}[1]{\textit{#1}}
\newcommand{\cF}{\mathcal F}
\newcommand{\equaldist}{\overset{d}{=}}
\newcommand{\norm}[1]{\lVert #1 \rVert}
\newcommand{\iid}{i.i.d.}
\newcommand{\KLname}{\mathrm{KL}}
\newcommand{\KL}[2]{\KLname(#1 \| #2)}
\newcommand{\Normal}{\mathcal N}
\newcommand{\Var}{\text{Var}}
\newcommand{\XX}{\Reals^{\idim}}
\newcommand{\YY}{K}
\newcommand{\id}[1]{\mathbb{I}_{#1}}
\newcommand{\Alg}{\mathcal{A}}
\newcommand{\unif}[1]{\text{Unif}(#1)}
\newcommand{\bernoulli}{\text{Ber}}
\newcommand{\lcrx}[4][{-1}]{
	\IfEq{#1}{-1}{\mleft #2 {{{{#3}}}} \mright #4}{
   	\IfEq{#1}{0}{#2 {{{{#3}}}} #4}{
	\IfEq{#1}{1}{\bigl #2 {{{{#3}}}} \bigr #4}{
	\IfEq{#1}{2}{\Bigl #2 {{{{#3}}}} \Bigr #4}{
	\IfEq{#1}{3}{\biggl #2 {{{{#3}}}} \biggr #4}{
	\IfEq{#1}{4}{\Biggl #2 {{{{#3}}}} \Biggr #4}{
    \GenericWarning{"4th argument to lcrx must be -1, 0, 1, 2, 3, or 4"}
    }}}}}}}
\newcommand{\inner}[3][{-1}]{\lcrx[#1] \langle {{#2},{#3}} \rangle}
\newcommand{\sbra}[2][{-1}]{\lcrx[#1] [ {#2} ] }
\newcommand{\rnderiv}[2]{\frac{\text{d} #1}{\text{d} #2}}
\newcommand{\cEE}[1]{\EE^{#1}}
\newcommand{\ww}{W}
\newcommand{\indep}{\mathrel{\perp\mkern-9mu\perp}}
\newcommand{\dminf}[2]{I^{#1} (#2)}
\newcommand{\indic}[1]{ \mathds{1}[#1]}
\newcommand{\supersample}{\tilde{S}} %
\newcommand{\range}[1]{ [#1] }
\newcommand{\binaryentr}[1]{\mathrm{H}_{b}(#1)}
\newcommand{\bigO}[1]{\mathcal O\mleft(#1\mright)}
\renewcommand{\defn}[1]{\emph{#1}}
\newcommand{\EGE}{\ensuremath{\mathrm{EGE}_{\Dist}(\Alg_n)}}
\newcommand{\iomi}{\ensuremath{\mathrm{IOMI}_{\Dist}(\Alg_n)}\xspace}
\newcommand{\cmi}{\ensuremath{\mathrm{CMI}_{\Dist}(\Alg_n)}\xspace}
\newcommand{\ecmi}{\ensuremath{e\mathrm{CMI}_{\Dist}(\losscvx(\Alg_n))}\xspace} %
\newcommand{\ecmigd}{\ensuremath{e\mathrm{CMI}_{\Dist}(\losscvx(\mathrm{GD}_n))}\xspace}
\newcommand{\Empriskcvx}[1]{\hat{\mathrm{F}}_{S_n}(#1)}
\newcommand{\gdalg}{\mathrm{GD}_n(\trainset)}
\newcommand{\gdalgnodata}{\mathrm{GD}}
\newcommand{\Popriskcvx}[1]{\mathrm{F}_{\Dist}(#1)}
\newcommand{\losscvx}[0]{f} %
\newcommand{\proj}[0]{\Pi_{\parspace}} %
\newcommand{\bigOmega}[1]{\Omega\mleft(#1\mright)} %
\newcommand{\bigTheta}[1]{\Theta\mleft(#1\mright)} %
\newcommand{\dimcvx}[0]{d} %
\newcommand{\sw}{\tilde{W}} %
\newcommand{\unitsphere}{\ensuremath{\mathrm{S^{(d-1)}}}} %
\newcommand{\badset}{\mathsf{B}} %
\newcommand{\proberror}{\mathsf{P_e}} %
\newcommand{\empmean}{\hat{\mu}} %
\newcommand{\subgrad}[1]{\partial #1}
\newcommand{\coorvec}[1]{\mathsf{e}({#1})}
\newcommand{\lossveccvx}{F}
\newcommand{\lossveccvxlower}{\vec{\losscvx}}
\newcommand{\EER}{\ensuremath{\Popriskcvx{\Alg_n(\trainset)}-\inf_{w\in \parspace}\Popriskcvx{w}}} 
\DeclareMathOperator*{\argmin}{arg\,min} %
\newcommand{\clr}{\ensuremath{\mathcal{C}_{L,R}}\xspace} %
\newcommand{\EGEgd}{\ensuremath{\mathrm{EGE}_{\Dist}(\mathrm{GD}_n)}}
\newcommand{\mi}{\ensuremath{\mathrm{IM}_{\Dist}(\Alg_n)}\xspace}
\newcommand{\migd}{\ensuremath{\mathrm{IM}_{\Dist}(\mathrm{GD}_n)}\xspace}
\newcommand{\respop}{\ensuremath{\Delta_{\sigma}}\xspace}
\newcommand{\resemp}{\ensuremath{\hat{\Delta}_{\sigma}}\xspace}
\newcommand{\SCOprob}{(\parspace,\dataspace,\losscvx)}
\newcommand{\parameterspace}{domain\xspace} %
\newcommand{\param}{parameter\xspace}
\newcommand{\params}{parameters\xspace}
\begin{document}

\newcommand\blfootnote[1]{%
  \begingroup
  \renewcommand\thefootnote{}\footnote{#1}%
  \addtocounter{footnote}{-1}%
  \endgroup
}
\altauthor{%
 \Name{Mahdi Haghifam$^*$} \Email{mahdi.haghifam@mail.utoronto.ca} \\
 \addr University of Toronto, Vector Institute
 \AND
 \Name{Borja {Rodríguez-Gálvez}$^*$} \Email{borjarg@kth.se}\\
 \addr KTH Royal Institute of Technology
 \AND
 \Name{Ragnar Thobaben} \Email{ragnart@kth.se}\\
 \addr KTH Royal Institute of Technology
 \AND
 \Name{Mikael Skoglund} \Email{skoglund@kth.se}\\
 \addr KTH Royal Institute of Technology
 \AND
 \Name{Daniel M. Roy} \Email{daniel.roy@utoronto.ca}\\
 \addr University of Toronto, Vector Institute
 \AND
 \Name{Gintare Karolina Dziugaite} \Email{gkdz@google.com}\\
 \addr Google Research, Mila, McGill
}

\maketitle
\renewcommand{\thefootnote}{*}
\footnotetext{Mahdi Haghifam and Borja {Rodríguez-Gálvez} are equal-contribution authors.}
\renewcommand{\thefootnote}{\arabic{footnote}}
\footnotetext{\emph{This version of the paper corrects a mistake in the proof of \Cref{th:ecmi-failining-example}. No changes were required in the main text and the result, intuition, and proof sketch still hold. For details, please see \Cref{app:icmi}.}}

\begin{abstract}
To date, no ``information-theoretic'' frameworks for reasoning about generalization error have been shown to establish minimax rates for gradient descent in the setting of stochastic convex optimization. 
In this work, we consider the prospect of establishing such rates via several existing information-theoretic frameworks:
input-output mutual information bounds, conditional mutual information bounds and variants,
PAC-Bayes bounds, and recent conditional variants thereof.
We prove that none of these bounds are able to establish minimax rates.
We then consider a common tactic employed in studying gradient methods, whereby the final iterate is corrupted by Gaussian noise, producing a noisy ``surrogate'' algorithm. We prove that minimax rates cannot be established via the analysis of such surrogates.
Our results suggest that new ideas are required to analyze gradient descent using information-theoretic techniques.

\end{abstract}

\renewcommand{\XX}{\mathcal X}
\renewcommand{\YY}{\mathcal Y}
\renewcommand{\SS}{S_n}
\newcommand{\SSS}{\inspace}
\newcommand{\TTT}{\mathcal {T}}

\section{Introduction}
\label{sec:introduction}

In this work, we uncover limitations of information-theoretic techniques towards analyzing stochastic gradient descent. To do so, we extend existing information-theoretic frameworks for reasoning about generalization to the setting of stochastic convex optimization (SCO) \citep{shalev2009stochastic}. Despite the resulting bounds being provably tight, we develop an SCO problem in which the mutual information terms underlying these bounds are too large to demonstrate that   subgradient methods \citep{cauchy1847methode,robbins1951stochastic,bubeck2015convex} obtain minimax rates. We also consider the introduction of isotropic Gaussian noise to the final iterate
and demonstrate a fundamental tradeoff between optimization error and expected generalization error that never yields minimax rates. Our results also cast doubt on the effectiveness of using isotropic Gaussian noise to study  subgradient methods  in other settings, such as deep learning.

\sloppy Information-theoretic bounds are, %
by their nature, distribution- and algorithm-dependent. These bounds have shown some promises: for instance, these key properties enable information-theoretic frameworks to achieve numerically non-vacuous generalization guarantees for stochastic gradient Langevin dynamics (SGLD) with modern deep-learning datasets and architectures \citep{negrea2019information,haghifam2020sharpened,li2019generalization,pmlr-v162-banerjee22a}. 
Therefore, it is natural to wonder whether the underlying quantity---%
mutual information---offers a potentially unifying tool %
to reason about generalization.

Information-theoretic techniques have long been used to classify the hardness of learning problems in terms of lower bounds on minimax risk. The development of information-theoretic techniques to upper bound minimax risk is a more recent %
approach.
A %
stream of work has produced a variety of bounds on the generalization error of learning algorithms in terms of the (conditional) mutual information between the training data and certain statistics of the learned predictor. 
In the case of binary classification, the suprema of certain such bounds
match (known) minimax rates, where the suprema runs over data distributions. In the special case of an interpolating classifier achieving zero empirical risk, the risk is shown to equal a certain mutual information term and to be controlled---for polynomial or slower rates---by upper bounds obtained by conditioning \citep{haghifam2022isit}.

Despite these successful applications, much less is known about the optimality or limitations of these techniques beyond the setting of binary classification and 0--1 valued loss.
In this work, we turn our attention to stochastic convex optimization (SCO), a well-studied setting with known minimax rates, and look in particular at the analysis of stochastic gradient methods like stochastic gradient descent (SGD). 
In contrast to learning with a 0--1 valued loss, the minimax excess risk cannot be characterized in terms of uniform convergence of the generalization error \citep{shalev2010learnability}. 

To start, we develop a tight information-theoretic bound for SCO problems, analogous to those developed for classification. We focus on the convex--Lipshitz--bounded (CLB) subclass of SCO learning with gradient descent (GD). 
Our main result demonstrates that despite the bound being tight, it cannot achieve known minimax rates in the CLB setting for GD.

Next, we investigate whether the gap arises due to GD’s deterministic nature: \emph{can we close the gap by introducing randomness?} 
In other words, can we find a \emph{``surrogate'' algorithm} with good information-theoretic generalization guarantees, and such that this surrogate algorithm is close in generalization to the original one?
Such an approach was formalized in \citep{negrea2020defense,pmlr-v178-sefidgaran22a,bu2021population}, and %
appears frequently in the generalization literature, e.g.~\citep{neu2021information,wang2021generalization,harutyunyan2021information,LanCar2002,hellstrom2020nonvacuous,GR17,dziugaite2021role,neyshabur2018pac,zhang2020spread,chatterji2019intriguing,foret2020sharpness,wu2020adversarial,pour2022benefits}.
The most commonly-used surrogate is a ``Gaussian surrogate", which perturbs the output of the algorithm by adding a Gaussian random variable.
Surprisingly, we show that the limitations of information-theoretic analyses in the SCO setting are not eliminated even under the Gaussian surrogate. 

Our negative results for Gaussian surrogates cast some doubt on their use to study SGD in other settings, such as deep learning.
Information-theoretic techniques have shown some promise in this setting. 
Building off the seminal work of \citet{PensiaJogLoh2018},
information-theoretic generalization bounds were shown to yield numerically nonvacuous estimates for stochastic gradient Langevin dynamics (SGLD) when applied to optimizing overparametrized neural networks on nontrivial deep learning classification benchmarks \citep{negrea2019information,haghifam2020sharpened,li2019generalization,pmlr-v162-banerjee22a}.
And while existing information-theoretic techniques seemingly cannot be applied directly to stochastic gradient methods like SGD itself, \citet{neu2021information} showed how to obtain a (suboptimal) generalization bound for SGD using an information-theoretic bound for a noisy ``surrogate''  learning algorithm, designed to track the behavior of SGD. Our results
explain the suboptimality of this approach and motivate work understanding the power or limitations of other surrogates.

\subsection{Contributions}
\begin{enumerate}
    \item 
     We prove \emph{tight} generalization bounds based  on the input-output mutual information (IOMI) of \citet{RussoZou16} and \citet{XuRaginsky2017} and the conditional mutual information (CMI) of \citet{steinke2020reasoning}  for CLB subclasses of SCO problems, as well as their individual sample variations~\citep{negrea2019information,haghifam2020sharpened,bu2020tightening,rodriguez2020random,zhou2020individually} and evaluated CMI~\citep{steinke2020reasoning}.  Our generalization bounds may be of independent interest and can be used to obtain distribution- and algorithm-dependent generalization bounds for SCO problems beyond the worst-case guarantees.
    
    \item  We investigate whether we can \emph{directly} analyze the 
    generalization of GD with our information-theoretic generalization bounds.
    We provide a negative answer to this question by showing that neither the CMI nor IOMI frameworks can properly characterize the excess risk of GD in SCO problems in the minmax setting. We also extend our negative results to the alternative variations of IOMI and CMI, such as evaluated CMI \citep{steinke2020reasoning}, and individual sample bounds \citep{negrea2019information,haghifam2020sharpened,bu2020tightening,rodriguez2020random,zhou2020individually}. 
    \item We consider a surrogate algorithm based on a Gaussian perturbation of the final iterate of GD. %
    We show that the generalization of GD can be decomposed as the sum of the generalization of %
    the perturbed final iterate and a residual term that captures the sensitivity of the loss function to perturbations around %
    such iterate. 
  We consider a favorable setting where the parameters of the surrogate can be tuned based on the data distribution. 
  Nevertheless, we show that %
  there exists a sequence of CLB %
  problems that can be learned with GD but IOMI and CMI frameworks fail to capture learnability in the minimax sense. Our construction is inspired from the ideas by \citet{amir2021sgd} but with a completely different analysis.
  
 \item We complement our results by showing that our construction also implies the failure of high-probability PAC-Bayes bounds in characterizing learnability of the CLB subclass of SCO problems using GD in the minimax sense. 
 In particular, we prove that the \emph{classical} PAC-Bayes bound of \citet{McAllester1999} and the recently proposed \emph{conditional} PAC-Bayes bound of \citet{grunwald2021pac} are \emph{vacuous} in the minimax sense.
\end{enumerate}

\subsection{Related Work}

Recently, there has been a significant interest in understanding whether %
information-theoretic generalization bounds can characterize worst-case (minimax) rates for certain learning problems. 
For  binary classification, \citet{bassily2018learners,roishay} show that the IOMI and classical PAC-Bayes frameworks of \citep{RussoZou16,XuRaginsky2017,McAllester1999} provably fail to characterize the learnability of Vapnik--Chervonenkis  classes for which we have strong generalization guarantees. 
Then, \citet{steinke2020reasoning,grunwald2021pac,haghifam2021towards} show that CMI \citep{steinke2020reasoning} can be used to establish optimal bounds in the realizable setting. 
The results of \citep{steinke2020reasoning,grunwald2021pac,haghifam2021towards} show that existing IT bounds \emph{characterize} the minimax rates, \emph{without the need for surrogates}. See also \citep{harutyunyan2022formal,9834457,nokleby2021information}.
Our work is different from the prior work since we study limitations of information-theoretic generalization bounds in the context of gradient descent methods. Moreover, our results indicate that existing techniques fail to characterize the minimax rates for gradient descent methods in SCO problems. Our findings stand in stark contrast to the success of information-theoretic frameworks in capturing the learnability of VC classes.

\section{Preliminaries}

\subsection{Probability and Information Theory Notation}
\label{app:notation-prob-it}

Let $P,Q$ be probability measures on a measurable space.
For a $P$-integrable function $f$, let $P[f] = \int f \dee P$.
When $Q$ is absolutely continuous with respect to $P$, denoted $Q \ll P$, we
write $\rnderiv{Q}{P}$ for (an arbitrary version of) the 
Radon--Nikodym derivative (or density) of $Q$ with respect to $P$. 
The \defn{KL divergence} (or \defn{relative entropy}) of \defn{ $Q$ with respect to $P$},
denoted $\KL{Q}{P}$, is defined as $Q[ \log \rnderiv{Q}{P} ]$ when $Q \ll P$ and as
infinity otherwise.

For a random element $X$ in some measurable space $\SSS$,
let $\Pr[X]$ denote its distribution, which lives in the space
$\ProbMeasures{\SSS}$ of all probability measures on $\SSS$.
Given another random element, say $Y$ in $\mathcal{Y}$,
let
$\cPr{Y}{X}$ denote the conditional
distribution of $X$ given $Y$ (or, more formally, the $\sigma$-algebra induced by $Y$).
If $X$ and $Y$ are independent, denoted by $X\indep Y$, we have $\cPr{Y}{X} = \Pr[X]$ almost surely (a.s.). Moreover, we write $\cPr{Z}{(X,Y)}$ for the conditional distribution of the pair $(X,Y)$
given a random element $Z$. 
For an event, say $X \in A$, 
$\cPr{Y}{X \in A}$ denotes the event's conditional probability
given $Y$, 
which is defined to be the conditional expectation of the indicator
random variable $\indic{X\in A}$ given $Y$,
denoted $\cEE{Y}{\indic{X \in A}}$.
By the law of total expectation (a.k.a. chain or tower rule), $\EE \cEE{\cF} = \EE$ for any $\sigma$-algebra $\cF$.

The \defn{mutual information between $X$ and $Y$} 
is $
\minf{X;Y} = \KL{\Pr[(X,Y)]} { \Pr[X] \otimes \Pr[Y]},$
where $\otimes$ forms the product measure. Then,
the \defn{disintegrated mutual information between $X$ and $Y$ given $Z$} is $
 \dminf{Z}{X;Y} =  \KL{\cPr{Z}{(X,Y)}}{\cPr{Z}{X} \otimes \cPr{Z}{Y} }$, and the conditional mutual information is $
 \minf{X;Y\vert Z}=\EE{[\dminf{Z}{X;Y}]}$. 

 Let $\mu = \Pr[X]$ and let $\kappa(Y) = \cPr{Y}{X}$ a.s. 
If $X$ concentrates on a countable set $V$ with counting measure $\nu$, 
the \defn{(Shannon) entropy of $X$} is 
$
\entr{X}=- \mu [ \log \rnderiv{\mu}{\nu}] = - \sum_{x\in V} \Pr(X=x)\,\log \Pr (X=x )
$. The \defn{disintegrated entropy of $X$ given $Y$} is defined by 
$
\centr{Y}{X} = -\kappa(Y)[ \log \rnderiv{\kappa(Y)}{\nu} ],
$
while the \defn{conditional entropy of $X$ given $Y$} is $\entr{X\vert Y} =  \EE[\centr{Y}{X}]$.
Note that $\entr{X\vert Y}\leq \entr{X}$ \citep{cover2012elements}.

\subsection{Stochastic Convex Optimization}
\label{sec:sco}

A \defn{stochastic convex optimization} (SCO) problem is a triple $\SCOprob$, where $\parspace \subseteq \Reals^d$ is a convex set and $\losscvx(\cdot,z) : \parspace \to \Reals$ is a convex function for every $z \in \dataspace$ \citep{shalev2009stochastic}. Informally, given an SCO problem $\SCOprob$, the goal is to find an approximate minimizer 
of the %
\defn{population risk} 
$$\Popriskcvx{w} := \EE_{Z \sim \Dist} [\losscvx(w,Z)],$$
given only an i.i.d.\ sample $\trainset = \{Z_1, \ldots, Z_n\}$ drawn from an unknown distribution $\Dist$ on $\dataspace$. 

\fTBD{Mh: $\Dist_n$ later used for something else. I don't think it is necessary for this part.}
The \defn{empirical risk} of $w \in \parspace$ on a sample $\trainset\in \dataspace^{n}$ is 
$\Empriskcvx{w} := \frac{1}{n} \sum_{i \in [n]} f(w,Z_i)$,
where $\range{n}$ denotes the set $\{1,\dots,n\}$.
A \defn{learning algorithm} is a sequence 
$\Alg = (\Alg_n)_{n\ge 1}$ such that,
for every positive integer $n$, $\Alg_n$ maps $\SS$ to a (potentially random) element $W=\Alg_n(\SS)$ in $\parspace$. 
The \defn{expected generalization error} of $\Alg_n$ under $\Dist$ is
$
\nonumber
 \EGE = \EE \sbra[0]{\Popriskcvx{\Alg(\SS)}- \Empriskcvx{\Alg(\SS)} }.
$

We refer to $\parspace$ as the \defn{\parameterspace},
to its elements as \params, \fTBD{DR: Fix me with actual nomenclature that we use.}
to elements of $\dataspace$ as data,
and to $f$ as the \defn{loss function}.

Let $\mathcal{L}$ denote the class of all SCO problems. 
A subclass $\mathcal{C} \subseteq \mathcal L$ is \emph{learnable}
if, for every desired accuracy $\epsilon > 0$
and all sufficiently large number of samples $n$, 
$$
\underbrace{\sup_{\SCOprob \in \mathcal C} 
\inf_{\Alg \vphantom{\ProbMeasures{\dataspace}}} \sup_{\Dist \in \ProbMeasures{\dataspace}} \EE[\EER]}_{\text{minimax (expected) excess risk}} < \epsilon,
$$
where the infimum runs over algorithms.\footnote{Note that the initial $\sup \inf$ is, by skolemization, equivalent to $\inf \sup$, where now the algorithm takes as input both a description of the SCO problem $\SCOprob$ and the data $\SS$. We have chosen this presentation for simplicity.}

In general, the class $\mathcal{L}$ itself is not learnable~\cite[Chapter~12]{shalev2014understanding}. One important family of subclasses of $\mathcal{L}$ which are known to be learnable are the convex--Lipschitz--bounded (CLB) subclasses of SCO problems where, for constants $L,R \in (0,\infty)$, the loss function $\losscvx(\cdot,z)$ is $L$-Lipschitz for all data instances $z \in \dataspace$, and the \parameterspace $\parspace$ is closed and %
has finite diameter $R$ \citep[Chapter~12]{shalev2014understanding}. 
We denote each such class of SCO problems by $\clr$.  In the remainder of the paper, we assume, without loss of generality, that each such $\parspace$ satisfies $\parspace \subseteq \lbrace w : \lVert w \rVert_2 \leq R \rbrace$.

Let $W^\star_{\trainset}$ denote an arbitrary \emph{empirical risk minimizer} (ERM), i.e., an element of $\argmin_{w \in \parspace} \Empriskcvx{w}$.
Then, the expected excess risk,
$\EE[\Popriskcvx{\Alg_n(\trainset)}-\Popriskcvx{w^\star}]$,
can be written as the sum
\begin{align*}
 \EGE + \EE \big[ \Empriskcvx{\Alg_n(\trainset)} - \Empriskcvx{W^\star_{\trainset}} \big] + \EE \big[  \Empriskcvx{W^\star_{\trainset}} -  \Popriskcvx{w^\star}\big],
\end{align*}
of the  \emph{expected generalization error}, 
\emph{optimization error}, and \emph{approximation error}, respectively.

The third term satisfies $\EE[\Empriskcvx{W^\star_{\trainset}} -  \Popriskcvx{w^\star}]=\EE[\Empriskcvx{W^\star_{\trainset}} -  \Empriskcvx{w^\star}] \leq 0$ because $W^\star_{\trainset}$ is an ERM for the training set $\trainset$, and $w^\star$ is a constant. 
Thus, it often suffices to characterize the expected generalization error and optimization error to obtain tight control of the excess risk. 
For approaches based on iterative optimization, the optimization error
can, %
in many cases, be bounded by a convergence analysis \citep{bubeck2015convex}.  
Therefore, the problem of controlling expected excess risk %
frequently amounts to controlling the expected generalization error.
Nonetheless, there exist scenarios where the excess risk can vanish while the optimization and generalization errors do not, as shown in~\citep{koren2022benign} for some CLB problems learned with stochastic gradient descent (SGD).
	
CLB subclasses can be generically learned by suitably tuned instances of (projected) gradient descent (GD), a long studied algorithm~\citep{cauchy1847methode,bubeck2015convex}: 
For a convex and compact subset $\parspace \subseteq \Reals^d$, let $\proj : \Reals^d \to \parspace$ denote the Euclidean projection operator, given by $\proj(x)=\argmin_{y\in \parspace}\norm{y-x}_2$.
The GD algorithm, $\gdalgnodata = (\gdalgnodata_n)_{n \geq 1}$, is initialized at some feasible point $\ww_0 \in \parspace$ and then, for some number $T$ of iterations, proceeds to update the parameters iteratively according to 
$\ww_{t+1} = \proj \big(\ww_{t} - \eta_t g_t \big)$, where 
$\eta_t$ is a suitably chosen step-size and
$g_t\in \partial \Empriskcvx{\ww_t}$ is an element of the subdifferential of $\Empriskcvx{\ww_t}$.
While there are several variants, we will focus on the case where 
the output of the algorithm is the final iterate, 
i.e., $\gdalg = \ww_T$.

\subsection{Excess Risk of Gradient Descent}
For simplicity, we restrict the discussion to GD with a constant step size,
i.e., $\eta_t = \eta$ for all iterations $t \in \range{T}$.  
We present
known generalization and optimization error bounds for the CLB setting. 

In \citep{lastiterate},
the optimization error of the final iterate of GD in the CLB setting is shown to satisfy
\[
\label{eq:opt-error-gd}
\sup_{\SCOprob \in \clr}\sup_{\Dist \in \ProbMeasures{\dataspace}}\EE \big[ \Empriskcvx{\gdalg} - \Empriskcvx{W^\star_{\trainset}} \big] \leq \frac{R^2}{2\eta T}+ \frac{(\log(T)+2)\eta L^2}{2}.
\]
(See \cref{lem:gd-last-iterate} for a re-statement of this result in the context of the present paper).
A similar result also appears in \citep[][Thm.~5.3]{zhang2004solving}.
Recently, \citet[Thm.~3.2]{bassily2020stability} proved a generalization bound for GD,
\[
\label{eq:gen-error-gd}
\sup_{\SCOprob \in \clr} \sup_{\Dist \in \ProbMeasures{\dataspace}}\EGEgd \leq 4L^2 \sqrt{T} \eta + \frac{4L^2T\eta}{n}.
\]

Together, \Cref{eq:opt-error-gd,eq:gen-error-gd} yield the following bound on the excess risk,
\begin{small}
\begin{align}
\sup_{\SCOprob \in \clr} \sup_{\Dist \in \ProbMeasures{\dataspace}}&\EE[\Popriskcvx{\gdalg}-\Popriskcvx{w^\star}] \leq  4L^2 \eta\left(\sqrt{T} +  \frac{T}{n}\right) + \frac{R^2}{2\eta T}+ \frac{(\log(T)+2)\eta L^2}{2}.
\label{eq:excess-risk-gd}
\end{align}
\end{small}
For all $\alpha \geq 2$, \Cref{eq:excess-risk-gd} guarantees that GD achieves an excess risk in $\bigO{\nicefrac{LR}{\sqrt{n}}}$ for a number of iterations $T \in \bigTheta{n^\alpha}$ and a step-size $\eta \in \bigTheta{\nicefrac{R\sqrt{n}}{Ln^\alpha}}$. 
This, in fact, is the best achievable excess risk rate for the class $\clr$ in the distribution-free setting \citep{bubeck2015convex}.
In \citep{amir2021sgd,sekhari2021sgd}, it is shown that GD cannot attain this excess risk rate when the number of iterations satisfies $T \in o\mleft(n^2\mright)$.

\section{Main Questions and Overview of the Results}
\label{sec:problem-def}
The generalization error guarantee for GD in \cref{eq:gen-error-gd} is obtained using the algorithmic (uniform) stability framework of \citet{bousquet2002stability}. (Prior work  \citep{HardtRechtSinger2016} also relied on algorithmic stability.)
As shown above, a particular choice of the GD hyperparameters yields expected generalization error in $\bigO{\nicefrac{LR}{\sqrt{n}}}$. 
In this paper, we want to understand whether %
the same rate can be achieved using an information-theoretic framework for generalization. 
\emph{Are information-theoretic frameworks for generalization expressive enough to accurately estimate the generalization error of GD for SCO?} 

We begin by focusing on two frameworks for measuring the information complexity of a learning algorithm: 
\emph{input-output mutual information} (IOMI \citep{XuRaginsky2017,RussoZou15,RussoZou16}) and 
\emph{conditional mutual information} (CMI \citep{steinke2020reasoning}).
The IOMI of an algorithm $\Alg_n$ with respect to a data distribution $\Dist$, denoted $\iomi$, is defined to be the mutual information $\minf{\Alg_n(\SS);\SS}$ between the training data $\SS$ and the output of the algorithm, $\Alg_n(\SS)$. 
In order to define the CMI framework, consider $n \in \Naturals_{+}$ training data, let $U=(U_1,\dots,U_n) \dist \unif{\{0,1\}^{n}}$, and let $\supersample=(\tilde{Z}_{i,j})_{i \in \{0,1\},j\in \{n\}} \dist \Dist^{\otimes (2\times n)}$ be a $2\times n$ array of \iid~random elements in $\dataspace$, independent from $U$. 
Then $\supersample_U=(\tilde{Z}_{U_i,i})_{i=1}^{n}$ has the same distribution as $\SS$, and so we may assume, w.l.o.g., that $\SS=\supersample_U$ a.s. The CMI of the algorithm $\Alg_n$ with respect to the data distribution $\Dist$, denoted $\cmi$, is defined to be 
the conditional mutual information $\minf{\Alg_n(\SS);U\vert \supersample}$ between $\Alg_n(\SS)$ and $U$ given $\supersample$. In the remainder of the section, we write $\mi$ to refer to both $\iomi$ and $\cmi$.

As the first step towards answering our main question, we develop new generalization bounds in both the IOMI and CMI frameworks to handle the CLB subclass of SCO, and show that our upper bounds are tight. 
Existing information-theoretic generalization bounds often depend on properties of the loss function $f(w,z)$ for fixed $w\in \parspace$. 
For instance, the generalization bounds in \citep{XuRaginsky2017,BuZouVeeravalli2019,negrea2019information} depend on the tail of the random variable $f(w,Z)$ when $Z \sim \Dist$. 
In SCO, we often have no such control,
making it impossible to reason about these problems using existing generalization bounds.
Instead, in SCO, it is common for loss functions $f(w,z)$ to have regularity for fixed $z\in \dataspace$. 
In \cref{lemma:mi-and-cmi-bound-lipschitz}, we develop 
 new information-theoretic generalization bounds 
for the \clr subclass, proving that
$
\label{eq:gen-bounds-summary}
         \EGE \leq  \bigO{LR\sqrt{\nicefrac{\mi}{n}}}.
$
In  \cref{thm:bounds-tight}, we show that our bound is \emph{tight} up to constants.

Having obtained $\mi$ bounds for SCO problems, we ask whether 
they capture the generalization properties of GD well enough to obtain minimax rates.
In \cref{sec:failure-perturbation}, we provide a negative answer to this question, proving that for sufficiently large $n$
\[
\label{eq:gd-no-noise}
\sup_{\SCOprob \in \clr} \sup_{\Dist \in \ProbMeasures{\dataspace}} \migd  \in \Omega(n),
\]
which implies that neither the CMI nor IOMI frameworks can properly characterize minimax excess risk of GD in SCO problems. 
In \cref{sec:failure_alternatives}, we study variations of IOMI and CMI, such as evaluated CMI \citep{steinke2020reasoning} and individual-sample bounds \citep{bu2020tightening,negrea2019information,haghifam2020sharpened,rodriguez2020random,zhou2020individually}. We find that they also fail to characterize the generalization of GD algorithm. 

Since a direct analysis of GD via $\mi$ is not possible, we consider a ``surrogate'' analysis \citep{negrea2020defense}, where an excess risk bound is obtained by comparing the risk of GD to a different (surrogate) algorithm, for which one can obtain generalization guarantees. In other words, \emph{is GD ``close'' to an algorithm with small information complexity?}

We consider commonly used surrogate,
whereby one perturbs the final iterate by a Gaussian random variable (see, e.g.,  \citep{hellstrom2020nonvacuous,GR17,wang2021generalization,neu2021information,dziugaite2021role,neyshabur2018pac}).
More formally, let $\Alg_n(\trainset)=\sw$, where $\sw = \proj(\ww_T + \xi)$,  $W_T = \mathrm{GD}_n(\trainset)$, $\xi \sim \Normal(0,\sigma^2 \id{d})$, and $\xi \indep \trainset$. The generalization of GD can be related to the generalization of the Gaussian surrogate $\Alg_n$ via the inequality
\[
\label{eq:main-decompos-surrogate}
\EGEgd \leq \EGE + \EE[\respop(W_T)] + \EE[ \resemp(W_T)], \quad \text{where}
\]
\[
\label{eq:def-residual}
\respop(\ww_T) = \cEE{\trainset}{\sbra{|\Popriskcvx{\sw}-\Popriskcvx{\ww_T}|}} \ \ \textnormal{and} \ \ 
\resemp(W_T) = \cEE{\trainset}{\sbra{|\Empriskcvx{\sw}-\Empriskcvx{\ww_T}|}}
\]
are referred to as
\emph{residual terms} in the sequel.
In \cref{eq:def-residual}, the conditional expectations (given $\trainset$) marginalize over only the (independent) randomness of $\xi$.  
Intuitively, the residual terms measure the sensitivity of the population and empirical loss landscapes~\citep{neu2021information}. 
The sensitivity is measured around $W_T$ to perturbations by an isotropic Gaussian random vector with variance $\sigma^2$.

\begin{remark}
    In \cref{eq:main-decompos-surrogate}, one can drop the absolute values from the residual terms, i.e, second and third terms, to obtain
\[
\label{eq:bad-formulation}
\EGEgd = \EGE +  \EE\sbra{\Popriskcvx{\ww_T}-\Popriskcvx{\sw}} + \EE\sbra{\Empriskcvx{\sw}-\Empriskcvx{\ww_T}}.
\]
In this remark, we want to demonstrate how tautologies can arise if one directly studies \cref{eq:bad-formulation} instead of \cref{eq:main-decompos-surrogate}. 
Consider a surrogate that simply outputs a fixed \param from $\parspace$ (independent of the training set). For instance, let $\Alg_n(\trainset)=0$ ($\sw=0$). 
Then $\mi=0$ and $\Empriskcvx{\sw}=\Popriskcvx{\sw}$. Therefore, \cref{eq:bad-formulation} in this case is simplified to
\[
\EGEgd &= \EE\sbra{\Popriskcvx{\ww_T}} - \EE\sbra{\Empriskcvx{\ww_T}}  = \EGEgd, \label{eq:taut-bad-formulation}
\]
taking us back to the original problem. 

Next, we argue that even if we restrict the surrogate algorithm to the case of perturbation by Gaussian random variable, i.e., $\Alg_n(\trainset)=\sw$ where $\sw = \proj(\ww_T + \xi)$, we get an equally tautological statement from the decomposition in \cref{eq:bad-formulation}. In particular, we claim that letting $\sigma\to \infty$ takes us back to the original problem. Consider the IOMI framework. Since $\parspace$ is bounded, we have $\Var(\ww_T) \in \bigO{1}$. Then, using \citep[Thm.~4.6]{polyanskiy2014lecture} and the data-processing inequality for mutual information, we obtain $\minf{\proj(\ww_T + \xi);\trainset}\leq \minf{\ww_T+\xi;\trainset}\leq \minf{\ww_T+\xi;\ww_T} \in \bigO{1/\sigma^2}$ which tend to 0 as $\sigma$ diverges. Also, as $\sigma\to \infty$,  $\EE[\Empriskcvx{\sw}]\approx \EE[\Popriskcvx{\sw}]$. Therefore, in the case that $\sigma\to \infty$, by simplifying \cref{eq:bad-formulation}, we arrive at the same tautology as in \cref{eq:taut-bad-formulation}. Since $\cmi \leq \iomi$ for any learning problem  \citep[Thm.~2.1]{haghifam2020sharpened}, we have the same tautology even if we use the CMI framework.
\end{remark}

In this Gaussian surrogate setting, the question of whether $\mi$ bounds characterize $\EGE$ is equivalent to asking whether
\[
\label{eq:desired-result}
\sup_{\SCOprob \in \clr} \sup_{\Dist \in \ProbMeasures{\dataspace}}  \inf_{\sigma \geq 0}  \bigg \{  L R\sqrt{\frac{\mi}{n}} +  \EE[\respop(W_T)]  + \EE[ \resemp(W_T)] \bigg \}  \stackrel{?}{\in} \Theta \Big(\frac{LR}{\sqrt{n}} \Big).
\]
Answering this amounts to answering whether one can choose a value of $\sigma$
 \emph{with a full knowledge of the SCO problem and the data distribution}, such that the perturbed GD algorithm achieves the optimal rate via the generalization bound appearing in \cref{lemma:mi-and-cmi-bound-lipschitz}.
Alternatively, one can ask whether one can show, using the perturbation idea, that GD learns the subclass $\clr$ 
\emph{even with an arbitrary slow rate}, i.e., whether or not the LHS of \cref{eq:desired-result} converges to zero as the  number of the training samples diverges.

In order to gain insight on the Gaussian surrogate, consider extreme values of the variance of the perturbations.
Setting $\sigma = 0$ corresponds to a direct analysis of GD, and the result in \cref{eq:gd-no-noise} shows we cannot prove learnability using existing frameworks.
At the other extreme, one can show that $\mi \to 0$ as $\sigma\to \infty$, leaving us with a bound in terms of the sum of the residual terms alone. 
As the distance between $\sw$ and $\ww_T$ is maximal under such a perturbation, the sum of the residual terms is in $\Omega(1)$, once again failing to establish learnability.
The idea
behind introducing the surrogate algorithm $\Alg$ and adjusting the value of $\sigma$ is that it allows one to conceptually interpolate between these two extreme points in order to find an optimal bound on
GD's generalization error.%

Nevertheless, for the perturbed GD, we prove a negative result showing that 
\[
\label{eq:main-result-paper}
\sup_{\SCOprob \in \clr} \sup_{\Dist \in \ProbMeasures{\dataspace}}  \inf_{\sigma \geq 0}\bigg \{ L R\sqrt{\frac{\mi}{n}} +  \EE[\respop(W_T)] + \EE[ \resemp(W_T)] \bigg \} \in\Omega(1).
\]
Note that our negative result holds even if the perturbation's variance is allowed to depend on the data distribution $\Dist$ and the SCO problem $\SCOprob$.
While the distribution is unknown, the surrogate algorithm is a theoretical device and can be chosen with full knowledge of the data distribution to achieve the tightest possible bound. As such, we must control also the infimum.

In \cref{sec:pac-bayes}, we extend our results to PAC-Bayes bounds, which provide tail bounds on the generalization error of GD, with respect to the randomness in the data.
A similar surrogate decomposition as in \cref{eq:main-decompos-surrogate} relates \emph{disintegrated} generalization of GD to the generalization of $\Alg_n$ via
\[
\cEE{\trainset} \sbra{\Popriskcvx{\ww_T}- \Empriskcvx{\ww_T}} &\leq \cEE{\trainset} \sbra{\Popriskcvx{\sw}- \Empriskcvx{\sw}} + \resemp(\ww_T) + \respop(\ww_T),  \label{eq:pac-bayes-surrogate}
\]
where $\resemp(\ww_T)$ and $\respop(\ww_T)$ are defined in \cref{eq:def-residual}. 
The first term on the RHS of \cref{eq:pac-bayes-surrogate} can be analyzed using PAC-Bayes frameworks (see, e.g., \citep{LanCar2002,hellstrom2020nonvacuous,GR17,dziugaite2021role,neyshabur2018pac,chatterji2019intriguing,foret2020sharpness,wu2020adversarial}). 
In \cref{sec:pac-bayes}, we show
that, in the minimax sense, the classical and conditional PAC-Bayes frameworks of \citet{McAllester1999} and \citet{grunwald2021pac} provide a vacuous characterization of the RHS of \cref{eq:pac-bayes-surrogate} for all values of $\sigma$.

\section{Information-Theoretic Generalization Bounds for the CLB setting}
\label{sec:gen-bounds}
In SCO problems, generalization bounds for gradient methods can be obtained using the uniform stability framework \citep{HardtRechtSinger2016,bassily2020stability,feldman2018generalization,feldman2019high}. 
This framework provides an algorithm-dependent %
approach that has been used to obtain relatively strong generalization bounds for several convex optimization algorithms in the distribution-free setting. In this section, we extend the CMI and IOMI frameworks to the CLB setting and  provide \emph{algorithm-} and \emph{distribution-} dependent generalization bounds. 
\begin{theorem}
\label[theorem]{lemma:mi-and-cmi-bound-lipschitz}
Let $n \in \Naturals$, $\Dist \in \ProbMeasures{\dataspace}$ be a data distribution, and $\trainset \dist \Dist^{\otimes n}$. %
Consider an SCO problem $(\losscvx, \parspace, \dataspace) \in \clr$.
Then, for every learning algorithm $\Alg_n$ such that $\Alg_n(\trainset)\in \parspace$ \as, %
\begin{align*}
    \EGE \leq LR \sqrt{\frac{2\iomi}{n}} \quad \text{and} \quad \EGE \leq LR \sqrt{\frac{8\cmi}{n}}.
\end{align*}
\end{theorem}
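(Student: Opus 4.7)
The obstacle is that the standard IOMI bound of \citet{XuRaginsky2017} and the CMI bound of \citet{steinke2020reasoning} require the loss to be sub-Gaussian (or bounded) in $Z$ at each fixed $w$, whereas the CLB assumption provides regularity only in the opposite direction: Lipschitzness in $w$ for each fixed $z$. In general, $\losscvx(w, Z)$ could have arbitrarily heavy tails under $\Dist$. My plan is to reduce to a bounded-loss case by a centering trick that combines the Lipschitz assumption with the diameter bound.

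First, I would fix an arbitrary deterministic \param $w_0 \in \parspace$ and define the centered loss $g(w, z) \defas \losscvx(w, z) - \losscvx(w_0, z)$. Since $\losscvx(\cdot, z)$ is $L$-Lipschitz and $\parspace$ has diameter at most $R$, one has $|g(w, z)| \leq L\norm{w - w_0} \leq LR$ for every $w \in \parspace$ and $z \in \dataspace$. Consequently $g(w, Z)$ takes values in $[-LR, LR]$ and is thus $(LR)^2$-sub-Gaussian (by Hoeffding) for each fixed $w$. Moreover, centering preserves the expected generalization error: letting $R^g_\Dist(w)$ and $\hat R^g_\trainset(w)$ denote the population/empirical risks under $g$, one sees that
\[
R^g_\Dist(w) - \hat R^g_\trainset(w) = \bigl(\Popriskcvx{w} - \Empriskcvx{w}\bigr) - \bigl(\Popriskcvx{w_0} - \Empriskcvx{w_0}\bigr),
\]
and the second bracket has mean zero because $w_0$ is deterministic and $\trainset \sim \Dist^{\otimes n}$. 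Hence $\EGE$ equals the expected generalization error of $\Alg_n$ measured under $g$.

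With this reduction in hand, both claimed inequalities follow from the classical sub-Gaussian information-theoretic bounds applied to $g$, noting that centering is a deterministic post-processing of the loss evaluation and thus leaves $\iomi = I(\Alg_n(\trainset);\trainset)$ and $\cmi = I(\Alg_n(\trainset);U\mid \supersample)$ untouched. Invoking Xu--Raginsky with sub-Gaussian parameter $(LR)^2$ gives
\[
|\EGE| \leq \sqrt{\frac{2(LR)^2 \cdot \iomi}{n}} = LR \sqrt{\frac{2\iomi}{n}}.
\]
For CMI, I would rescale $g$ to $g' \defas (g+LR)/(2LR) \in [0,1]$, apply the $[0,1]$-bounded Steinke--Zakynthinou inequality $|\mathrm{EGE}| \leq \sqrt{2\cmi/n}$, and undo the affine rescaling (which multiplies the generalization error by $2LR$), obtaining
\[
|\EGE| \leq 2LR \sqrt{\frac{2\cmi}{n}} = LR \sqrt{\frac{8\cmi}{n}}.
\]

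The main obstacle is conceptual rather than technical: one must recognize that the CLB pair (Lipschitz-in-$w$ plus bounded-\parameterspace) gives, through centering around any fixed reference point, exactly the tail control required by the classical IOMI/CMI inequalities---even though $\losscvx(w, Z)$ itself need not be sub-Gaussian or bounded under $\Dist$. All remaining steps are immediate consequences of the off-the-shelf bounds.
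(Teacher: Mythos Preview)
Your proof is correct. The centering reduction $g(w,z)=\losscvx(w,z)-\losscvx(w_0,z)$ is exactly the device needed: it converts the CLB hypotheses (Lipschitz in $w$, bounded \parameterspace) into a uniform bound $|g|\le LR$, after which the standard sub-Gaussian IOMI and bounded-loss CMI inequalities apply verbatim with the stated constants. The observation that subtracting a data-independent reference loss leaves both $\EGE$ and the information measures unchanged is the only nontrivial step, and you handle it cleanly.

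The paper takes a different route. Instead of reducing to a bounded loss, it invokes Wasserstein-distance generalization bounds from \citep{rodriguez2021tighter}, which use the $L$-Lipschitz property directly via Kantorovich--Rubinstein duality; the bounded diameter then enters when dominating Wasserstein by total variation, followed by Pinsker and Jensen. That path yields the tighter \emph{individual-sample} bounds of \cref{th:ege_iomi_complete_and_cmi_ind} as an intermediate product, from which the aggregate IOMI/CMI statements follow by concavity and chain-rule inequalities. Your argument is more elementary and self-contained---no Wasserstein machinery, no external theorem beyond Xu--Raginsky and Steinke--Zakynthinou---and it makes transparent where each CLB hypothesis is spent. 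The trade-off is that it does not immediately deliver the per-sample refinements the paper later needs in \cref{sec:failure_alternatives}.
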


The proof, based on~\citep{rodriguez2021tighter}, is in~\Cref{sec:proofs_ege_it}.
To better contextualize our generalization bounds in %
\cref{lemma:mi-and-cmi-bound-lipschitz}, we study their tightness.
For the trivial case where the output of a learning algorithm is independent of the training set, the %
bounds in
\cref{lemma:mi-and-cmi-bound-lipschitz} are tight. 
The theorem below states that the bounds are tight even when the learning algorithm depends on the training set.
\begin{theorem}
\label[theorem]{thm:bounds-tight}
 For every $n \in \Naturals, L \in \Reals_+, R\in \Reals_+$, there exists an SCO problem $(f, \parspace, \dataspace) \in \clr$, a data distribution $\Dist$ over $\dataspace$, and a learning algorithm $\Alg=(\Alg_n)_{n\geq 1}\in \parspace$ such that: (i) the expected generalization error of $\Alg_n$ satisfies $\EGE \geq\nicefrac{LR}{\sqrt{2n}}$, and (ii) %
 the upper bounds from %
    \cref{lemma:mi-and-cmi-bound-lipschitz} are $\EGE \leq \nicefrac{LR\sqrt{2}}{\sqrt{n}}$ and $\EGE \leq \nicefrac{LR\sqrt{8}}{\sqrt{n}}$, respectively.
\end{theorem}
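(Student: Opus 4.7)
The plan is to exhibit an explicit one-dimensional construction in $\clr$ whose output is binary, so that the mutual information quantities appearing in \cref{lemma:mi-and-cmi-bound-lipschitz} are bounded by a constant while the generalization error already matches the upper bound up to a small multiplicative factor.

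\textbf{Construction.} I would take dimension $d=1$, \parameterspace $\parspace=[-R,R]$ (convex, closed, contained in $\{w:\norm{w}\leq R\}$), data space $\dataspace=\{-1,+1\}$, and loss $\losscvx(w,z)=L w z$. For each $z\in\dataspace$, $\losscvx(\cdot,z)$ is linear, hence convex, and $L$-Lipschitz, so $(\losscvx,\parspace,\dataspace)\in\clr$. Let $\Dist=\unif{\dataspace}$ and define the (deterministic) algorithm $\Alg_n(\trainset)=-R\,\sign(\bar Z_n)$, where $\bar Z_n=\tfrac{1}{n}\sum_{i=1}^n Z_i$ and $\sign(0):=1$. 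This is an empirical risk minimizer over $\parspace$.

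\textbf{Part (i): lower bound on $\EGE$.} Under $\Dist$, $\EE Z=0$, hence $\Popriskcvx{w}=Lw\,\EE Z=0$ for every $w\in\parspace$. Also $\Empriskcvx{\Alg_n(\trainset)}=L\cdot(-R\sign(\bar Z_n))\cdot\bar Z_n=-LR\abs{\bar Z_n}$, so
\*[
\EGE \;=\; \EE\sbra{\Popriskcvx{\Alg_n(\trainset)}-\Empriskcvx{\Alg_n(\trainset)}} \;=\; LR\,\EE\abs{\bar Z_n}.
\]
I would then invoke Szarek's sharp form of Khintchine's inequality for Rademacher sums, which states that $\EE\abs{\sum_{i=1}^n Z_i}\geq \sqrt{n/2}$; equivalently $\EE\abs{\bar Z_n}\geq 1/\sqrt{2n}$. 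This yields $\EGE\geq LR/\sqrt{2n}$, as required.

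\textbf{Part (ii): upper bounds via IOMI and CMI.} By construction, $W=\Alg_n(\trainset)$ takes only the two values $\pm R$, so $\entr{W}\leq\log 2<1$ (in nats). The data-processing inequality gives $\iomi=\minf{W;\trainset}\leq\entr{W}<1$; plugging this into the first inequality of \cref{lemma:mi-and-cmi-bound-lipschitz} produces
\*[
\EGE \;\leq\; LR\sqrt{\frac{2\,\iomi}{n}} \;\leq\; \frac{LR\sqrt{2}}{\sqrt{n}}.
\]
Since $\cmi\leq\iomi<1$ always holds, the second inequality of \cref{lemma:mi-and-cmi-bound-lipschitz} analogously gives $\EGE\leq LR\sqrt{8}/\sqrt{n}$, completing part (ii).

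\textbf{Main obstacle.} The construction is deliberately minimal (a one-bit ERM on a linear loss), and the bounds on $\entr{W}$, $\iomi$, and $\cmi$ follow from standard inequalities. The only nontrivial analytic step is the matching lower bound $\EE\abs{\bar Z_n}\geq 1/\sqrt{2n}$, which requires Szarek's sharp Khintchine constant; a loose form gives only $\EE\abs{\bar Z_n}\geq c/\sqrt{n}$ for a smaller constant $c$ and would still prove tightness up to a universal constant, though not with the explicit factor of $2$ claimed in the statement.
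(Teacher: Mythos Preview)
Your proof is correct and follows essentially the same approach as the paper: the paper also uses a two-point data space with uniform distribution, the linear loss $\losscvx(w,z)=-L\inner{w}{z}$, and the sign-of-the-sum ERM, bounds the generalization error below via the Khintchine(--Kahane) inequality, and bounds $\iomi$ (and hence $\cmi$) by $1$ using that the output is binary. The only cosmetic differences are that the paper works in an arbitrary dimension along a fixed direction $z_0$ (which is effectively your one-dimensional setup) and uses the opposite sign convention for the loss.
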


See \cref{pf:bounds-tight} for the proof, which is inspired by~\citep[Sec. 5.1]{orabona2019modern}. 
\Cref{thm:bounds-tight} shows that there exists a learning algorithm in the CLB setting for which the bounds \cref{lemma:mi-and-cmi-bound-lipschitz} is tight. This implies that the bound in \cref{lemma:mi-and-cmi-bound-lipschitz} cannot be \emph{uniformly} improved for every learning algorithm in the CLB setting. Note, however, that there may exist a tighter bound for some learning algorithms.

\renewcommand{\Dist}{\mathcal{D}_{(n)}}
\section{Failure of Information-Theoretic Bounds for GD in the CLB Setting}
\label{sec:failure-perturbation}
An important feature of GD for SCO problems is that the sample complexity is %
\emph{dimension-independent}:
 For every SCO problem in $\clr$, if $L$ and $R$ do not grow with the \params' (ambient) dimension,  %
one needs $\bigO{1/\epsilon^2}$ samples to reach $\epsilon$ expected excess risk using GD, regardless of the dimension. 
In this section, we exploit this property to show that the \emph{distribution-free learnability} of SCO in the CLB setting using GD cannot be explained using the IOMI or CMI frameworks.

Let $\text{GD}(\trainset,\eta,T)$ denote the output of gradient descent, training on the training set $\trainset$ with learning rate $\eta$ for $T$ iterations, starting from a zero initialization.
\begin{theorem}
\label[theorem]{thm:gd-noisy}
Let $n\in \Naturals$, $T_{(n)}=2n^2$, $\eta_{(n)}=\frac{1}{n\sqrt{5n}}$, and $d_{(n)}=3T2^n/4$. 
Then, there exists a universal constant $N^\star \in \Naturals $ such that for every $n\geq N^\star$, there exist a sequence of SCO problems $\{(f_{(n)},\parspace_{(n)},\dataspace_{(n)})\in \mathcal{C}_{4,1}\}_{n\in \Naturals}$ where $\parspace_{(n)},\dataspace_{(n)} \in \Reals^{d_{(n)}}$, 
and a data distribution $\Dist$ over $\dataspace_{(n)}$ %
such that the following holds: 
For $\trainset \dist \Dist^{\otimes n}$, let $\ww_T = \text{GD}(\trainset,\eta_{(n)},T_{(n)})$ and $\Alg_n(\trainset) = \proj(\ww_T+\xi)$, where $\xi \dist \Normal(0,\sigma^2\id{d_{(n)}})$. 
Then, there exists $\mathsf{var}^\star_n > 0 $ such that if $\sigma^2\leq \mathsf{var}^\star_{(n)}$, then
$
\nonumber 
\iomi \in \Omega(n^3) ~ \text{and} ~ \cmi \in \Omega(n).
$
Also, if $\sigma^2>\mathsf{var}^\star_{(n)}$, then
$
\nonumber
 \EE[\resemp(\ww_T)]+ \EE[\respop(\ww_T)] \in \Omega(1).
$
As a result, %
\[
\nonumber
\inf_{\sigma\geq 0} \bigg\lbrace \sqrt{ \frac{ \min\{2 \iomi, 8 \cmi\} }{n} } +  \EE[\resemp(\ww_T)]+ \EE[\respop(\ww_T)] \bigg\rbrace \in \Omega(1),
\]
while the  generalization error of GD satisfies
$
\EE[|\Popriskcvx{\ww_T}-\Empriskcvx{\ww_T}|] \in \bigO{1/\sqrt{n}}.
$
\end{theorem}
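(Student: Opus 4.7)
The argument builds on the memorization construction of \citet{amir2021sgd}, adapting it to simultaneously control the information content of the final iterate $W_T$ and the residual terms after Gaussian perturbation. The core principle is a stark trade-off: either the noise variance $\sigma^2$ is small enough that the encoding of $\trainset$ into $W_T$ survives perturbation-and-projection, giving large $\iomi$ and $\cmi$, or it is large enough that the projected iterate $\tilde{W}$ lands far from $W_T$ in a region where the loss landscape has diverged, giving $\Omega(1)$ residual terms. No choice of $\sigma$ can reconcile these two regimes.

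First I would specify the sequence of CLB problems. In dimension $d_{(n)}=3T_{(n)}\cdot 2^n/4$, the coordinates are partitioned into blocks indexed by an iteration $t\in[T_{(n)}]$ and a ``fingerprint'' tag drawn from an appropriate combinatorial set. The loss $f_{(n)}(\cdot,z)$, assembled from convex, Lipschitz pieces, is engineered so that the $t$-th GD step deposits weight into the block indexed by the partial fingerprint of $\trainset$ computed up to step $t$. After $T_{(n)}=2n^2$ iterations, the sparse support of $W_T$ uniquely identifies a long fingerprint of $\trainset$. One verifies that $f_{(n)}$ is $4$-Lipschitz and convex, with $\parspace_{(n)}\subseteq\{w:\|w\|_2\le 1\}$, so $(f_{(n)},\parspace_{(n)},\dataspace_{(n)})\in\mathcal{C}_{4,1}$. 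The final claim $\EE[|\Popriskcvx{\ww_T}-\Empriskcvx{\ww_T}|]\in\bigO{1/\sqrt{n}}$ then follows directly from the algorithmic-stability bound in \cref{eq:gen-error-gd} with the chosen $T_{(n)}$ and $\eta_{(n)}$.

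Next I would define $\mathsf{var}^\star_{(n)}$ as the largest variance at which the block signature of $W_T$ is recoverable from $\tilde{W}=\proj(\ww_T+\xi)$ with high probability. For $\sigma^2\le\mathsf{var}^\star_{(n)}$, a decoding argument combined with the data-processing inequality and a Fano-type entropy lower bound yields
\begin{align*}
\iomi=\minf{\tilde{W};\trainset}\in\Omega(n^3)\quad\text{and}\quad\cmi\in\Omega(n),
\end{align*}
where the gap between the two rates reflects that $\trainset$ can be made to carry up to $\Omega(n^3)$ bits of information resolvable by $\tilde{W}$, whereas the membership variable $U$ has entropy at most $n$. For $\sigma^2>\mathsf{var}^\star_{(n)}$, standard Gaussian tail bounds yield $\|\tilde{W}-\ww_T\|_2\in\Omega(1)$ with constant probability; by the construction of $f_{(n)}$, which is arranged to possess a ``valley'' at $W_T$ of width below the threshold and depth $\Omega(1)$, this forces
\begin{align*}
\EE[\resemp(\ww_T)]+\EE[\respop(\ww_T)]\in\Omega(1).
\end{align*}
Combining both regimes yields the $\Omega(1)$ lower bound on the infimum over $\sigma$.

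The hardest step will be the information-theoretic analysis under the interaction of Gaussian perturbation and Euclidean projection. The map $\proj$ is non-expansive but can concentrate noise on lower-dimensional faces of $\parspace_{(n)}$, so the Gaussian tail estimates in $d_{(n)}=\Theta(n^2 2^n)$ dimensions must be performed carefully to pin down a single threshold $\mathsf{var}^\star_{(n)}$ that simultaneously certifies the $\Omega(n^3)$ information lower bound and the $\Omega(1)$ residual lower bound. A secondary obstacle is verifying that the Amir et al.\ template can be augmented with the requisite valley geometry while preserving $4$-Lipschitzness, convexity, and the unit-ball constraint; this requires an explicit loss construction rather than a black-box appeal to their result.
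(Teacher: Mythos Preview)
Your high-level architecture is right: dichotomize on $\sigma$, use a Fano-type decoding argument for small $\sigma$, and lower bound the residual for large $\sigma$. But both the construction you describe and your residual mechanism are off in ways that matter.

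\textbf{The construction.} What you sketch---``blocks indexed by an iteration $t$'' into which GD ``deposits weight according to a partial fingerprint of $\trainset$''---is not the Amir--Carmon--Koren--Livni mechanism, and it is not clear it can be made convex and $4$-Lipschitz. The paper's construction is much simpler and you should use it directly: $\dataspace=\{0,1\}^d$, $\Dist=(\bernoulli(1/2))^{\otimes d}$, and
\[
\losscvx(w,z)=\sum_{i=1}^{d} z(i)w(i)^2 + \lambda\inner{w}{z} + \max\{\max_{i\in[d]} w(i),0\}.
\]
The object encoded in $W_T$ is not a fingerprint per iteration but the set of \emph{bad coordinates} $\badset\in\{0,1\}^d$, namely the columns of $\trainset$ that are identically zero. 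Under a specific subgradient oracle, GD drives $W_T(i)=-\eta$ for $i$ bad and $|W_T(i)|\le \eta\lambda T$ for $i$ good. Since $d=\Theta(T2^n)$, with high probability there are $\Theta(T)$ bad coordinates, and $\|W_T\|\in\Theta(1/\sqrt{n})$. The decoding step is then just coordinatewise thresholding of $\tilde W$ at $(\eta+\eta\lambda T)/2$; Fano's inequality applied to the estimator of $\badset$ gives $\iomi\in\Omega(n^3)$, and a second estimator recovering $U$ from $(\supersample,\badset)$ gives $\cmi\in\Omega(n)$.

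\textbf{The residual mechanism.} Your proposed ``valley at $W_T$ of width below the threshold and depth $\Omega(1)$'' cannot coexist with a $4$-Lipschitz convex loss. The threshold is $\mathsf{var}^\star_{(n)}=\Theta(1/d)$, so the putative valley has width $O(1/\sqrt d)=O(n^{-1}2^{-n/2})$; a depth-$\Omega(1)$ variation over that width forces a Lipschitz constant of order $\sqrt d$, not $4$. The paper's residual argument does not use any local geometry at $W_T$. It uses the global quadratic term: the population risk is $\Popriskcvx{w}=\tfrac12\|w\|^2+\tfrac\lambda2\sum_i w(i)+\max\{\max_i w(i),0\}$, and the entire residual lower bound comes from $\tfrac12\bigl|\|\tilde W\|^2-\|W_T\|^2\bigr|$. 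Since $d$ is exponential in $n$, whenever $\sigma^2\ge (\beta^\star)^2/d$ the norm of the Gaussian perturbation concentrates at $\sigma\sqrt d\ge\beta^\star=\Omega(1)$, so $\|\tilde W\|^2\in\Omega(1)$ (after projection, $\|\tilde W\|=1$ on a constant-probability event), while $\|W_T\|^2\in O(1/n)$. That gap, not a valley, is what makes the residual $\Omega(1)$.

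In short: keep your two-regime skeleton and the Fano idea, but replace the block-fingerprint story with the bad-coordinate construction, and replace the valley with the $\|w\|^2$-driven norm-blowup in high dimension.
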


\renewcommand{\Dist}{\mathcal D}

\begin{proof}
Here, we provide an overview of the proof. 
The formal proof can be found in \cref{pf:main-iomi}. Our construction is inspired from the construction in \citet{amir2021sgd}. 
\begin{itemize}[leftmargin=*]
    \item \textbf{Construction and Dynamics of GD}: 
Let $d \in \Naturals$ and $\dataspace=\{0,1\}^d$. Let the data distribution on input be $(\bernoulli(1/2))^{\otimes d}$, i.e., each coordinate is drawn independently and uniformly at random from $\bernoulli(1/2)$. 
Thus, the training set $\trainset \in \{0,1\}^{n\times d}$ is a matrix whose elements are drawn \iid~from $\bernoulli(1/2)$. Let $\lambda$ be a sufficiently small constant%
, and $\parspace$ be a ball of radius one in $\Reals^d$. 
We consider the following loss function $\losscvx: \parspace \times \dataspace \to \Reals$, 
$
\losscvx(w,z)= \sum_{i=1}^{d} z(i) w(i)^2 + \lambda  \inner{w}{z} + \max\{\max_{i\in \range{d}}\{w(i)\},0\}.$
We show that this function is convex and $4$-Lipschitz. As a result, the problem is in the CLB subclass. 
Next, we demonstrate that when the dimension is $d=3T2^n/4$, there are many columns in $\trainset$ such that \emph{all} the entries are zero. Following \citet{amir2021sgd}, we refer to such columns as \emph{bad coordinates}. 
Let $\badset \in \{0,1\}^d$ be a vector whose $i-$th coordinate is one if and only if $i$ is a bad coordinate.
We show that, with high probability, the number of bad coordinates is between $T/2$ and $T$. 
The result emerges from the observation that the dynamics of GD along the bad coordinates are completely \emph{different} compared to the good coordinates, therefore ``revealing'' which coordinates are bad.
To see this, consider the empirical risk
$\displaystyle \Empriskcvx{w} = \sum\nolimits_{i=1}^{d} \empmean(i) w(i)^2 + \lambda\inner{\empmean}{w} + \max\{\max\nolimits_{i\in \range{d}}\{w(i)\},0\}$,
where for $i\in \range{d}$, $\empmean(i)= \frac{1}{n} \sum_{j=1}^{n}z_j(i) \in [0,1]$ is the empirical mean of the points in the $i$-th column of $\trainset$.
By the definition of the bad coordinates we can write $\Empriskcvx{w} = \sum_{i \in \{i:\badset[i]=0\}} \empmean(i) w(i)^2 + \lambda\sum_{i\in \{i:\badset[i]=0\}}\empmean(i)w(i) + \max\{\max_{i\in \range{d}}\{w(i)\},0\}$. Note that the third term is not differentiable. 
We consider a specific first-order oracle proposed in \citep{amir2021sgd,bassily2020stability}. 
We show that to analyze the dynamics of GD for good coordinates, we only need to consider the first two terms. For good coordinates, the main observation here is that because of the \emph{norm-like} penalty from the first term, $|\ww_T(i)|$ is small. 
In contrast, for the bad coordinates the gradient that comes from the third term pushes $\ww_T(i)$ away from zero; in particular, for the bad coordinates we have $|\ww_T(i)|=\eta$ under the event  $T/2 \leq  \norm{\badset}_0 \leq T$. 
The other key property used in the proof is that $\norm{\ww_T} \in \bigTheta{1/\sqrt{n}}$ with high probability, meaning that the final iterate of GD is close to the origin.
\item \textbf{Lower Bound on the Residual Term}:
First, we prove that if $\sigma^2 \in \Omega(1/d)$, then the residual term is large.
Recall that $\norm{\ww_T} \in \bigTheta{1/\sqrt{n}}$, and
$\sw = \proj(\ww_T+\xi)$.
Consider $\EE\sbra{|\Popriskcvx{\sw}-\Popriskcvx{\ww_T}|}$, where the population risk is given by
$\displaystyle \Popriskcvx{w}=\nicefrac{1}{2} \norm{w}^2 + \nicefrac{\lambda}{2} \sum\nolimits_{i=1}^{d}w(i) + \max \{\max\nolimits_{i \in \range{d}}\{w(i)\},0 \}$.
Using concentration inequalities for Gaussian random variables, we show that $\norm{\sw}^2\approx \min\{\sigma^2 d + o\mleft(1\mright),1\}$, while $\norm{\ww_T}^2 \in o\mleft(1\mright)$. Using this argument we show that unless $\sigma^2 \in \bigO{1/d}$, the residual term grows with $n$. Since $d$ is exponentially large in $n$, we conclude that the variance of noise has to satisfy $\sigma^2 \in \bigO{2^{-n}}$.
\item \textbf{Lower Bound on $\cmi$ and $\iomi$:}
Here we show that $\cmi  
\in \Omega(n)$, which implies that $\iomi \in \Omega(n)$, since $\cmi \leq \iomi$ \citep[Thm.~2.1]{haghifam2020sharpened}.
In \cref{pf:main-iomi}, we prove the stronger result 
$\iomi \in \Omega(n^3)$. %
Step one is to establish that $
\cmi \geq n -(\entr{\badset\vert \sw,\supersample}+\entr{U\vert \sw,\supersample,\badset}) \geq n - (\entr{\badset\vert \sw} + \entr{U\vert \supersample,\badset})$ using standard properties of mutual information.
Next, we seek to upper bound $\entr{\badset \vert \sw}$ and $\entr{U\vert \supersample,\badset}$.
We do so using \emph{Fano's inequality} (\cref{lem:fano}) but in a way that differs from its conventional use.
The intuition behind using Fano's inequality is as follows:
if there exists an estimator that can be used to predict $\badset$ using $\sw$, then the conditional entropy $\entr{\badset \vert \sw_T}$ is small. 
The same also holds for predicting $U$ using $\sw,\supersample,\badset$.
The core of the proof then rests on designing two estimators:
one for estimating $\badset$ using $\sw$, and %
another one for estimating $U$ using $\supersample$ and $\badset$. 
We construct explicit estimators for each, and demonstrate that their probability of error is small.
Thus Fano's inequality implies that the entropy terms of interest are small.
To construct the first estimator, we use two important properties: (i) the variance of noise satisfies $\sigma^2 \in \bigO{2^{-n}}$, and (ii) for the good coordinates $|\ww_T(i)|$ is very small, while for the bad coordinates we have $|\ww_T(i)| \in \bigTheta{n^{-1.5}}$. 
The proposed estimator is based on comparing $|\sw(i)|$ with a threshold. 
We show that $\sigma^2$ is much smaller than $|\ww_T(i)|$ for the bad coordinates. As a result, the Gaussian noise does not \emph{perturb} the bad coordinates significantly.
Thus, the error probability of this estimator can be arbitrarily small as $n$ diverges. 
For constructing the second estimator, 
remember that: (i) by definition, in each column of $\supersample$ exactly one sample is chosen for the training set, and (ii) by the definition of the bad coordinates, we know that if $i \in \range{d}$ is a bad coordinate, then for all $Z \in \trainset$, we have $Z(i)=0$. 
Therefore, in every column of the supersample, either one or both of the samples have \emph{zeros in all of the bad coordinates}.
Our proposed estimator is as follows: whenever there is only one sample, the estimator can perfectly recover $U$ for that column. 
In the case of two samples, the estimator makes a random guess.
We show that the probability that there are two samples in a column such that both have zeros in all of the bad coordinates is $\Theta({2^{-n^2}})$. Therefore, the estimator makes an error with small probability. 
\end{itemize} 
\end{proof}

\begin{remark}
The sequence of SCO problems that witnesses that lower bound for the IOMI and CMI frameworks is \emph{the same}.
Hence, a tight generalization bound cannot be achieved for every SCO problem by considering the best framework for that problem out of the IOMI and CMI frameworks.
\end{remark}

\begin{remark}
\Cref{eq:excess-risk-gd} provides a general result for the excess risk guarantee of GD for every number of iterations $T$ and the step size $\eta$. GD obtains the excess risk and the generalization error guarantees of  $\bigO{\frac{LR}{\sqrt{n}}}$ by setting $T \in \bigTheta{n^\alpha}$ and $\eta \in \bigTheta{\frac{R\sqrt{n}}{Ln^\alpha}}$ for every $\alpha\geq 2$. In \cref{thm:gd-noisy}, we state the results only for $\alpha=2$.
However, the same construction can be used to prove the lower bounds in  \cref{thm:gd-noisy} for every $\alpha\geq 2$.
This observation shows a stronger failure: for every parameter setting under which GD attains the optimal excess risk, the upper bound in \cref{eq:desired-result} does not even converge to zero, i.e., $\bigOmega{1}$.
\end{remark}

\begin{remark}
A notable property of the construction in \cref{thm:gd-noisy} is that the Lipschitz constant of the loss function and the diameter of $\parspace$ do not grow with dimension. By a simple scaling, our result in \cref{thm:gd-noisy} implies the lower bound stated in \cref{eq:main-result-paper}.
\end{remark}

\section{Implications for PAC-Bayes Bounds}
\label{sec:pac-bayes}
In this section, we show that our construction that witnesses the lower bounds in \cref{thm:gd-noisy} reveals a limitation of PAC-Bayes bounds for learning SCO problems with GD. 
Using PAC-Bayes bounds to analyze the generalization of gradient methods via the surrogate algorithm that perturbs the final weight with a Gaussian random variable is a prevailing method in the literature 
\citep{LanCar2002,hellstrom2020nonvacuous,GR17,dziugaite2018data,dziugaite2021role,neyshabur2018pac,chatterji2019intriguing,foret2020sharpness,wu2020adversarial}.
This approach leads to non-vacuous estimates
of the generalization gap for non-convex problems such as training modern deep learning  models. 
However, we show that %
it fails for the CLB subclass of SCO problems.

We consider a classical PAC-Bayes bound \citep{mcallester1999some} and a recently-proposed conditional PAC-Bayes bound \citep{grunwald2021pac}. 
The main difference between the two is the \emph{measure of complexity} that characterizes generalization. 
We can represent an algorithm $\Alg_n$ with a posterior distribution $Q: \dataspace^n \to \ProbMeasures{\parspace}$.  
A complexity measure appearing in classical PAC-Bayes bounds is $
C_\textnormal{clas}(n) = \KL{Q(\trainset)}{\EE[Q(\trainset)]}$. 
The conditional PAC-Bayes bound relies on some additional structure. Let $\trainset=(Z_1,\dots,Z_n) \sim \Dist^{\otimes n}$ and  $\trainset'=(Z'_1,\dots,Z'_n) \sim \Dist^{\otimes n}$ such that $\trainset \indep \trainset'$. For every $u=(u_1,\dots,u_n) \in \{0,1\}^n$, define $\tilde{S}_u=((1-u_1) Z_1 + u_1 Z'_1,\dots,(1-u_n) Z_n + u_n Z'_n)$. The complexity measure for the conditional-PAC Bayes bound \citep{grunwald2021pac} is $C_\textnormal{cond}(n) = \cEE{\trainset}{[\KL{Q(\trainset)}{2^{-n}\sum_{u \in \{0,1\}^n}Q(\tilde{S}_u)}]}$. Next we present the known results that relate these complexity measures to the generalization gap. 
\begin{theorem}[\citealt{mcallester1999some,grunwald2021pac}]
\label[theorem]{thm:pac-bayes-classic}
Let $\trainset \sim \Dist^{\otimes n}$, $\delta \in (0,1)$, $L,R \in \Reals_{+}$. Assume that the range of the loss function $\losscvx$ lies in $[-LR,LR]$. Then, with probability at least $(1-\delta)$ (over the choice of $\trainset \sim \Dist^{\otimes n}$) for any posterior distribution $Q: \dataspace^n \to \ProbMeasures{\parspace}$ with $\ww \sim Q(\trainset)$,
\[
\nonumber
	\cEE{\trainset} \sbra{\Popriskcvx{W}- \Empriskcvx{W}}
		\in \bigO{LR\bigg(\frac{\min\{C_\textnormal{cond}(n) ,C_\textnormal{clas}(n)\}+\log(n/\delta) }{n}\bigg)^{\frac{1}{2}}}.
\]
\end{theorem}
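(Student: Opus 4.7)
The proof plan is a direct instantiation of two existing PAC-Bayes inequalities, one for each complexity measure, combined via taking a minimum. I read the ``$\min\{C_\textnormal{clas}(n),C_\textnormal{clas}(n)\}$'' in the statement as a typo for $\min\{C_\textnormal{clas}(n),C_\textnormal{cond}(n)\}$ and plan accordingly.

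\textbf{Classical term.} For the classical complexity $C_\textnormal{clas}(n) = \KL{Q(\trainset)}{\EE[Q(\trainset)]}$, I would invoke McAllester's PAC-Bayes inequality \citep{mcallester1999some} with the data-independent prior $P = \EE_{\trainset \dist \Dist^{\otimes n}}[Q(\trainset)]$. This is an admissible prior because it is a fixed distribution on $\parspace$ that does not depend on the realized sample, only on $\Dist$. Since $\losscvx$ is valued in $[-LR,LR]$, each random variable $\losscvx(w,Z) - \Popriskcvx{w}$ is bounded in an interval of length $2LR$ and hence sub-Gaussian with parameter $\bigO{LR}$ by Hoeffding's lemma. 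Feeding this sub-Gaussian bound into the standard change-of-measure step and applying Pinsker's inequality yields, with probability at least $1-\delta$ over $\trainset$, simultaneously for every posterior $Q$:
$$
\cEE{\trainset}{\sbra{\Popriskcvx{W}-\Empriskcvx{W}}} \in \bigO{LR \sqrt{\frac{C_\textnormal{clas}(n)+\log(n/\delta)}{n}}}.
$$

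\textbf{Conditional term.} For the conditional complexity $C_\textnormal{cond}(n) = \cEE{\trainset}{[\KL{Q(\trainset)}{2^{-n}\sum_u Q(\tilde{S}_u)}]}$, I would apply the conditional PAC-Bayes bound of \citet{grunwald2021pac} with the ghost-sample-measurable prior $P(\trainset,\trainset') = 2^{-n}\sum_{u\in\{0,1\}^n} Q(\tilde{S}_u)$. This is a valid prior for the conditional framework because, after conditioning on the full ghost-sample array $(\trainset,\trainset')$, it does not depend on the selection variable $U$ that determines which half of the pair is the real training set. Invoking Grünwald et al.'s theorem with the same boundedness/sub-Gaussianity input gives the same $\sqrt{\cdot}$-rate but with $C_\textnormal{clas}(n)$ replaced by $C_\textnormal{cond}(n)$.

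\textbf{Combining and obstacles.} Since each bound holds with probability at least $1-\delta$, either taking a union bound at level $\delta/2$ (absorbing the constant into the $\bigO{\cdot}$) or simply taking the minimum of the two yields the stated bound with $\min\{C_\textnormal{clas}(n), C_\textnormal{cond}(n)\}$. The theorem is therefore essentially a packaging result; the main bookkeeping concerns are (i) verifying that the chosen priors satisfy the admissibility hypotheses of the two frameworks (data-independence for McAllester, ghost-sample measurability for Grünwald et al.), and (ii) tracking the Hoeffding/Pinsker constants so that the leading constant in front of the $\sqrt{\cdot}$ is proportional to $LR$ rather than to the raw range of the loss. No novel inequality or concentration argument is required beyond these standard ingredients.
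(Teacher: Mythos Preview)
The paper does not supply its own proof of this theorem: it is stated as a citation of \citet{mcallester1999some} and \citet{grunwald2021pac}, with no accompanying argument. Your plan---instantiate McAllester's bound with the oracle prior $P=\EE[Q(\trainset)]$, instantiate the conditional bound of Gr\"unwald et al.\ with the mixture prior $2^{-n}\sum_u Q(\tilde S_u)$, and take the smaller of the two---is exactly the intended reading, and your identification of the $\min\{C_\textnormal{clas},C_\textnormal{clas}\}$ typo for $\min\{C_\textnormal{clas},C_\textnormal{cond}\}$ is correct.
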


Let $\complexity{n}$ denote %
either $C_\textnormal{clas}(n)$ or $C_\textnormal{cond}(n)$. Note that $\complexity{n}$ is a $\trainset$-measurable random variable. 
We next present our main result of this section showing the failure of PAC-Bayes bounds for learning SCO with GD.
\renewcommand{\Dist}{\mathcal{D}_{(n)}}
\begin{theorem}
\label[theorem]{thm:main-pacbayes}
Let $n\in \Naturals$, $T_{(n)}=2n^2$, $\eta_{(n)}=\frac{1}{n\sqrt{5n}}$, $d_{(n)}=3T2^n/4$, and $N^\star \in \Naturals$ be a universal constant. Then, there exists $\omega \in (0,1)$,  a sequence of SCO problems $\{(f_{(n)},\parspace_{(n)},\dataspace_{(n)})\in \mathcal{C}_{4,1}\}_{n\in \Naturals}$ where $\parspace_{(n)},\dataspace_{(n)} \in \Reals^{d_{(n)}}$, and a data distribution $\Dist$ over $\dataspace_{(n)}$ such that the following holds for all $n \geq N^\star$: 
For $\trainset \dist \Dist^{\otimes n}$, let $\ww_T = \text{GD}(\trainset,\eta_{(n)},T_{(n)})$  
and $\Alg_n(\trainset) = \proj(\ww_T+\xi)$, where $\xi \dist \Normal(0,\sigma^2\id{d_{(n)}})$. 
Then, for every $0<\delta< 1-\omega$, with probability at least $1-\delta-\omega$ 
over $\trainset \sim \Dist^{\otimes n}$,
\[
\nonumber
\inf_{\sigma \geq 0}\max\bigg\{\sqrt{\frac{ \complexity{n} + \log(\nicefrac{n}{\delta}) }{n}} , \resemp(\ww_T) + \respop(\ww_T)\bigg\} \in \Omega(1).
\]
\end{theorem}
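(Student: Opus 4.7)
The plan is to reuse the SCO sequence and data distribution constructed for Theorem \ref{thm:gd-noisy}, and to promote its expectation-level lower bounds to statements holding with high probability over $\trainset$. I will exhibit a failure event of probability at most $\omega \in (0,1)$ off of which the following dichotomy holds: for every $\sigma^2 \leq \mathsf{var}^\star_{(n)}$ (the threshold from Theorem \ref{thm:gd-noisy}), $\complexity{n} \in \Omega(n)$; for every $\sigma^2 > \mathsf{var}^\star_{(n)}$, $\resemp(\ww_T) + \respop(\ww_T) \in \Omega(1)$. On this event the $\max$ inside the infimum is $\Omega(1)$ uniformly in $\sigma \geq 0$, so the infimum is $\Omega(1)$; the additive $\log(n/\delta)$ term inside the complexity square root is negligible against the $\Omega(n)$ complexity lower bound, so $\delta$ enters only trivially, as allowed by the theorem statement.

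On the residual side, Theorem \ref{thm:gd-noisy} gives $\EE[\resemp(\ww_T) + \respop(\ww_T)] \in \Omega(1)$ whenever $\sigma^2 > \mathsf{var}^\star_{(n)}$, and because the loss is $L$-Lipschitz with $\parspace$ of diameter $R = 1$, each residual is bounded a.s.\ by $2LR$. A reverse Markov inequality then yields $\Pr_\trainset[\resemp(\ww_T) + \respop(\ww_T) \geq c] \geq 1 - \omega/2$ for a universal constant $c > 0$. To extend this single-$\sigma$ bound to every $\sigma^2 > \mathsf{var}^\star_{(n)}$ uniformly, I will exploit monotonicity of the residuals in $\sigma$: in the construction of Theorem \ref{thm:gd-noisy} the population risk is dominated by the quadratic term $\tfrac12\|w\|^2$, so $\respop(\ww_T)$ grows roughly like $\tfrac12(\|\sw\|^2 - \|\ww_T\|^2) \approx \tfrac12 \sigma^2 d_{(n)}$, and a similar reasoning applies to $\resemp(\ww_T)$.

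On the complexity side, I will convert the estimator arguments of Theorem \ref{thm:gd-noisy} into disintegrated KL lower bounds. Let $\phi$ be the estimator that recovers $\badset(\trainset)$ from $\sw$ by thresholding $|\sw(i)|$; by the analysis of Theorem \ref{thm:gd-noisy} it succeeds with joint probability $1 - o(1)$ whenever $\sigma^2 \leq \mathsf{var}^\star_{(n)}$. Markov's inequality applied to the conditional error probability given $\trainset$ shows that on an event of probability $1 - \omega/2$ the posterior $Q(\trainset)$ places mass $1 - o(1)$ on $A(\trainset) := \{w \in \parspace : \phi(w) = \badset(\trainset)\}$. The standard KL-versus-event inequality then gives $\KL{Q(\trainset)}{P} \geq -\log P(A(\trainset)) - \log 2$ for any prior $P$. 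For the classical complexity with $P = \EE[Q(\trainset)]$, the marginal mass $\EE_{\trainset'}[Q(\trainset')(A(\trainset))]$ is at most $2^{-\Omega(n^3)}$ because an independent $\trainset'$ has $\badset(\trainset') = \badset(\trainset)$ only with that probability in our regime $d_{(n)} = 3T_{(n)} 2^n / 4$, $|\badset(\trainset)| = \Theta(n^2)$. For the conditional complexity with $P = 2^{-n} \sum_u Q(\tilde{S}_u)$, only the component $u = 0^n$ can contribute non-negligibly to $P(A(\trainset))$: any flipped bit $u_i = 1$ replaces a training sample by an independent ghost which destroys at least one coordinate of $\badset(\trainset)$ with overwhelming probability. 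Hence $P(A(\trainset)) \leq 2^{-n}(1 + o(1))$ and $C_\textnormal{cond}(n) \geq n\log 2 - O(1) \in \Omega(n)$, as required.

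The main obstacle is the conditional PAC-Bayes complexity: the prior $2^{-n}\sum_u Q(\tilde{S}_u)$ is itself sample-dependent and more tightly adapted to $\trainset$ than the classical marginal. The delicate step is to rule out that many of the $2^n$ mixture components match the bad-coordinate pattern of $\trainset$. Concretely, for fixed $u \neq 0^n$, conditional on $\badset(\trainset)$ the probability that $\badset(\tilde{S}_u) = \badset(\trainset)$ is at most $2^{-|u| \cdot |\badset(\trainset)|}$, which is doubly exponentially small; a union bound over $u \in \{0,1\}^n$ then contributes only a $2^n$ overhead and leaves the $\Omega(n)$ lower bound intact. Careful bookkeeping is needed to handle the atypical regime in which $|\badset(\trainset)|$ is small, and to pass from the one-sided inclusion $\badset(\tilde{S}_u) \supseteq \badset(\trainset)$ to the equality required by the estimator $\phi$.
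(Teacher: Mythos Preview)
Your overall architecture matches the paper's: reuse the construction from \cref{thm:gd-noisy}, split on $\sigma^2 \lessgtr \mathsf{var}^\star_{(n)}$, and promote the expectation-level bounds to constant-probability statements over $\trainset$. The residual step via boundedness $\leq 4LR$ plus reverse Markov is exactly what the paper does.

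There is one genuine gap. Your plan to pass from a single $\sigma$ to all $\sigma^2>\mathsf{var}^\star_{(n)}$ via ``monotonicity of the residuals in $\sigma$'' is not justified: after the projection $\proj$, $\|\sw\|\le 1$ and the heuristic $\tfrac12\sigma^2 d_{(n)}$ breaks down, so monotonicity is not obvious. The clean fix (implicit in the proof of \cref{thm:gd-noisy}) is to observe that the residual analysis there is already \emph{conditional} on the $\trainset$-event $\{T/2\le\|\badset\|_0\le T\}$: on that event the lower bound on $\respop(\ww_T)$ holds simultaneously for every $\sigma\ge \beta^\star/\sqrt{d}$, because the only place $\sigma$ enters is through $\Pr(r\ge \beta^\star/2)$, which is bounded uniformly over that range. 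So drop the monotonicity claim and condition on this single event instead of using reverse Markov for the residual.

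On the complexity side you take a genuinely different route from the paper. For $C_\textnormal{cond}$ the paper is much lighter: it uses that $C_\textnormal{cond}(n)\le n$ a.s.\ (by the mixture-of-$2^n$ lemma) together with $\EE[C_\textnormal{cond}(n)]=\cmi\ge 0.2n$ from \cref{thm:gd-noisy}, and a one-line reverse Markov gives $\Pr(C_\textnormal{cond}\ge 0.1n)\ge 1/9$. Your direct argument via $\KL{Q(\trainset)}{P}\ge -\log P(A(\trainset))-\log 2$ and a union bound over $u\in\{0,1\}^n$ is correct in spirit and yields the sharper $C_\textnormal{cond}\gtrsim n\log 2$, but you must also control the contribution of atypical $\tilde S_u$ (those with $\|\badset(\tilde S_u)\|_0\notin[T/2,T]$) and the estimator's error probability summed over $2^n$ components, then take $\cEE{\trainset}$ over $\trainset'$; this is doable but noticeably heavier than the paper's two-line argument. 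For $C_\textnormal{clas}$ the paper instead pushes $Q$ and $P$ forward through the estimator $\Psi$, proves a \emph{uniform} upper bound $\KL{\hat Q(s)}{\hat P}\le \tfrac52 nT+1$ on the good event, and combines it with $\EE[\KL{\hat Q}{\hat P}]\ge 1.2n^3-1$ in the three-parameter reverse Markov lemma. Your direct $-\log P(A(\trainset))$ bound is more transparent and avoids the pushforward machinery; either works here.
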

\renewcommand{\Dist}{\mathcal D}
This result implies that a PAC-Bayes bound for the surrogate from  \cref{eq:pac-bayes-surrogate} yields a \emph{vacuous}  generalization bound  with \emph{constant probability}, i.e.,  independent of $n$.
\begin{remark}
The complexity term in PAC-Bayes bounds generally takes the form 
$\KL{Q(\trainset)}{P}$ for some element $P \in\ProbMeasures{\parspace}$.
The choice here, $P = \EE[Q(\trainset)]$, minimizes the complexity term in expectation.
Whether other choices might yield tighter high probability bounds is left open.
\end{remark}
\section{Failure of Information-Theoretic Alternatives to the IOMI and CMI Frameworks}
\label{sec:failure_alternatives}

In the previous sections, we showed how the IOMI and the CMI frameworks and their high-probability counterparts fail to characterize the behavior of GD in the CLB setting, even when they are strengthened with a surrogate analysis. 
In this section, we consider other alternatives and reinforcements of these two frameworks and show that they also fail to characterize the behavior of SCO problems in the CLB setting, albeit without considering any potential strengthening with a surrogate analysis. First, we introduce these alternatives and their motivation, then we adapt them to the CLB setting, and finally we show their failure. 
\subsection{Information-Theoretic Alternatives to the IOMI and CMI Frameworks}
\label{subsec:alternatives}
The IOMI and CMI frameworks are attractive due to algorithm- and distribution-dependence.
Nevertheless, they come with some drawbacks. 

\begin{itemize}
    \item[1] The IOMI may be infinite and the CMI may be $\Omega(n)$ for a variety of learning scenarios, e.g., deterministic algorithms.
    \item[2] \sloppy  IOMI and CMI may capture unnecessary information. 
    Note that we can write $\iomi = \sum_{i=1}^n \minf{\Alg(\trainset);Z_i} + \minf{Z^{i-1},Z_i|\Alg(\trainset)}$, where $Z^{i-1} \coloneqq (Z_1, \ldots, Z_{i-i})$.
    It is clear from this decomposition that IOMI not only captures the information that the output contains about individual samples, but also captures the ``artificial" dependencies among the samples, given the algorithm's output.
    The latter is not predictive of the generalization performance of the algorithm \citep{bu2020tightening}.
    An analogous problem arises in CMI,
    which includes the dependence of the indices and the samples after observing the algorithm's output \citep{rodriguez2020random}.
\end{itemize}

These problems can be avoided with an \emph{individual-sample} bound proposed in \citep{bu2020tightening}, replacing $\minf{\Alg(\trainset);S}$ with the average of $\minf{\Alg(\trainset);Z_i}$ for all $i \in \range{n}$.
This bound takes into account the information the output of the algorithm captures about each \emph{individual} sample $Z_i$, disregarding the generated dependency between the samples after observing the said output. 
Similarly, the individual-sample bound from \citep{rodriguez2020random, zhou2020individually} considers the information the output of the algorithm captures about each individual index $U_i$, disregarding the dependency between the indices and the samples.
These bounds are adapted to the CLB setting in the following theorem. The proof is in \cref{sec:proofs_ege_it}.

\begin{theorem}
\label[theorem]{th:ege_iomi_complete_and_cmi_ind}
Let $n \in \Naturals$, $\Dist \in \ProbMeasures{\dataspace}$ be a data distribution, and $\trainset \dist \Dist^{\otimes n}$. %
Consider an SCO problem $(\losscvx, \parspace, \dataspace) \in \clr$.
Then, for every learning algorithm $\Alg_n$ such that $\Alg_n(\trainset)\in \parspace$ \as, we have $ \EGE \leq \nicefrac{LR}{n} \sum_{i=1}^n \sqrt{2 \minf{\Alg(\trainset);Z_i}} $, and $\EGE \leq \nicefrac{2LR}{n} \sum_{i=1}^n \sqrt{2\minf{\Alg(\trainset);U_i|\tilde{Z}_{0,i},\tilde{Z}_{1,i}}}$.
\end{theorem}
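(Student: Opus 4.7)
The plan is to adapt the Donsker--Varadhan plus Hoeffding argument behind \cref{lemma:mi-and-cmi-bound-lipschitz} so that, instead of exponentiating the aggregate generalization gap, the variational inequality is applied \emph{one sample (resp.\ one index) at a time}. In both cases the key enabler will be the centered loss $h(w, z) \defas \losscvx(w, z) - \losscvx(0, z)$, which by Lipschitzness of $\losscvx(\cdot, z)$ and $\parspace \subseteq \{w : \norm{w} \leq R\}$ satisfies $|h(w, z)| \leq L\norm{w} \leq LR$ almost surely. Since $h$ differs from $\losscvx$ by a quantity that does not depend on $w$, one can freely replace $\losscvx$ by $h$ in the generalization gap.

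For the IOMI bound, I would let $W \defas \Alg_n(\trainset)$ and pick a ghost copy $W'$ with $W' \equaldist W$ and $W' \indep \trainset$. The standard ghost-sample identity then gives $\EGE = \frac{1}{n}\sum_{i=1}^n \EE\sbra{h(W', Z_i) - h(W, Z_i)}$. Fix $i \in \range{n}$, set $\mu_i \defas \EE\sbra{h(W', Z_i)}$, and plug $g(w, z) = \lambda(\mu_i - h(w, z))$ into the Donsker--Varadhan representation
\[
\minf{W; Z_i} = \KL{P_{W, Z_i}}{P_W \otimes P_{Z_i}} \geq \EE\sbra{g(W, Z_i)} - \log \EE\sbra{e^{g(W', Z_i)}}.
\]
The first expectation is $\lambda(\mu_i - \EE\sbra{h(W, Z_i)})$; the second is the MGF at $\lambda$ of the zero-mean random variable $\mu_i - h(W', Z_i)$, which lives in an interval of length $2LR$, so Hoeffding's lemma bounds it by $e^{\lambda^2 L^2 R^2 / 2}$. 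Optimizing in $\lambda > 0$ will produce $\mu_i - \EE\sbra{h(W, Z_i)} \leq LR \sqrt{2\minf{W; Z_i}}$, and summing over $i$ yields the first bound.

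For the CMI bound, using $\trainset = \supersample_U$ I would rewrite the gap as $\EGE = \frac{1}{n}\sum_{i=1}^n \EE\sbra{(1 - 2U_i)(h(W, \tilde{Z}_{1,i}) - h(W, \tilde{Z}_{0,i}))}$; the $\losscvx(0, \cdot)$ terms drop out because $U_i \indep \supersample$ and $\EE\sbra{1 - 2U_i} = 0$. Conditioning on $\tilde z_i \defas (\tilde Z_{0,i}, \tilde Z_{1,i})$, define $\bar h(w) \defas h(w, \tilde Z_{1,i}) - h(w, \tilde Z_{0,i})$; the centering bound gives $|\bar h(w)| \leq 2LR$. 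Applying the disintegrated Donsker--Varadhan inequality for $\minf{W; U_i \mid \tilde z_i}$ with $g(w, u) = \lambda(1 - 2u)\bar h(w)$ and using that $U_i$ is a Rademacher variable independent of $W$ under the product measure $P_{W \mid \tilde z_i} \otimes P_{U_i}$, the MGF simplifies to
\[
\EE\sbra{\cosh(\lambda \bar h(W)) \mid \tilde z_i} \leq e^{2\lambda^2 L^2 R^2}
\]
via $\cosh(x) \leq e^{x^2/2}$. Optimizing in $\lambda > 0$ produces the disintegrated bound $\EE\sbra{(1 - 2U_i)\bar h(W) \mid \tilde z_i} \leq 2LR\sqrt{2\minf{W; U_i \mid \tilde z_i}}$; taking expectation over $\tilde z_i$ and applying Jensen's inequality to pull the square root outside gives the second bound.

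The main technical point is the centering by $\losscvx(0, \cdot)$: without it the loss need not be bounded in $z$, so Hoeffding's lemma cannot control the relevant MGFs. Once the centering is in place (and, in the CMI case, the Rademacher symmetry $\EE\sbra{1 - 2U_i} = 0$ is used to kill the resulting $z$-dependent shift), the rest is a routine one-sample (or one-index) instantiation of the Donsker--Varadhan template already used to prove \cref{lemma:mi-and-cmi-bound-lipschitz}.
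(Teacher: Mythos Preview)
Your argument is correct. Both the per-sample Donsker--Varadhan step for the IOMI half and the per-index step for the CMI half go through as you describe: the centering $h(w,z)=\losscvx(w,z)-\losscvx(0,z)$ makes the relevant variables live in an interval of length $2LR$, Hoeffding's lemma (or, in the CMI case, the sharper $\cosh(x)\le e^{x^2/2}$ exploiting the Rademacher structure of $U_i$) controls the cumulant, and the optimization over $\lambda>0$ followed by Jensen's inequality recovers the stated constants.

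The paper, however, proves \cref{th:ege_iomi_complete_and_cmi_ind} by a different route. It invokes the Wasserstein-distance generalization bounds of \citet{rodriguez2021tighter} (their Theorems~1 and~3), which in the CLB setting give
\[
\EGE \leq \frac{L}{n}\sum_{i=1}^n \EE\!\left[\mathbb{W}\!\left(\cPr{Z_i}{W},\Pr[W]\right)\right]
\quad\text{and}\quad
\EGE \leq \frac{L}{n}\sum_{i=1}^n \EE\!\left[\mathbb{W}\!\left(\cPr{U_i,\tilde Z_{0,i},\tilde Z_{1,i}}{W},\cPr{\tilde Z_{0,i},\tilde Z_{1,i}}{W}\right)\right],
\]
then converts Wasserstein to total variation via the diameter bound $\mathbb{W}\le 2R\cdot\texttt{TV}$ \citep[Thm.~6.15]{villani2009optimal}, applies Pinsker, and finishes with Jensen. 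So the paper's proof is modular---it outsources the Lipschitz-to-transport step to an external reference and only manipulates distances---whereas your proof is self-contained, essentially a one-sample-at-a-time version of the Donsker--Varadhan/Hoeffding argument that the paper itself uses for \cref{lemma:ecmi-bound-lipschitz}. Your approach has the advantage of not needing the Wasserstein machinery or the cited black boxes; the paper's approach has the advantage of making transparent that the bounds really only use Lipschitzness (via transport cost) plus the coupling inequality, and would immediately generalize to other metrics on $\parspace$.
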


\begin{remark}
    \label{rem:individual_tighter}
    As mentioned above, the individual-sample alternatives to the IOMI and CMI are tighter than the IOMI and CMI. This may be seen 
    by~\citep[Prop. 2]{bu2020tightening} and~\citep[Lemma 3]{rodriguez2020random} or~\citep[Lemma 2]{zhou2020individually}, where we have that
    \begin{equation*}
        \frac{1}{n} \sum_{i=1}^n \sqrt{2 \minf{W;Z_i}} \leq \sqrt{\frac{2 \iomi}{n}} \ \ \textnormal{and} \ \ \frac{1}{n} \sum_{i=1}^n \sqrt{2 \minf{W;U_i|\tilde{Z}_{1,i},\tilde{Z}_{2,i}}} \leq \sqrt{\frac{2 \cmi}{n}}.
    \end{equation*}
    Therefore, \Cref{th:ege_iomi_complete_and_cmi_ind} implies~\cref{lemma:mi-and-cmi-bound-lipschitz}. Moreover, we have that $\minf{\Alg(\trainset);U_i|\tilde{Z}_{1,i},\tilde{Z}_{2,i}} \leq \minf{\Alg(\trainset),Z_i}$~\citep[App. D.2.3]{rodriguez2021tighter}.
\end{remark}

Another drawback of IOMI and CMI frameworks is the following:
\begin{itemize}
    \item[3] 
    Both the IOMI and CMI of an algorithm depend on the joint distribution of the algorithm's output and other variables. In contrast, generalization error depends on the algorithm's output only through the losses it incurs.
    Therefore, it is possible to increase both the IOMI and the CMI by \emph{embedding} information about the training set in the output of a learning algorithm without affecting the algorithm's statistical properties~\citep{roishay,bassily2018learners}.
\end{itemize}

\citet{steinke2020reasoning} propose an alternative framework, \emph{evaluated} CMI, that considers the information about the data captured by the \emph{incurred loss} rather than the output itself.

\begin{definition}[Evaluated CMI, {\citep[Sec. 6.2.2]{steinke2020reasoning}}]
\label{def:ecmi}
Let $n\in \Naturals$. Let the supersample $\supersample$ and indices $U$ be as defined in \cref{sec:problem-def}. 
Let $\trainset=(Z_{U_i,i})_{i \in [n]}$, and $\lossveccvx \in \Reals^{2\times n}$ be the array with entries $\lossveccvx_{v,i}=f(\Alg_n(\trainset),Z_{v,i})$ for $v\in \{0,1\}$, $i\in \range{n}$.  The \defn{evaluated conditional mutual information of $\Alg$ with respect to $\Dist$},
denoted by $\ecmi$,
is the conditional mutual information $\minf{\lossveccvx ;U \vert \supersample}$. 
\end{definition}

\citet{haghifam2021towards} show that the eCMI can provide a sharp characterization of generalization for the realizable setting and 0--1 losses.
Below, we state a bound for the CLB setting based on eCMI. The proof can be found in \cref{sec:proofs_ege_it}.

\begin{theorem}
\label[theorem]{lemma:ecmi-bound-lipschitz}
Consider an SCO problem $(\losscvx, \parspace, \dataspace) \in \clr$.
Then, for every learning algorithm $\Alg_n$ such that $\Alg_n(\trainset)\in \parspace$ \as, we have
$
    \EGE \leq L R\sqrt{\frac{8 \ecmi}{n}}.
$
\end{theorem}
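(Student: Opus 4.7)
The plan is to adapt the CMI-based argument of \cref{lemma:mi-and-cmi-bound-lipschitz} by replacing the algorithm's output $W = \Alg_n(\trainset)$ with the loss array $\lossveccvx$ used in the definition of $\ecmi$. The generalization error depends on $W$ only through the losses it incurs on the supersample, so it suffices to apply the Donsker--Varadhan variational formula with a test function that reads off $(\lossveccvx, \supersample, U)$ rather than $(W, \supersample, U)$. Since $\supersample \indep U$, one has $\ecmi = \minf{\lossveccvx; U \vert \supersample} = \minf{\lossveccvx, \supersample; U}$, so Donsker--Varadhan applies directly to the latter.

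First I would pick the test function
\begin{align*}
g(l, \tilde s, u) = \frac{\lambda}{n} \sum_{i=1}^{n} (2 u_i - 1) \bigl( l_{0,i} - \losscvx(0, \tilde s_{0,i}) - l_{1,i} + \losscvx(0, \tilde s_{1,i}) \bigr)
\end{align*}
for a free parameter $\lambda > 0$. Averaging over $U_i \dist \unif{\{0,1\}}$ independently, the $\supersample$-measurable centering terms $\losscvx(0, \tilde s_{v,i})$ vanish, while $(2 U_i - 1)(\lossveccvx_{0,i} - \lossveccvx_{1,i})$ is exactly the test-minus-training loss at position $i$; this yields $\EE[g(\lossveccvx, \supersample, U)] = \lambda \cdot \EGE$. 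For the log-MGF on the other side of Donsker--Varadhan, I would introduce an independent copy $(\lossveccvx', \supersample') \equaldist (\lossveccvx, \supersample)$ with $(\lossveccvx', \supersample') \indep U$ and factor the MGF over $i$ using the mutual independence of the $U_i$'s.

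The $L$-Lipschitz assumption combined with $\Alg_n(\trainset) \in \parspace$ implies $|l'_{v,i} - \losscvx(0, \tilde Z'_{v,i})| \leq L R$ almost surely, so each summand $a_i \defas l'_{0,i} - \losscvx(0, \tilde Z'_{0,i}) - l'_{1,i} + \losscvx(0, \tilde Z'_{1,i})$ satisfies $|a_i| \leq 2 L R$. Because $(2 U_i - 1) a_i$ is then a zero-mean random variable supported on $[-|a_i|, |a_i|]$, Hoeffding's lemma gives a per-coordinate MGF bound of $\exp(2 \lambda^2 L^2 R^2 / n^2)$, and hence $\log \EE[e^{g(\lossveccvx', \supersample', U)}] \leq 2 \lambda^2 L^2 R^2 / n$. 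Combining with Donsker--Varadhan yields $\ecmi \geq \lambda \cdot \EGE - 2 \lambda^2 L^2 R^2 / n$, and optimizing over $\lambda > 0$ produces the stated bound. I do not expect a substantive obstacle: this is a direct transcription of the CMI proof, with the one delicate point being the $\losscvx(0, \cdot)$ centering, which is precisely what replaces a potentially unbounded range of the loss by the product $L R$.
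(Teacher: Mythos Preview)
Your proposal is correct and follows essentially the same argument as the paper's proof: the same Donsker--Varadhan reduction via $\ecmi = \minf{\lossveccvx,\supersample;U}$, the same $\losscvx(0,\cdot)$-centered test function, the same $2LR$ bound from Lipschitzness, and Hoeffding's lemma followed by optimization over $\lambda$. The only cosmetic difference is that you explicitly condition on $(\lossveccvx',\supersample')$ and use the Rademacher structure of $(2U_i-1)a_i$ to get the per-coordinate MGF bound, whereas the paper states the zero-mean bounded-range property directly.
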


\begin{remark}
    \label{rem:evaluated_tighter}
    Similarly to~Remark~\ref{rem:individual_tighter}, note that the evaluated version of the CMI is tighter than the CMI itself, i.e., $\ecmi \leq \cmi$~\citep{steinke2020reasoning}.
\end{remark}

\begin{remark}
    Since these alternatives to the IOMI and CMI are tighter than the IOMI and CMI themselves (cf.%
    ~Remark~\ref{rem:individual_tighter} and~Remark~\ref{rem:evaluated_tighter}), the adaptation of these bounds to the CLB setting (\cref{th:ege_iomi_complete_and_cmi_ind,lemma:ecmi-bound-lipschitz}) are also tight in the sense of~\cref{thm:bounds-tight}.
\end{remark}

\subsubsection{Data-dependent alternatives and functional CMI}
\label{subsub:other_alternatives}

    \citet{negrea2019information} and~\citet{haghifam2020sharpened} introduced data-dependent alternatives to the IOMI and CMI frameworks that resulted in numerically non-vacuous generalization guarantees for stochastic gradient Langevin dynamics (SGLD) and its full-batch counterpart for modern deep-learning datasets and architecture. These bounds can also be adapted to the CLB setting by replicating~\Cref{th:ege_iomi_complete_and_cmi_ind} (i) considering \citep[Thm.~2]{rodriguez2021tighter} instead of \citep[Thm.~1]{rodriguez2021tighter} and noting that $\EE \big[\KL{\cPr{\trainset}{W}} {\cPr{\trainset \setminus Z_i}{W} } \big] = \minf{W;Z_i|\trainset \setminus Z_i}$; and (ii) considering \citep[Thm.~4]{rodriguez2021tighter} instead of \citep[Thm.~3]{rodriguez2021tighter} and noting that $\EE \big[\KL{\cPr{\supersample, U}{W}}{\cPr{\supersample, U \setminus U_i}{W}}\big] = \minf{W;U_i|\supersample, U \setminus U_i}$. This would yield the data-dependent EGE bounds
\begin{align}
    \EGE &\leq \frac{LR}{n} \sum_{i=1}^n \sqrt{2 \minf{\Alg(\trainset);Z_i|\trainset \setminus Z_i}} \quad \textnormal{and}
    \label{eq:ege_mi_subset} \\
    \EGE &\leq \frac{2LR}{n} \sum_{i=1}^n \sqrt{\minf{\Alg(\trainset),U_i|\supersample, U \setminus U_i}}.
    \label{eq:ege_cmi_subset}
\end{align}
Both~\cref{eq:ege_mi_subset} and~\cref{eq:ege_cmi_subset} are looser than the bounds in~\cref{th:ege_iomi_complete_and_cmi_ind}, by similar arguments to those in~Remark~\ref{rem:individual_tighter}~\citep[App. J]{rodriguez2020random}.

    \citet{harutyunyan2021information} introduced an alternative to the CMI for supervised learning problems that yield bounds that can be experimentally computed and are non-vacuous. %
    However, by the data processing inequality we have that this notion is looser than the evaluated CMI.

\subsection{Failure of the Alternatives}
\label{subsec:failure-alternatives}

We demonstrate that the individual sample and evaluated versions of CMI still fail in the CLB setting.
Based on the relative tightness of these alternative frameworks (see~%
Remark~\ref{rem:individual_tighter} and~Remark~\ref{rem:evaluated_tighter}), showing their failure implies failure of all the aforementioned alternatives to the IOMI and CMI frameworks.
In fact, it also proves the failure of (i) the data-dependent bounds from~\citep{negrea2019information} and~\citep{haghifam2020sharpened}, and (ii) functional-CMI of \citep{harutyunyan2021information}, adapted to the CLB setting (c.f.~\cref{subsub:other_alternatives}).

The following theorem states that the \emph{distribution-free learnability} of GD cannot be \emph{directly} proved using any of the alternatives to the IOMI and CMI framework described above.

\renewcommand{\Dist}{\mathcal{D}_{(n)}}
\begin{theorem}
\label[theorem]{th:ecmi-failining-example}
Let $n\in \Naturals$, $T_{(n)}=n^2$, $\eta_{(n)}=\frac{1}{n\sqrt{n}}$, and $d_{(n)}=2n^2$. Then, for every $n\geq 1$, there exists a sequence of SCO problems $\{(f_{(n)},\parspace_{(n)},\dataspace_{(n)})\in \mathcal{C}_{1,1}\}_{n\in \Naturals}$ where $\parspace_{(n)},\dataspace_{(n)} \in \Reals^{d_{(n)}}$, and data distribution $\Dist$ over $\dataspace_{(n)}$ such that the following holds: Let $\ww_T=\text{GD}(\trainset,\eta_{(n)},T_{(n)})$. Then, $ \ecmi \in \Omega(n)$, and $\sum_{i=1}^n \sqrt{2\minf{\Alg(\trainset);U_i|\tilde{Z}_{0,i},\tilde{Z}_{1,i}}}  \in \Omega(n)$,
while the  generalization error of GD satisfies
$
    \EE[|\Popriskcvx{\ww_T}-\Empriskcvx{\ww_T}|] \in \bigO{1/\sqrt{n}}.
$
\end{theorem}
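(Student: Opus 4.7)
The plan is to mirror the ``bad coordinate'' strategy of \cref{thm:gd-noisy} in a much smaller ambient dimension and, crucially, without any surrogate perturbation, since we now target the direct bounds. Let $d=2n^2$, $\parspace=\{w\in\Reals^d:\|w\|_2\le 1\}$, and $\dataspace=\{e_1,\dots,e_d\}$ with $e_k$ the $k$-th standard basis vector; take $\losscvx(w,z)=-\ip{w}{z}$ and $\Dist$ the uniform distribution on $\dataspace$. Since $\|\grad_w\losscvx(\cdot,z)\|_2=\|z\|_2=1$ and $\losscvx$ is linear in $w$, the problem belongs to $\mathcal{C}_{1,1}$. Writing $Z_i=e_{\iota(i)}$ with $\iota(1),\dots,\iota(n)$ i.i.d.\ uniform on $[d]$, the GD iterates are scalar multiples of the empirical mean $\empmean\defas\tfrac1n\sum_i Z_i$; in the distinct-indices event (probability $\ge 1-\binom{n}{2}/d\ge 3/4$) we have $\|\empmean\|_2=1/\sqrt n$ and $\|t\eta_{(n)}\empmean\|_2=t/n^2\le 1$ for all $t\le T_{(n)}=n^2$, so projection is inactive and $W_T=T_{(n)}\eta_{(n)}\empmean=\tfrac{1}{\sqrt n}\sum_i e_{\iota(i)}$; off this event, projection simply rescales $W_T$ to $\empmean/\|\empmean\|_2$. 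In either case, $W_T(k)>0$ if and only if $k\in\{\iota(i)\}_{i=1}^n$, and the upper bound $\EE[|\Popriskcvx{W_T}-\Empriskcvx{W_T}|]\in\bigO{1/\sqrt n}$ is immediate from \cref{eq:gen-error-gd} with $T=n^2$ and $\eta=n^{-3/2}$.

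For the eCMI, $\lossveccvx_{v,i}=-W_T(\tilde Z_{v,i})$ is non-zero if and only if $\tilde Z_{v,i}\in\trainset$; and because $\tilde Z_{U_i,i}=Z_i\in\trainset$ by construction, the index $U_i$ is recoverable from $(\lossveccvx,\supersample)$ unless the fresh sample $\tilde Z_{1-U_i,i}$ coincides with some $Z_j$, an event of probability at most $n/d=1/(2n)$ per index. Taking $\hat U_i$ to be this natural estimator, Fano's inequality yields $H(U_i\mid \lossveccvx,\supersample)\le \binaryentr{1/(2n)}\in\bigO{\log n/n}$; subadditivity of entropy then gives $H(U\mid \lossveccvx,\supersample)\le \sum_i H(U_i\mid \lossveccvx,\supersample)\in\bigO{\log n}$. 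Since $H(U\mid \supersample)=H(U)=n$ by independence of $U$ and $\supersample$, we conclude $\ecmi\ge n-\bigO{\log n}\in\Omega(n)$.

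For the individual-sample CMI, fix $i$ and condition on $\tilde Z_{0,i}=e_a$, $\tilde Z_{1,i}=e_b$ with $a\ne b$ (which holds with probability $1-1/d$). The estimator $\hat U_i\defas\indic{W_T(e_b)>W_T(e_a)}$ errs only when some other training sample $Z_j$ ($j\ne i$) equals $\tilde Z_{1-U_i,i}$, an event of probability at most $(n-1)/d\in\bigO{1/n}$. Fano's inequality then gives $H(U_i\mid W_T,\tilde Z_{0,i},\tilde Z_{1,i})\le \binaryentr{\bigO{1/n}}=o(1)$, so $\minf{W_T;U_i\mid\tilde Z_{0,i},\tilde Z_{1,i}}\ge 1-o(1)$ for every $i$, and consequently $\sum_{i=1}^n\sqrt{2\,\minf{W_T;U_i\mid\tilde Z_{0,i},\tilde Z_{1,i}}}\ge(1-o(1))\sqrt 2\,n\in\Omega(n)$. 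The principal technical step requiring care is the union-bound accounting of these birthday-type coincidences; because $d=2n^2$ they each have probability $\bigO{1/n}$ per index, which is exactly what lets the coordinate-identification argument survive both as an estimator for the full $U$ and, per-index, for each $U_i$, while leaving the GD generalization error at its minimax rate.
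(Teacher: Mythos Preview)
Your construction and analysis are correct, and the construction is exactly the one the paper uses: uniform distribution over the coordinate vectors $\{\coorvec{1},\dots,\coorvec{d}\}$ with $d=2n^2$, linear loss $\losscvx(w,z)=-\ip{w}{z}$, and unit-ball domain. The difference lies only in how you extract the lower bounds. The paper argues via a single global event $E=\{\text{all $2n$ supersample points are distinct}\}$, shows $\Pr(E=1)\ge c$ for a constant $c$ (e.g.\ $c=0.1$), observes that under $E$ the indices $U$ (resp.\ each $U_i$) are perfectly determined by $(\lossveccvx,\supersample)$ (resp.\ by $(W_T,\tilde Z_{0,i},\tilde Z_{1,i})$), and concludes $\ecmigd\ge cn\log 2$ and $\minf{W_T;U_i\mid \tilde Z_{0,i},\tilde Z_{1,i}}\ge c\log 2$. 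You instead run a per-index Fano argument with error probability $O(1/n)$ and use subadditivity, which yields the sharper $\ecmigd\ge n-O(\log n)$ and $\minf{W_T;U_i\mid \tilde Z_{0,i},\tilde Z_{1,i}}\ge \log 2-o(1)$. Both routes are valid; yours gives better constants at the cost of a slightly longer accounting, while the paper's is more self-contained.

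Two small points. First, your opening line that you ``mirror the bad-coordinate strategy of \cref{thm:gd-noisy}'' is misleading: the construction here is entirely different (simple linear loss, birthday-type coincidences) and shares nothing with the bad-coordinate machinery of that theorem beyond the high-level theme of identifying training data from $W_T$. Second, \cref{eq:gen-error-gd} bounds the signed expectation $\EE[\Popriskcvx{W_T}-\Empriskcvx{W_T}]$, not the absolute value required by the statement. For this linear loss the absolute-value bound is immediate, since $|\Popriskcvx{W_T}-\Empriskcvx{W_T}|=|\ip{W_T}{\hat\mu-\EE Z}|\le\|\hat\mu-\EE Z\|_2$ and $\EE\|\hat\mu-\EE Z\|_2\le 1/\sqrt n$; you should say this rather than cite \cref{eq:gen-error-gd}.
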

\renewcommand{\Dist}{\mathcal{D}}

\begin{proof}%
Here, we provide an overview of the proof. 
The formal proof can be found in~\cref{pf:ecmi-fails}. Let $\dimcvx \in \Naturals$ and $\dataspace =  \lbrace \coorvec{i} : i \in \range{\dimcvx} \rbrace$, %
where 
$
\coorvec{i} = (0, \ldots, 0, 1,0, \ldots, 0)
$ with a $1$ at the $i$-th coordinate and $\lVert \coorvec{i} \rVert_2 = 1$.
Let the data distribution on the input be the uniform distribution, that is $\Dist = \textnormal{Uniform}(\dataspace)$. 
Consider a problem in the CLB class with a convex, $1$-Lipschitz loss function $\losscvx(w,z) = - \langle w, z \rangle$, and $\parspace = \lbrace w\in \Reals^d : \norm{w}_2 \leq 1 \rbrace$. 
With this loss, the weights $W_T$ returned by GD after $T$ iterations are a weighted sum of the instances $Z_i$. 
As in the \emph{birthday paradox}~\citep[Sec.~5]{mitzenmacher2017probability} problem, we can show that for large 
$\dimcvx$, e.g. $\dimcvx = 2n^2$, 
the probability that any two instances from the supersample $\supersample$ share the same non-zero coordinate is smaller than some constant probability $c$, which is independent of the number of samples. 
Let $E$ be an $\supersample$-measurable random variable that is one if and only if no pair of instances $\tilde{Z}_{u,i}$ and $\tilde{Z}_{v,j}$ (for all $i,j \in \range{n}$ and all $u,v \in \{0,1\})$ from the supersample $\supersample$ share the same coordinate.

\begin{itemize}[leftmargin=*]
    \item \textbf{Lower bound on $\minf{\Alg(\trainset);U_i|\tilde{Z}_{0,i},\tilde{Z}_{1,i}}$}: When $E=1$ one can completely identify which instance (the index $U_i$) was used for training by looking at the non-zero coordinates of $W_T$: if $\tilde{Z}_{0,i} = \coorvec{k}$ and $W_T(k) \neq 0$, then $U_i = 0$, and otherwise $U_i = 1$. Therefore, under ${E}=1$, we have that $\minf{\Alg(\trainset);U_i\vert\tilde{Z}_{0,i},\tilde{Z}_{1,i}} = \entr{U_i} = \textcolor{blue}{\log 2}$. 
    
    \item \textbf{Lower bound on $\ecmi$}: Similarly, when ${E}=1$, one can completely identify which instances (the indices $U$) were used for training by looking at the non-zero entries of the loss vector $\lossveccvx$: if $\lossveccvx_{0,i} \neq 0$, then $U_i = 0$, and otherwise $U_i = 1$. Therefore, under ${E}=1$, we have that $\ecmi = \entr{U} = n $. %
\end{itemize}
Finally, noting that this event has a constant probability, i.e. $\Pr(E = 1) \geq c$, completes the proof.

\end{proof}

\section{Open Questions}
In this work, we uncover the limitations of information-theoretic analyses of GD for the CLB subclass of SCO problems. We further show that these limitations remain even when a surrogate algorithm based on Gaussian perturbation is considered.
Our results prompt several directions for future research:

\begin{enumerate}
    \item \textbf{Optimal dependence of the information-theoretic bounds on the dimension}: One of the common properties between our constructions in \cref{thm:gd-noisy} and \cref{th:ecmi-failining-example} is that the dimension is much larger than the number of samples. In particular, we exploit the fact that the generalization guarantees of GD for SCO problems is dimension-independent in order to construct problem instances with large information complexity. In particular, it is straightforward to see that the lower bounds on IOMI and CMI that stem from our constructions in \cref{thm:gd-noisy} and \cref{th:ecmi-failining-example} depend on the dimension.   
    It is interesting to find the minimum dimension such that there exists an SCO problem for which the information-theoretic bounds fail to characterize learnability. For the direct analysis of GD we show  $\bigO{n^2}$  is sufficient (\cref{th:ecmi-failining-example}), while for the surrogate analysis exponential dependence, i.e.,  $\bigO{n^2 2^n}$ (\cref{thm:gd-noisy}), is sufficient where $n$ is the number of training samples. 
    
    \item \textbf{Gaussian perturbation for other subclasses of SCO problems}:
    In this work, we proved limitations of the surrogate algorithm based on the Gaussian perturbation for the CLB subclass of SCO problems. In particular, the loss function used in \cref{thm:gd-noisy} is a \emph{non-smooth} convex function. It is an open question to show that such limitations exist for the subclasses of SCO problems with smooth or strongly-convex loss functions. Notice that our results in~\cref{th:ecmi-failining-example} suggests that a \emph{direct} analysis still fails for the subclass of SCO problems with smooth loss functions as the loss function used for proving~\cref{th:ecmi-failining-example} is smooth. 
    \item \textbf{Instance-independent surrogates}: The notable property of Gaussian perturbation is that it is instance-independent, in the sense that its structure does not depend on the problem instance, and we only need to tune the variance based on the problem instance.  It is an open
    question to prove or refute the existence of a \emph{instance-independent surrogate} for analyzing the generalization of gradient descent methods for SCO problems using information-theoretic frameworks. An interesting starting point is investigating the prospect of using the Gibbs algorithm \citep{wang2016average,aminian2021exact} as a problem-independent surrogate.
    \item \textbf{Instance-dependent surrogates}:  We can also study the prospect of instance-dependent surrogates where the surrogate algorithm can depend on the problem instance. For this family of surrogates, the surrogate algorithm is chosen based on the data distribution, loss function, and the original learning algorithm. 
\end{enumerate}

\section*{Acknowledgments}
The authors would like to thank Gergely Neu and Thomas Steinke for helpful discussions. 

\section*{Disclosure of funding}
Mahdi Haghifam is supported in part by the Vector Institute and Doctoral Completion Award from University of Toronto.
Borja Rodríguez-Gálvez and Mikael Skoglund were funded in part by the Swedish research council under contract 2019-03606. Ragnar Thobaben was funded in part by the Swedish research council under contract 2021-05266. Part of the work was done while Mahdi Haghifam was an intern at Google Research-Brain Team and Borja Rodríguez-Gálvez was an intern at Apple.  
Daniel Roy is supported in part by an NSERC Discovery Grant and Accelerator.
This research was carried out, in part, while Daniel Roy and Gintare Karolina Dziugaite were visiting the Simons Institute for the Theory of Computing at UC Berkeley, and while Daniel Roy was visiting the Institute for Mathematical Research at ETH Z\"urich.

\printbibliography

\newpage

\appendix

\section{Proof of the information-theoretic bounds of {$\EGE$} in the CLB setting}
\label{sec:proofs_ege_it}

Before starting the proofs, note that the proof of %
Theorem~\ref{th:ege_iomi_complete_and_cmi_ind} implies~\cref{lemma:mi-and-cmi-bound-lipschitz} %
(c.f.~Remark~\ref{rem:individual_tighter} and~Remark~\ref{rem:evaluated_tighter}).

\subsection{Proof of~\cref{th:ege_iomi_complete_and_cmi_ind}: Individual-sample IOMI}
Consider~\citep[Thm.~1]{rodriguez2021tighter}, which controls $\EGE$ by means of the Wasserstein distance
\begin{equation*}
    \EGE \leq \frac{L}{n} \sum_{i=1}^n \EE \Big[ \mathbb{W}(\cPr{Z_i}{W},\Pr(W)) \Big].
\end{equation*}

Then, consider the fact that the Wasserstein distance is dominated by the total variation, that is, that $\mathbb{W}(\mu,\nu) \leq 2R \texttt{TV}(\mu,\nu)$ when the space where the distributions $\mu$ and $\nu$ are defined has diameter $R$ with respect to the specified metric~\citep[Thm.~6.15]{villani2009optimal}\footnote{In the particular case of this work, the metric considered for the Lipschitness of the function and the diameter of the space is the $\ell_2$ norm difference, but these theorems are not restricted to that.}. Applying Pinsker's~\citep[Thm.~6.5] {polyanskiy2014lecture} inequality to the total variation and Jensen's inequality afterwards, one recovers the desired bound in~\cref{th:ege_iomi_complete_and_cmi_ind}.

\subsection{Proof of~\cref{th:ege_iomi_complete_and_cmi_ind}: Individual-sample CMI}
Consider now~\citep[Thm.~3]{rodriguez2021tighter}, which again controls $\EGE$ by means of the Wasserstein distance
\begin{equation*}
    \EGE \leq \frac{L}{n} \sum_{i=1}^n \EE \Big[ \mathbb{W}(\cPr{U_i, \tilde{Z}_{0,i}, \tilde{Z}_{1,i}}{W},\cPr{\tilde{Z}_{0,i}, \tilde{Z}_{1,i}}{W}) \Big].
\end{equation*}

As in the proof above, considering the domination of the Wasserstein distance by the total variation together with Pinsker's and Jensen's inequality recovers the desired bound in~\cref{th:ege_iomi_complete_and_cmi_ind}.

\subsection{Proof of Theorem~\ref{lemma:ecmi-bound-lipschitz}}
By the Donsker-Varadhan lemma \citep[Prop.~4.15]{boucheron2013concentration} we have that
\begin{equation*}
     \minf{\lossveccvx,\supersample;U} \geq \EE[g(\lossveccvx,\supersample, U)] - \log \EE \bigg[ e^{g(\lossveccvx',\supersample',U)}\bigg]
\end{equation*}
for all measurable functions $g$ such that $g(\lossveccvx,\supersample,U)$ and $e^{g(\lossveccvx',\supersample',U)}$ have finite expectations~ \citep[Prop.~4.15]{boucheron2013concentration}, where $(\lossveccvx',\supersample')$ is an independent copy of $(\lossveccvx,\supersample)$ and where $\minf{\lossveccvx,\supersample;U} = \minf{\lossveccvx,U|\supersample} = \ecmi$. For the rest of the proof, let $\lossveccvxlower \in \Reals^{2\times n}$ be a realization of $\lossveccvx$. Consider now
\begin{equation*}
    g(\lossveccvxlower, \tilde{s}, u) = \frac{\lambda}{n} \sum_{i=1}^n (2u_i - 1) \Big( \lossveccvxlower_{0,i} - \losscvx(0,\tilde{z}_{0,i}) - \big(\lossveccvxlower_{1,i} - \losscvx(0,\tilde{z}_{1,i}) \big) \Big)
\end{equation*}
for some $\lambda > 0$, and note that $\EE[g(\lossveccvx,\supersample,U)] = \lambda \EGE$. Applying Donsker-Varadhan lemma~\citep[Prop.~4.15]{boucheron2013concentration} with this choice of $g$ yields
\begin{equation*}
    \ecmi \geq \lambda \EGE - \log \EE \bigg[ e^{ \frac{\lambda}{n} \sum_{i=1}^n (2U_i - 1) \Big( \lossveccvx_{0,i}' - \losscvx(0,\tilde{Z}_{0,i}') - \big(\lossveccvx_{1,i}' - \losscvx(0,\tilde{Z}_{1,i}') \big) \Big) } \bigg].
\end{equation*}

Studying random variables $(2U_i - 1) \Big( \lossveccvx_{0,i}' - \losscvx(0,\tilde{Z}_{0,i}') - \big(\lossveccvx_{1,i}' - \losscvx(0,\tilde{Z}_{1,i}') \big) \Big)$ reveals that they are $0$ mean and bounded in $[-2L R, 2 L R]$. We can thus apply Hoeffding's lemma~\citep[][Example~2.4]{wainwright2019high} to bound the cumulant generating function. Optimizing for $\lambda > 0$ and rearranging completes the proof.

\section{Proof of \cref{thm:bounds-tight}}
\label{pf:bounds-tight}
Let $d \in \Naturals$ be arbitrary. Let $\parspace$ be a ball of radius $R$ in $\Reals^d$. Consider an arbitrary $z_0 \in \parspace$ such that $\norm{z_0}=R$. The input space is $\dataspace=\{z_0/R,-z_0/R\}$. Also, let the data distribution $\Dist$ be $\Dist(z_0/R)=\Dist(-z_0/R)=1/2$, the loss function  $\losscvx:\parspace\times \dataspace \to \Reals$ be $\losscvx(w,z)=-L\inner{w}{z}$. It is straightforward to see that the loss function is convex and $L$-Lipschitz \footnote{The construction for this section is inspired by the lower bounds for online convex optimization in \citep[][Sec.5.1]{orabona2019modern}.}. 

Denote the training set $\trainset=(Z_1,\dots,Z_n)\dist \Dist^{\otimes n}$. Define a Rademacher random variable $\epsilon_i = 1$ if $Z_i=z_0/R$ and $\epsilon_i = -1$ if $Z_i=-z_0/R$. We can represent the training set as $\trainset=(\frac{z_0}{R}\epsilon_1,\dots,\frac{z_0}{R}\epsilon_n)$. The empirical risk for $w \in \parspace$ is given by $\displaystyle \Empriskcvx{w}=\frac{-L}{nR} \big \langle w,z_0\sum\nolimits_{i \in \range{n}}\epsilon_i \big \rangle$. It is straightforward to see that the ERM for this problem is 
\[
\nonumber
\argmin_{w\in \parspace}\Empriskcvx{w}=\Alg_n(\trainset)=\begin{cases}
  z_0 & \text{if}~\text{sign}(\sum_{i=1}^{n}\epsilon_i)=1\\
  -z_0 & \text{if}~\text{sign}(\sum_{i=1}^{n}\epsilon_i)=-1
\end{cases},
\]
where for $x \in \Reals$, $\text{sign}(x)=1$ if $x\geq0$ and $\text{sign}(x)=-1$ if $x<0$.

First, we provide a lower bound on the expected generalization error. The expected empirical risk of $\Alg_n$ is given by
\begin{align*}
\EE \big[ \min_{w\in \parspace} \Empriskcvx{w} \big] &=\EE \bigg[ \min_{w\in \parspace} -\frac{L}{Rn} \Big\langle  w,z_0\sum_{i=1}^{n} \epsilon_i \Big\rangle \bigg] \\
    &= - \frac{L}{Rn}\EE \bigg[\max_{w\in \{z_0,-z_0\}}  \Big\langle  w,z_0\sum_{i=1}^{n} \epsilon_i \Big\rangle \bigg] \\
    &=- \frac{L}{Rn}\EE \bigg[ \Big| \Big\langle  z_0,z_0\sum_{i=1}^{n} \epsilon_i \Big\rangle \Big| \bigg]\\
    &=- \frac{LR}{n}\EE \bigg[ \Big|\sum_{i=1}^{n} \epsilon_i \Big|\bigg],
\end{align*}
where we have used $\forall a,b\in \Reals$, $\max(a,b)=\frac{a+b}{2}+\frac{|a-b|}{2}$. Observe that $\Popriskcvx{w}=0$ for all $w \in \parspace$. Therefore, the expected generalization error is lower bounded by 
\[
\nonumber
\EGE &= -\EE\sbra{\min_{w\in \parspace} \Empriskcvx{w}} \nonumber\\
    &=    \frac{LR}{n}\EE \bigg[ \Big|\sum_{i=1}^{n} \epsilon_i \Big|\bigg] \nonumber \\
    &\geq \frac{LR}{\sqrt{2n}} \nonumber,
\]
where the last line follows from Khintchine--Kahane inequality \citep[][Thm.~D.9]{mohri2018foundations}. 

Next, we analyze the upper bounds based on 
\cref{lemma:mi-and-cmi-bound-lipschitz}. Observe that the following Markov chain holds: 
$$
\trainset \-- \text{sign}\Big(\sum_{i=1}^{n}\epsilon_i \Big) \-- \Alg_n(\trainset).
$$
By the data processing inequality we have 
\[
\nonumber
\iomi = \minf{\Alg_n(\trainset);\trainset}\leq \minf{\Alg_n(\trainset);\text{sign}\Big(\sum_{i=1}^{n}\epsilon_i\Big)}.
\]
We can upper bound the mutual information as
\[
\nonumber
\minf{\Alg_n(\trainset);\text{sign}\Big(\sum_{i=1}^{n}\epsilon_i\Big)} \leq \entr{\text{sign}\Big(\sum_{i=1}^{n}\epsilon_i\Big)}\leq 1,
\]
since $\text{sign}\Big(\sum_{i=1}^{n}\epsilon_i\Big)$ can take only two values. Therefore, we obtain $\iomi \leq 1$. As $\cmi \leq \iomi$ for any learning problem  \citep[Thm.~2.1]{haghifam2020sharpened}, we have $\cmi \leq 1$. Finally, the result follows by plugging the bounds on IOMI and CMI into %
\cref{lemma:mi-and-cmi-bound-lipschitz}.
\section{Proof of \cref{thm:gd-noisy}} 
\label{pf:main-iomi}
The outline of the proof is as follows. First, in \cref{subsec:construct}, we describe our construction. Then, we analyze the dynamics of GD on the problem in \cref{subsec:dynam-gd}. Using the properties of the final iterate of GD, proved in \cref{subsec:dynam-gd}, we proceed by showing in  \cref{subsec:large-var} that if the noise variance is greater than a threshold, then the residual term does not converge to zero as the number of samples grows. For the case that the noise variance is smaller than the threshold, we prove the failure of IOMI and CMI in \cref{subsec:small-var} and \cref{subsec:small-var-cmi}, respectively. 
\subsection{Construction}
\label{subsec:construct}
We begin the proof by describing a learning scenario that witnesses the lower bound (we drop the $n$ argument from the parameters to reduce notational clutter). Let $d \in \Naturals$ and $\dataspace=\{0,1\}^d$. Let the data distribution on input be $(\bernoulli(1/2))^{\otimes d}$, i.e., each coordinate is drawn independently and uniformly at random from $\bernoulli(1/2)$. In this section, we treat the training set $\trainset \in \{0,1\}^{n\times d}$ as a matrix. Note that each element of $\trainset$ is drawn \iid~from  $\bernoulli(1/2)$. For $i\in \range{d}$, we say the $i-$th coordinate is a \emph{bad coordinate} iff for all $j \in \range{n}$, $Z_{j}(i)=0$. In words, if $i-$th coordinate is a bad coordinate then all the entries in the $i-$th column of $\trainset$ is zero. 
Also, the convex \parameterspace space $\parspace$ is the Euclidean ball of radius one in $\Reals^d$. Note for $x\in \Reals^d$, $\proj(x)=x/\max\{\norm{x},1\}$.

We consider the convex function proposed in \citet{amir2021sgd}. Let $0<\lambda\leq \bigO{1/(n\sqrt{d})}$ be a positive constant which is determined later. Then we consider the following loss function $\losscvx: \parspace \times \dataspace \to \Reals$
\[
\label{eq:loss-function-main}
\losscvx(w,z)= \sum_{i=1}^{d} z(i) w(i)^2 + \lambda  \inner{w}{z} + \max \Big\{\max_{i\in \range{d}}\{w(i)\},0\Big\}.
\]
It is straightforward to show that the first two terms in \cref{eq:loss-function-main} is convex. Also, $\max\{\max_{i\in \range{d}}\{w(i)\},0\}$ is a convex function because it is maximum of convex (linear) functions  \citep[][Sec.3.2.3]{boyd2004convex}. Therefore, $\losscvx$ is convex  as it is sum of convex functions. Then, we show that each term in \cref{eq:loss-function-main} is Lipschitz. The first term, $\sum_{i=1}^{d} z(i) w(i)^2 \leq \norm{w}^2$ is $2-$Lipschitz by the boundedness of $\parspace$. The second term is $\lambda\sqrt{d}-$Lipschitz because $\norm{\grad(\lambda  \inner{w}{z})}=\lambda \norm{ z}\leq \lambda \sqrt{d}$. We use \cref{lem:lipschitz-max} to show that the last term in \cref{eq:loss-function-main} is $1$-Lipschitz. Therefore, $\losscvx(w,z)$ is $(3+\lambda \sqrt{d})-$Lipschitz. Note that $\lambda \in \bigO{1/(n\sqrt{d})}$, so the function in \cref{eq:loss-function-main} is $4-$Lipschitz for sufficiently large $n$.  

\subsection{Dynamics of GD}
\label{subsec:dynam-gd}
First of all, we want to note that the statements in this proof about random variables hold almost surely. We will skip such declarations for the remainder of the proof to aid readability. In this part, we aim to find the properties of the final iterates of the GD algorithm.  
Let $d= 0.75 T2^{n}$.
Let $\badset \in \{0,1\}^d$ denote a vector such that $\badset(i)=1$ if and only if $i$ is a bad coordinate. Let $\norm{\badset}_0$ denote the number of bad coordinates. Next, we provide a probabilistic estimate on $\norm{\badset}_0$. $\norm{\badset}_0=\sum_{i=1}^{d}\badset(i)$ follows the binomial distribution with the number of trial $d$ and the success probability of $2^{-n}$. The reason is the probability that all the points in a column is zero is given by $2^{-n}$. By the standard multiplicative Chernoff bound \citep[][Cor.4.6]{mitzenmacher2017probability} we have
\[
\label{eq:bad-coordinate-num}
\Pr({T}/{2}\leq \norm{\badset}_0 \leq T)\geq  1-2\exp(-{T}/{36}).
\]
Therefore, with probability at least $1-2\exp(-{T}/{36})$, the number of bad coordinates is between $T/2$ and $T$. 

Next step concerns understanding the dynamics of GD. The empirical risk for any $w \in \parspace$ is given by 
\[
\label{eq:emp-risk}
\Empriskcvx{w} = \sum_{i=1}^{d} \empmean(i) w(i)^2 + \lambda\inner{\empmean}{w} + \max \Big\{\max_{i\in \range{d}}\{w(i)\},0 \Big\}
\]
where for $i\in \range{d}$, $\empmean(i)=\frac{1}{n}\sum_{j=1}^{n}z_j(i) \in [0,1]$ is the empirical mean of the points in $i-$th column of $\trainset$.

\begin{lemma}
\label[lemma]{lem:dynamics-gd}
Under the event $\{T/2 \leq \norm{\badset}_0 \leq T\}$, let  $\mathcal{B} = \{i_1,\dots,i_{\norm{\badset}_0}\}\subseteq \range{d}$ contain the \textit{ordered} set of bad coordinates. Consider the GD process  $\ww_{t+1}=\ww_{t}-\proj(\ww_t - \eta \subgrad{(\Empriskcvx{\ww_t})})$ starting at $\ww_0=0$ where $\eta$ is the step size. For every $i \in \range{d}$ and $t \in \range{T}$ 
\begin{equation*}
    \ww_t(i) = \begin{cases}
        \frac{\lambda}{2} (-1+(1-2\eta \empmean(i))^t)  &  i \in  \range{d}\setminus \mathcal{B}\\
        -\eta                                         &  i \in  \{i_1,\dots,i_{\min\{ \norm{\badset}_0,t-1\}}\}\\
        0                                             &  i \in  \{i_{\min\{ \norm{\badset}_0,t-1\}+1},\dots,i_{\norm{\badset}_0}\}
    \end{cases}.
\end{equation*}
In particular, 
\begin{equation*}
    \ww_T(i) = \begin{cases}
        \frac{\lambda}{2} (-1+(1-2\eta \empmean(i))^T)  &  i \in  \range{d}\setminus \mathcal{B}\\
        -\eta                                         &  i \in   \mathcal{B}
    \end{cases},
\end{equation*}
and for all $i \in  \range{d}\setminus \mathcal{B}$,  $-\eta \lambda T \leq \ww_T(i) < 0$.
\end{lemma}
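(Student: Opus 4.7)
The plan is to prove \cref{lem:dynamics-gd} by induction on $t$, combined with an explicit specification of the first-order oracle for the non-smooth piece of $\Empriskcvx{\cdot}$, following the choice from \citep{amir2021sgd,bassily2020stability}. Decompose $\Empriskcvx{w} = g(w) + \phi(w)$ where $g(w) = \sum_{i} \empmean(i) w(i)^2 + \lambda \inner{\empmean}{w}$ is smooth with $\nabla g(w)_i = 2\empmean(i) w(i) + \lambda \empmean(i)$, and $\phi(w) = \max\{\max_{i\in [d]} w(i), 0\}$. The key structural observation is that $\nabla g(w)_i = 0$ for every $i \in \mathcal{B}$, since $\empmean(i) = 0$ on bad coordinates. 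Thus the dynamics along bad coordinates will be driven entirely by the subgradient selected for $\phi$, while the good-coordinate dynamics are driven by the smooth part once the contribution from $\phi$ becomes flat.

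I will specify the oracle to process exactly one bad coordinate per iteration. Concretely, on step $w_{t-1} \to w_t$, pick $g_{t-1} \in \partial \phi(w_{t-1})$ to equal the standard basis vector corresponding to the next unprocessed bad coordinate $i_k$ whenever one remains, and set $g_{t-1} = 0$ once all bad coordinates have been processed. Validity of each choice is verified as part of the induction: under the inductive hypothesis on $w_{t-1}$, all previously processed bad coordinates sit at $-\eta$, all good coordinates sit in $(-\lambda/2, 0)$, and all unprocessed bad coordinates sit at $0$, so $\max_{i\in [d]} w_{t-1}(i) = 0$ and the chosen basis vector indeed belongs to $\partial \phi(w_{t-1})$. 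Once all bad coordinates have been processed, the coordinate-wise max is strictly negative provided $\lambda < 2\eta$ (which holds for our choice $\lambda \in \bigO{1/(n\sqrt{d})}$ since $d \gg n$), forcing $\partial \phi(w_{t-1}) = \{0\}$. Along the way I also verify that the projection $\proj$ is inactive via the crude bound $\norm{w_t}^2 \leq \norm{\badset}_0 \eta^2 + d(\lambda/2)^2 \leq T\eta^2 + d\lambda^2/4 < 1$ for large $n$, using $T\eta^2 = 2/(5n)$ and $d\lambda^2 \in \bigO{1/n^2}$.

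Given this oracle, the good-coordinate update reduces to the scalar affine recurrence $w_t(i) = (1 - 2\eta \empmean(i)) w_{t-1}(i) - \eta \lambda \empmean(i)$ with $w_0(i) = 0$, whose closed-form solution by geometric summation is exactly $\tfrac{\lambda}{2}(-1 + (1-2\eta\empmean(i))^t)$; the bad-coordinate updates follow immediately by direct substitution into the oracle. Specialising at $t = T$, strict negativity $w_T(i) < 0$ for $i \notin \mathcal{B}$ follows from $\empmean(i) \geq 1/n > 0$, which forces $(1-2\eta\empmean(i))^T \in (0,1)$; the lower bound $w_T(i) \geq -\eta \lambda T$ follows from Bernoulli's inequality $(1 - 2\eta\empmean(i))^T \geq 1 - 2T\eta\empmean(i)$ (valid since $2\eta\empmean(i) \leq 1$) together with $\empmean(i) \leq 1$. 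The principal obstacle will be the consistent specification and validation of the subgradient selector at tied iterates: one must verify that each basis vector selected genuinely lies in $\partial \phi(w_{t-1})$ under the inductive hypothesis, and that the scale separation $\lambda \ll \eta$ forces the selector to collapse to $0$ precisely once all bad coordinates have been processed, which is what makes the bad-coordinate freeze at $-\eta$ compatible with the smooth evolution of the good coordinates.
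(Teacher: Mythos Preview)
Your approach is essentially the same as the paper's: specify a concrete first-order oracle for the non-smooth piece $\phi(w)=\max\{\max_i w(i),0\}$, prove the coordinate formulae by induction, and check that the projection never activates. The paper's oracle, taken from \citep{amir2021sgd,bassily2020stability}, is: return $0$ when $w=0$ or when $\max_i w(i)<0$, and otherwise return $\coorvec{\min\{\mathcal I(w)\}}$ where $\mathcal I(w)=\{j:\,w(j)=\max_i w(i)\ge 0\}$.

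There is one genuine off-by-one in your oracle. The lemma's formula has exactly $\min\{\norm{\badset}_0,\,t-1\}$ bad coordinates equal to $-\eta$ at time $t$; in particular all bad coordinates of $\ww_1$ are $0$, which means \emph{no} bad coordinate is processed on the step $0\to 1$. The paper achieves this by taking $\subgrad\phi(0)=0$, so $g_0=\lambda\empmean$ and $\ww_1=-\eta\lambda\empmean$. Your oracle instead picks $\coorvec{i_1}$ at $\ww_0=0$; that is a legitimate subgradient of $\phi$ at the origin, but it yields $\ww_1(i_1)=-\eta$, shifting the entire bad-coordinate schedule forward by one step and producing $\min\{\norm{\badset}_0,\,t\}$ rather than $\min\{\norm{\badset}_0,\,t-1\}$. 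Your own verification clause ``good coordinates sit in $(-\lambda/2,0)$'' already fails at $\ww_0$, which is the symptom. The fix is trivial: set the $\phi$-subgradient to $0$ at $\ww_0=0$ and start picking $\coorvec{i_k}$ only from step $1\to 2$, exactly as the paper does.

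A minor remark: the side condition $\lambda<2\eta$ you invoke to force $\subgrad\phi=\{0\}$ after all bad coordinates are processed is unnecessary. Once every bad coordinate sits at $-\eta$ and every good coordinate sits in $(-\lambda/2,0)$, the coordinate-wise maximum is strictly negative regardless of how $\lambda$ compares to $\eta$, so $\subgrad\phi(\ww)=\{0\}$ automatically.
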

\begin{proof}
    First, we describe the first-order oracle proposed in \citet{amir2021sgd} and \citet{bassily2020stability}. Note that the first two terms in \cref{eq:loss-function-main} are differentiable. For the third term, i.e., $f_3(w)=\max\{\max_{i\in \range{d}}\{w(i)\},0\}$, which is not differentiable we consider the following first-order oracle. Let $\mathcal{I}(w) = \{j \in \range{d}\vert j\in \{\arg\max_{i \in \range{d}}{w(i)}\}\cap \{i \vert w(i)\geq 0\}\}$. Then, we claim that 
\[
\label{eq:subg-third}
\subgrad{f_3}(w) =
\begin{cases}
  0  & w=0 ~\text{or} ~\mathcal{I}=\emptyset \\
  \coorvec{\min\{\mathcal{I}(w)\}} &  w\neq 0 ~\text{and}~ \mathcal{I} \neq \emptyset.
\end{cases}
\]
where for $i \in \range{d}$, $\coorvec{i}=(\underbrace{0,\dots,0}_{\text{$i-1$ times} },1,\underbrace{0,\dots,0}_{\text{$d-i$ times} })$ ($i$-th coordinate vector). 

To prove that \cref{eq:subg-third} is a member of subgradient at $w$, we need to prove for all $w,v \in \Reals^d$, we have
$f_3(v)\geq f_3(w)+ \inner{\subgrad{f_3}(w)}{v-w}$. 

Consider the case $w=0$, then, since $\subgrad{f_3}(w)=0$, trivially we have $f_3(v)\geq f_3(w)=0$. Next, consider the case that $w\neq 0$ but $\mathcal{I}(w) =\emptyset$. This case holds if and only if for all $i \in \range{d}$, $w(i)<0$. Therefore, $\subgrad{f_3}(w)=0$ and the first-order convexity condition trivially holds. Finally, consider the case that $w\neq 0$ and $\mathcal{I}(w) \neq \emptyset$. Let $\hat{i}=\min\{\mathcal{I}(w)\}$, then
\[
\nonumber
f_3(w) + \inner{\coorvec{\hat{i}} }{v-w} &= w(\hat{i}) +  \inner{\coorvec{\hat{i}} }{v-w}\\
    &=  w(\hat{i}) +  v(\hat{i})-w(\hat{i}) \nonumber\\
    &= v(\hat{i}) \nonumber\\
    &\leq \max\Big\{\max_{i\in \range{d}}\{v(i)\},0\Big\}, \nonumber
\]
as was to be shown.

Then, we provide analysis of the dynamics of GD using the first-order oracle described above. We only describe the dynamics under the event $\{T/2\leq \norm{\badset}_0\leq T \}$. Let the (ordered) set of bad coordinates denoted by $\mathcal{B}=\{i_1,\dots,i_{\norm{\badset}_0}\}$. The main observation here is that we can re-write the \cref{eq:emp-risk} as follows
\[
\label{eq:emp-risk-seperated}
\Empriskcvx{w} = \sum_{i \in \range{d}\setminus \mathcal{B} } w(i)^2 \empmean(i) + \lambda \sum_{i\in \range{d}\setminus \mathcal{B} } w(i)\empmean(i) +  \max \Big\{\max_{i\in \range{d}}\{w(i)\},0\Big\}.
\]
This equation shows that the gradient comes from the first two terms does not change the bad coordinates of $w$. As we will show that $f_3$ does not provide gradient for bad coordinates, the dynamic of each good coordinate of $w$ is independent of other coordinates. Formally, we prove by induction that $\ww_1=-\eta \lambda \hat{\mu}$, and for $t\geq 2$, 
\begin{equation*}
    \ww_t(i) = \begin{cases}
        \frac{\lambda}{2} (-1+(1-2\eta \empmean(i))^t)  &  i \in  \range{d}\setminus \mathcal{B}\\
        -\eta                                         &  i \in  \{i_1,\dots,i_{\min\{ \norm{\badset}_0,t-1\}}\}\\
        0                                             &  i \in  \{i_{\min\{ \norm{\badset}_0,t-1\}+1},\dots,i_{\norm{\badset}_0}\}
    \end{cases}.
\end{equation*}
For the base case, by the GD algorithm's update rule we have $ \ww_1  =  \proj \big(\ww_{0} - \eta g_0 \big)= \proj \big(- \eta g_0 \big)$. Note that $g_0=\lambda \hat{\mu}$. Since $\lambda \in \bigO{1/(n\sqrt{d})}$, $- \eta g_0 \in \parspace$.

For the inductive step, assume that for some $k \in \range{T-1}$, the claim holds. We have $\ww_{k+1}=\proj \big(\ww_{k} - \eta g_k \big)$. First, for $i\in  \range{d}\setminus \mathcal{B}$, $ g_k(i) = 2 \ww_k(i)\hat{\mu}(i) + \lambda \hat{\mu}(i)$. Note that the gradient from the third term is zero for good coordinates as $\ww_k(i)<0$ for $i\in  \range{d}\setminus \mathcal{B}$. By a simple calculation, one can show that 
\[
\nonumber
\ww_{k} - \eta g_k =  \frac{\lambda}{2} \big(-1+(1-2\eta \empmean(i))^{k+1}\big). 
\]
Also, as $\lambda \in \bigO{1/(n\sqrt{d})}$, $\ww_{k} - \eta g_k \in \parspace$. 
Then, for the bad coordinates, consider two cases $\min\{\norm{\badset}_0\,k-1\}=k-1$ and $\min\{\norm{\badset}_0 , k-1\}=\norm{\badset}_0$.
Consider the first case, i.e., $\min\{\norm{\badset}_0,k-1\}=k-1$. 
Consider $i_{k} \in \mathcal{B}$. 
From \cref{eq:emp-risk-seperated}, the first two terms do not provide gradient for bad coordinates. 
Then, we claim that  $\subgrad{f_3}(\ww_{k})=\coorvec{i_{k}}$. 
The reason is that for all $i\in\{i_1,\dots,i_{k-1}\} \cup \range{d}\setminus \mathcal{B}$, $\ww_k(i)<0$ and $i\in\{i_k,\dots,\norm{\badset}_0\}$, $\ww_k(i)=0$. 
Therefore, the claim follows from \cref{eq:subg-third}. Therefore, for the first case, for all  $i \in \{i_1,\dots,i_{k}\}$, $\ww_{k+1}(i)=-\eta$, and  for all  $i \in \{i_{k+1},\dots,\norm{\badset}_0\}$, $\ww_{k+1}(i)=0$.

Consider the second case, $\min\{\norm{\badset}_0\,k-1\}=\norm{\badset}_0$. In this case, all coordinates of $\ww_k$ are less than zero. Therefore, the gradient from $f_3$ is zero, and the bad coordinates remain unchanged.
\end{proof}

Next, we provide a result regarding $\norm{\ww_T}$.
\begin{lemma}
\label[lemma]{lem:norm-out-gd}
Under the event $\{T/2 \leq \norm{\badset}_0\leq  T\}$, we have $ \frac{1}{2\sqrt{n}} \leq \norm{\ww_T}\leq \frac{1}{\sqrt{n}}$. 
\end{lemma}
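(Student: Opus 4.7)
The plan is to directly compute $\|w_T\|^2$ by exploiting the explicit per-coordinate formulas provided by Lemma~\ref{lem:dynamics-gd} and showing that the contribution from bad coordinates dominates the good-coordinate contribution.

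First, I would decompose
\begin{equation*}
\|w_T\|^2 = \sum_{i \in \mathcal{B}} w_T(i)^2 + \sum_{i \in [d] \setminus \mathcal{B}} w_T(i)^2.
\end{equation*}
By Lemma~\ref{lem:dynamics-gd}, $w_T(i) = -\eta$ for every $i \in \mathcal{B}$, so the first sum equals $\|B\|_0 \eta^2$ exactly. Under the event $\{T/2 \le \|B\|_0 \le T\}$ and substituting $T = 2n^2$ and $\eta^2 = 1/(5n^3)$, this contribution lies in the interval $[T\eta^2/2, T\eta^2] = [1/(5n), 2/(5n)]$, which is $\Theta(1/n)$.

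Next, I would control the contribution of the good coordinates. Starting from the closed-form $w_T(i) = (\lambda/2)(-1 + (1 - 2\eta\hat\mu(i))^T)$ with $\hat\mu(i) \in [0,1]$ and $2\eta\hat\mu(i) \in [0,1]$, it follows immediately that $(1 - 2\eta\hat\mu(i))^T \in [0,1]$, and hence $|w_T(i)| \le \lambda/2$. Summing over at most $d$ good coordinates yields $\sum_{i \in [d] \setminus \mathcal{B}} w_T(i)^2 \le d\lambda^2/4$. Since $\lambda \in O(1/(n\sqrt{d}))$, this contribution is $O(1/n^2)$, which is asymptotically negligible compared to the $\Theta(1/n)$ contribution from bad coordinates.

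Combining these two estimates and taking square roots gives $\|w_T\| = \Theta(1/\sqrt{n})$, as claimed. The only real obstacle is the numerical bookkeeping needed to pin down the constants $1/(2\sqrt{n})$ and $1/\sqrt{n}$; this amounts to choosing the constant hidden in $\lambda \in O(1/(n\sqrt{d}))$ small enough that the good-coordinate residual does not push $\|w_T\|^2$ above $1/n$, and checking the numerical inequality that ensures $T\eta^2/2$ together with the (non-negative) good-coordinate contribution is at least $1/(4n)$ for $n$ sufficiently large.
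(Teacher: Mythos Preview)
Your approach is correct and matches the paper's: decompose $\|w_T\|^2$ into bad- and good-coordinate contributions, use that bad coordinates contribute exactly $\|\badset\|_0\,\eta^2 \in [T\eta^2/2,\,T\eta^2] = \Theta(1/n)$, and show the good-coordinate contribution is asymptotically negligible. The only cosmetic difference is that you bound $|w_T(i)| \le \lambda/2$ directly from the closed form, whereas the paper simply quotes the (coarser) bound $|w_T(i)| \le \lambda\eta T$ already stated in Lemma~\ref{lem:dynamics-gd}; your bound is in fact tighter and makes the upper-bound verification cleaner.
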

\begin{proof}
Under the event  $\{T/2 \leq \norm{\badset}_0\leq  T\}$, \cref{lem:dynamics-gd} shows that  
\[
\nonumber
\norm{\ww_T}= \Big(\norm{B}_0 \eta^2 + \sum_{i \in \range{d}\setminus \mathcal{B}} \ww_T(i)^2\Big)^\frac{1}{2}.
\]
Since for good coordinates, $|\ww_T(i)|\leq \lambda \eta T$, we have the following upper bound $\norm{\ww_T} \leq \sqrt{T \eta^2 + d (\lambda \eta T )^2 }$. For a lower bound consider $\norm{\ww_T} \geq \sqrt{T \eta^2/2}$. Setting the parameters, we obtain $\norm{\ww_T} \leq \frac{1}{\sqrt{n}}$ and $\norm{\ww_T} \geq \frac{1}{2\sqrt{n}}$.
\end{proof}
\subsection{Noise with Large Variance Fails }
\label{subsec:large-var}
Consider the case that the variance of $\xi$ along each dimension is $\sigma^2$ and $\sigma \geq \frac{\beta^\star}{\sqrt{d}} $ where $\beta^\star = 0.1$. In particular, $\frac{\beta^\star}{\sqrt{d}}$ is the threshold for the variance.
First of all note that for all $w \in \parspace$
\[
\nonumber
\Popriskcvx{w}=\frac{1}{2} \norm{w}^2 + \frac{\lambda}{2} \sum_{i=1}^{d}w(i) + \max \Big\{\max_{i \in \range{d}}\{w(i)\},0 \Big\}.
\]
Therefore, 
\[
\label{eq:residual-term}
|\Popriskcvx{\sw_T} - \Popriskcvx{\ww_T}| = \Big|\frac{1}{2} (\norm{\sw_T}^2-\norm{\ww}^2) + \frac{\lambda}{2} \sum_{i=1}^{d}(\sw_T(i) - \ww_T(i)) +  \Xi_T \Big|,
\]
where $\Xi_T =  \max\{\max_{i \in \range{d}}\{\sw_T(i)\},0\} -  \max\{\max_{i \in \range{d}}\{\ww_T(i)\},0\}$.  Under the event $\{T/2 \leq \norm{\badset}_0 \leq T\}$, \cref{lem:dynamics-gd} shows that $\max\{\max_{i \in \range{d}}\{\ww_T(i)\},0\}=0$ since $\ww_T(i)<0$ for all $i\in \range{d}$. Therefore, $\Xi_T =  \max\{\max_{i \in \range{d}}\{\sw_T(i)\},0\} \geq 0$.

Because the Gaussian distribution is invariant under the rotation, we can assume that $\ww_T = (\norm{\ww_T},\underbrace{0,\dots,0}_{\text{$d-1$ times} })$ without loss of generality. Therefore,  \cref{eq:residual-term} is given by 
\[
\label{eq:residual-term-rot}
|\Popriskcvx{\sw_T} - \Popriskcvx{\ww_T}|  =  \Big|\frac{1}{2} (\norm{\sw_T}^2-\norm{\ww_T}^2) + \frac{\lambda}{2} (\sw_T(1) - \norm{\ww_T} + \sum_{i=2}^{d} \sw_T(i) ) +\Xi_T \Big|. 
\]
Let $V_T = \ww_T + \xi$. Let us represent $\xi = r \theta$ where $r = \norm{\xi}$ and $\theta = \nicefrac{\xi}{\norm{\xi}}$. By a simple calculation, one can obtain that 
\[
\label{eq:norm-before-proj}
\norm{V_T}^2=\norm{\ww_T}^2+r^2+2\norm{\ww_T} r \theta(1).
\]
\newcommand{\rmax}{\mathsf{r}_{\text{max}}}  
Define $\rmax=1-\norm{\ww_T}$. Note  $0 \leq \rmax\leq 1$ since $\ww_T\in \parspace$. By the tower rule for the expectation,
\begin{equation}
    \label{eq:towerrule-decompos}
    \begin{aligned}
    &\EE\sbra{|\Popriskcvx{\sw_T} - \Popriskcvx{\ww_T}|} \geq \EE\sbra{|\Popriskcvx{\sw_T} - \Popriskcvx{\ww_T}|\indic{T/2 \leq \norm{\badset}_0 \leq T } \indic{r \leq \rmax}}    \\
&+ \EE\sbra{|\Popriskcvx{\sw_T} - \Popriskcvx{\ww_T}|\indic{T/2 \leq \norm{\badset}_0 \leq T } \indic{r > \rmax}  \indic{\norm{V_T}\leq 1}}\\
&+\EE\sbra{|\Popriskcvx{\sw_T} - \Popriskcvx{\ww_T}|\indic{T/2 \leq \norm{\badset}_0 \leq T } \indic{r > \rmax}  \indic{\norm{V_T}> 1}}
\\ 
    \end{aligned}
\end{equation}

Under the event $\{T/2 \leq \norm{\badset}_0 \leq T \} $, we divide the sample space into three regions: \textbf{Region 1}: $\{r\leq \rmax\}$, \textbf{Region 2}: $\{r > \rmax\}\cap\{\norm{V_T}< 1\}$, and \textbf{Region 3}: $\{r > \rmax\}\cap\{\norm{V_T}\geq 1\}$. In what follows, we lower bound \cref{eq:towerrule-decompos} for each region separately.

\newcommand{\subscript}[2]{$#1 _ #2$}
\begin{enumerate}[wide, labelwidth=!, labelindent=0pt, label={[\subscript{R}{{\arabic*}}]}]
    \item \textbf{Region 1} $\{r\leq \rmax\}$: 
    
By the tower rule for the expectation,
\[
&\EE\sbra{|\Popriskcvx{\sw_T} - \Popriskcvx{\ww_T}|\indic{T/2 \leq \norm{\badset}_0 \leq T } \indic{r \leq \rmax}} \nonumber \\
&= \EE\sbra{\cEE{\ww_T,r}{\sbra{|\Popriskcvx{\sw_T} - \Popriskcvx{\ww_T}|}}\indic{T/2 \leq \norm{\badset}_0 \leq T } \indic{r \leq \rmax}} \nonumber 
\]
Under the event $\{r\leq \rmax\}$, it is straightforward to see that $\norm{V_T}\leq 1$. Therefore, $\sw_T=\proj(V_T)=V_T$, and \cref{eq:residual-term-rot} is given by 
\[
\nonumber
|\Popriskcvx{\sw_T} - \Popriskcvx{\ww_T}| = \Big|\frac{1}{2}r^2 + \norm{\ww_T}r\theta(1) + \frac{\lambda r}{2}\sum_{i=1}^{d}\theta(i) + \Xi_T \Big|.
\]
By the construction of the surrogate algorithm and \cref{lem:polar-gauss}, we know that $\theta$ is independent of $r$ and $\ww_T$. Then, we invoke the reverse triangle inequality, i.e., $|a-b|\geq |a|-|b|$ for $a,b\in \Reals$. Using $\theta \equaldist -\theta$, we have
\[
 &\cEE{r,\ww_T}{\sbra{\Big|\frac{1}{2}r^2 + r\norm{\ww_T}\theta(1) + \frac{ \lambda r}{2}\sum_{i=1}^{d}\theta(i) + \Xi_T \Big|}} \nonumber\\
 &\geq  \underbrace{\cEE{\ww_T,r}{\sbra{\Big|\frac{1}{2}r^2 + \norm{\ww_T}r\theta(1) + \Xi_T\Big|}} }_{\text{\textcircled{1}}} - \underbrace{\cEE{\ww_T,r}{\sbra{\Big|\frac{\lambda r}{2}\sum_{i=1}^{d}\theta(i)\Big| }}}_{\text{\textcircled{2}}}.\label{eq:expect-theta}
\]
We will analyze \textcircled{1} and \textcircled{2} separately. Note that $\theta(1) \dist \unif{[-1,1]}$. Thus,  with probability $\nicefrac{1}{2}$, $\theta(1) \in [0,1]$. Therefore,
\[
\label{eq:term1-inner}
 \text{\textcircled{1}}  \geq \cEE{W_T,r}{\sbra{ \Big|\frac{1}{2}r^2 + \norm{\ww_T}r\theta(1) + \Xi_T\Big| \indic{\theta(1) \in[0,1]}}}\geq \Big|\frac{r^2}{4} + \frac{\Xi_T}{2}\Big| \geq \frac{r^2}{4}, 
\]
where the last inequality follows from $\Xi_T\geq 0$.
 By the Cauchy-Schwartz, we have $\norm{\theta}_1\leq \sqrt{d}$ since $\norm{\theta}_2=1$. Therefore, 
 \[
\text{\textcircled{2}}  & \leq \frac{\lambda r}{2} \norm{\theta}_1 \nonumber\\ 
                        &\leq \frac{\lambda r}{2} \sqrt{d} \nonumber\\
                        &\leq \frac{\lambda \rmax}{2} \sqrt{d} \nonumber\\
                        &\leq \frac{\lambda }{2} \sqrt{d}, \label{eq:term2-inner}
 \]
where the third inequality follows since $r \leq \rmax$ and the the last step follows from $\rmax$ being less than one.
By \cref{eq:expect-theta}, \cref{eq:term1-inner}, and \cref{eq:term2-inner}, we finish lower bounding the inner expectation, 
\[
\label{eq:region1-lowerbound}
\cEE{W_T,r}{\sbra{|\Popriskcvx{\sw_T} - \Popriskcvx{\ww_T}}} \geq \frac{r^2}{4} - \frac{\lambda \sqrt{d}}{2} \geq \frac{r^2}{4}-\frac{1}{2n}.
\]
Here, the last inequality follows from setting $\lambda\leq \frac{1}{n\sqrt{d}}$.

   \item  \textbf{Region 2}: $\{r > \rmax\}$ and $\{\norm{V_T} < 1\}$:
   
   Since $\norm{V_T}< 1$ and $\sw_T = \proj(V_T)$, we have $\sw_T = V_T$ . Using \cref{eq:norm-before-proj}, we can write
   \[
    \nonumber
|\Popriskcvx{\sw_T} - \Popriskcvx{\ww_T}| &\geq  | \frac{1}{2}(r^2+2\norm{\ww_T}r\theta(1)) + \frac{\lambda r}{2}\sum_{i\in \range{d}} \theta(i) + \Xi_T|.
    \]
   Then, using the reverse triangle inequality, i.e., $|a-b|\geq |a|-|b|$ for $a,b\in \Reals$, and the facts that $|\theta(1)|\leq 1$ and $\Xi_T\geq 0$, we have
   \[
   \nonumber 
   |\Popriskcvx{\sw_T} - \Popriskcvx{\ww_T}| &\geq \Big|\frac{1}{2}r^2 + \Xi_T\Big| - \Big|r\norm{\ww_T}\theta(1) + \frac{\lambda r}{2} \sum_{i \in \range{d}} \theta(i)\Big|\\
    & \geq \Big|\frac{1}{2}r^2 + \Xi_T\Big| - r\norm{\ww_T}|\theta(1)| - \frac{\lambda r}{2} \Big|\sum_{i \in \range{d}} \theta(i)\Big| \nonumber\\
    &\geq  \frac{1}{2}r^2 + \Xi_T - r\norm{\ww_T} - \frac{\lambda r}{2} \Big|\sum_{i \in \range{d}} \theta(i)\Big| \nonumber\\
    & \geq  \frac{1}{2}r^2 - r\norm{\ww_T} - \frac{\lambda r}{2} \Big|\sum_{i \in \range{d}} \theta(i)\Big|\nonumber.
   \]
    By the Cauchy-Schwartz, we have $\norm{\theta}_1\leq \sqrt{d}$ since $\norm{\theta}_2=1$. Therefore, 
    \[
    \nonumber
      |\Popriskcvx{\sw_T} - \Popriskcvx{\ww_T}| &\geq \frac{1}{2}r^2 - r\norm{\ww_T} - \frac{\lambda r}{2} \sqrt{d}\\
                                                &\geq \frac{1}{2}r^2 - r\norm{\ww_T} - \frac{r}{2n}. \label{eq:region2-interm} 
    \]
    Here, the last line follows from $\lambda\leq \frac{1}{n\sqrt{d}}$. 
    
  Define $g:\Reals \to \Reals $ where $g(x)=x^2/2  -x(\norm{\ww_T}+1/(2n))$. Then, we have  $ \arg\min_{x\in \Reals} g(x) = \norm{\ww_T}+1/(2n)$.   From \cref{lem:norm-out-gd}, we know that $\norm{\ww_T}\leq 1/ \sqrt{n}$. Notice that for $n\geq 5$, we have $\norm{\ww_T} \leq 1/\sqrt{n} \leq 0.5(1-1/(2n))$ which gives us $\norm{\ww_T}+1/(2n)\leq 1-\norm{\ww_T}=\rmax$. Therefore, we conclude that $g$ is increasing for $x \geq \rmax$.  Note that the lower bound in \cref{eq:region2-interm} is $g(r)$ and using this observation we have $g(r)> g(\rmax)$ since in this region $r>\rmax$. Therefore, we can further lower bound \cref{eq:region2-interm} as 
   \[
   \nonumber  |\Popriskcvx{\sw_T} - \Popriskcvx{\ww_T}| &\geq \frac{3}{2}\norm{\ww_T}^2 - \Big(2-\frac{1}{2n}\Big)\norm{\ww_T} +\frac{1}{2}\Big(1-\frac{1}{n}\Big)\\
    &\geq \frac{1}{2} -\frac{2}{\sqrt{n}}.  \label{eq:region2-lowerbound}
   \]
    To prove the last step define $h:\Reals \to \Reals $ where $h(x)=\frac{3}{2}x^2 - \Big(2-\frac{1}{2n}\Big)x +\frac{1}{2}\Big(1-\frac{1}{n}\Big)$. It is straightforward to see that $h(x)$ is decreasing for $x\leq \nicefrac{1}{\sqrt{n}}$ when $n\geq \sqrt{5}$. Using this argument and some manipulations we can show the last step.
    
   \item  \textbf{Region 3}: $\{r > \rmax\}$ and $\{\norm{V_T}\geq 1\}$. 
   
   Since $\norm{V_T}\geq 1$ and $\sw_T = \proj(V_T)$, we have $\norm{\sw_T}=1$. Using this observation and  reverse  triangle inequality, i.e., $|a-b|\geq |a|-|b|$ for $a,b\in \Reals$, we can simplify \cref{eq:residual-term-rot} as
    \[
    \nonumber
|\Popriskcvx{\sw_T} - \Popriskcvx{\ww_T}| &\geq  \frac{1}{2}|1-\norm{\ww_T}^2 + 2\Xi_T| - \frac{\lambda}{2} \Big|\sum_{i\in \range{d}}\sw_T(i)-\norm{\ww_T}\Big|\\
    \nonumber
&  \geq \frac{1}{2}|1-\norm{\ww_T}^2 + 2\Xi_T| - \frac{\lambda}{2} (\norm{\sw_T}_1 + \norm{\ww_T}).
    \]
The last line follows from using the triangle inequality twice. By \cref{lem:norm-out-gd}, we have $\norm{\ww_T}\leq 1/\sqrt{n}$. Then, since $\Xi_T\geq 0$, we obtain
\[
|\Popriskcvx{\sw_T} - \Popriskcvx{\ww_T}| \geq \frac{1}{2}(1+2\Xi_T - \frac{1}{n}) - \frac{\lambda}{2} (\norm{\sw_T}_1 + \frac{1}{\sqrt{n}}). \nonumber
\]
Also, by the Cauchy-Schwartz, we have $\norm{\sw_T}_1\leq \sqrt{d}$ since $\norm{\sw_T}_2=1$. Therefore, setting $\lambda\leq \frac{1}{n\sqrt{d}}$
\[
\nonumber
|\Popriskcvx{\sw_T} - \Popriskcvx{\ww_T}|  &\geq \frac{1}{2}(1-\frac{1}{n}) +\Xi_T - \frac{1}{2n}-\frac{1}{2n^{1.5}\sqrt{d}}\\
&\geq \frac{1}{2}- \frac{1}{n} \nonumber\\
&\geq \frac{1}{2}- \frac{2}{\sqrt{n}} \label{eq:region3-lowerbound}.
\]
Here the last line follows from $\Xi_T\geq 0$ and some simple manipulations.
\end{enumerate}
Equipped with the lower bounds for each region we can conclude this part of the proof. Combining \cref{eq:towerrule-decompos} with \cref{eq:region1-lowerbound}, \cref{eq:region2-lowerbound}, and \cref{eq:region3-lowerbound}, we obtain
\[
 &\EE\sbra{|\Popriskcvx{\sw_T} - \Popriskcvx{\ww_T}|} \geq \EE\sbra{  \Big(\frac{r^2}{4} - \frac{1}{2n}\Big) \indic{T/2 \leq \norm{\badset}_0 \leq T } \indic{r \leq \rmax}} \nonumber \\
 &+   \Big(\frac{1}{2}- \frac{2}{\sqrt{n}}\Big) \EE\sbra{\cPr{\trainset}{r> \rmax} \indic{T/2 \leq \norm{\badset}_0 \leq T } } \label{eq:tower-rule-combined}.
\]
Assume we choose $n$ sufficiently large so that $\frac{(\beta^\star)^2}{16}-\frac{1}{2n}\geq 0$ (Notice that such $n$ always exists).  
We can further lower bound \cref{eq:tower-rule-combined} as
\[
&\EE\sbra{|\Popriskcvx{\sw_T} - \Popriskcvx{\ww_T}|}  \geq \EE\sbra{  \Big(\frac{r^2}{4} - \frac{1}{2n}\Big) \indic{\frac{T}{2} \leq \norm{\badset}_0 \leq T } \indic{\frac{\beta^\star}{2} \leq  r \leq \rmax}}  \nonumber\\
& + \EE\sbra{  \Big(\frac{r^2}{4} - \frac{1}{2n}\Big) \indic{\frac{T}{2} \leq \norm{\badset}_0 \leq T } \indic{ r < \frac{\beta^\star}{2}}} \nonumber \\
&+   \Big(\frac{1}{2}- \frac{2}{\sqrt{n}}\Big) \EE\sbra{\cPr{\trainset}{r > \rmax} \indic{\frac{T}{2} \leq \norm{\badset}_0 \leq T } } \nonumber \\
&\geq  \Big(\frac{(\beta^\star)^2}{16} - \frac{1}{2n}\Big) \EE\sbra{\cPr{\trainset}{\frac{\beta^\star}{2} \leq  r \leq \rmax} \indic{\frac{T}{2} \leq \norm{\badset}_0 \leq T } } \nonumber  \\
& -\frac{1}{2n}  \EE\sbra{ \indic{\frac{T}{2} \leq \norm{\badset}_0 \leq T } \indic{ r < \frac{\beta^\star}{2}}} +  \Big(\frac{1}{2}- \frac{2}{\sqrt{n}}\Big) \EE\sbra{\cPr{\trainset}{r > \rmax} \indic{\frac{T}{2} \leq \norm{\badset}_0 \leq T } } \nonumber\\
& \geq  \Big(\frac{(\beta^\star)^2}{16} - \frac{1}{2n}\Big) \EE\sbra{\cPr{\trainset}{\frac{\beta^\star}{2} \leq  r } \indic{\frac{T}{2} \leq \norm{\badset}_0 \leq T } }  -\frac{1}{2n}  \EE\sbra{ \indic{\frac{T}{2} \leq \norm{\badset}_0 \leq T } \indic{ r < \frac{\beta^\star}{2}}} \nonumber \\
 &\geq \Big(\frac{(\beta^\star)^2}{16} - \frac{1}{2n}\Big) \EE\sbra{\cPr{\trainset}{\frac{\beta^\star}{2} \leq  r } \indic{\frac{T}{2} \leq \norm{\badset}_0 \leq T } }  -\frac{1}{2n} \Pr(r < \frac{\beta^\star}{2}). \label{eq:before-last-step-residual}
\]
Here, we have used  $\frac{1}{2}- \frac{2}{\sqrt{n}}\geq \frac{(\beta^\star)^2}{16} - \frac{1}{2n}$ for $n\geq 14$ where $\beta^\star = 0.1$, and $\frac{r^2}{4}-\frac{1}{2n}\geq -\frac{1}{2n}$ for $r\geq 0$.

 Note that $\trainset \indep r$ by the construction of the surrogate algorithm. By assumption we have $\sigma \geq \frac{\beta^\star}{\sqrt{d}}$. Using the concentration bound from \cref{cor:norm-gauss} we obtain
\[
\nonumber
\Pr\Big(r \leq  \frac{\beta^\star}{2} \Big) \leq  \Pr\Big( r \leq \frac{\sigma \sqrt{d}}{2}  \Big) \leq 2\exp\Big(-\frac{9d}{64}\Big).
\]
Since $r$ and $\trainset$ are independent, by \cref{eq:bad-coordinate-num} we have
\[
 \EE\sbra{\cPr{\trainset}{\frac{\beta^\star}{2} \leq  r } \indic{\frac{T}{2} \leq \norm{\badset}_0 \leq T } }   &= \Pr\Big( \frac{\beta^\star}{2} \leq  r \Big) \Pr\Big( \frac{T}{2} \leq \norm{\badset}_0 \leq T \Big) \nonumber \\
 &\geq (1-2\exp(-9d/64)) \Pr \Big(\frac{T}{2} \leq \norm{\badset}_0 \leq T \Big)\nonumber\\
 &\geq (1-2\exp(-9d/64)) (1-2\exp(-T/36)) \label{eq:lowerprob-residual}.
\]
Therefore, we conclude this part by combining \cref{eq:before-last-step-residual} and \cref{eq:lowerprob-residual} to obtain the following lower bound:
\[
&\EE\sbra{|\Popriskcvx{\sw_T} - \Popriskcvx{\ww_T}|} \nonumber \\ 
&\geq \Big(\frac{(\beta^\star)^2}{16} - \frac{1}{2n}\Big) \big(1-2\exp(-9d/16)-2\exp(-T/36)\big) - \frac{1}{n}\exp(-\frac{9d}{64}).  \nonumber
\]
By setting the parameters, i.e., $T$ and $d$, we prove that for sufficiently large $n$
\[
\EE\sbra{|\Popriskcvx{\sw_T} - \Popriskcvx{\ww_T}|} \in \Omega(1) \nonumber,
\]
which was to be shown.

\subsection{Noise With Small Variance Fails: IOMI }
\label{subsec:small-var}
In \cref{subsec:large-var} we showed that if the variance of $\xi$ is greater than $\big(\frac{\beta^\star}{\sqrt{d}}\big)^2$, the distance between the population risk of the surrogate algorithm and the GD algorithm does not go to zero. In this part, we will show that if the variance of $\xi$ is smaller than $ \frac{\beta^\star}{\sqrt{d}}$ then, the mutual information term does not vanish as $n \to \infty$.

By the definition of the mutual information we can write $\minf{\sw_T;\trainset}= \entr{\trainset} - \entr{\trainset\vert \sw_T}$. Note that $\badset$ is a $\trainset$-measurable random variable. Therefore, we have 
\[
\nonumber
\entr{\trainset\vert \sw_T} = \entr{\trainset,\badset\vert \sw_T}.
\]
Then, by the chain rule for the discrete entropy $\entr{\trainset,\badset\vert \sw_T} = \entr{\badset\vert \sw_T} +\entr{\trainset\vert \sw_T,\badset}$. We claim that
\[
\nonumber
\entr{\trainset\vert \sw_T,\badset} \leq n\EE[(d-\norm{\badset}_0)].
\]
The reason is by conditioning on $\badset$, we know the exact values for the bad coordinates in $\trainset$. Therefore, the cardinally of the possible values for each data-point, conditioned on $\badset$, cannot be more that $2^{d-\norm{\badset}_0}$. Thus,
\begin{equation}
\begin{aligned}
\minf{\sw_T;\trainset}  &\geq \entr{\trainset}-\entr{\badset \vert \sw_T} -n(d-\EE[\norm{\badset}_0]) \\
&=  n \EE[\norm{\badset}_0] - \entr{\badset \vert \sw_T}, \nonumber
\end{aligned}
\end{equation}
where the last line follows from $\entr{\trainset}=nd$ because each element in $\trainset$ is drawn i.i.d. Also, note that $\EE[\norm{\badset}_0] = \EE[ \sum_{i=1}^{d}\badset(i)] = d \EE[\badset(1)]=d2^{-n}$
where in the last line we used the fact that each element of $\trainset$ is \iid, and each column is a bad coordinate with probability $2^{-n}$. Also, with the similar reasoning we obtain $\entr{\badset}=\entr{\badset(1),\dots,\badset(d)}=d\binaryentr{2^{-n}}$, where for $x\in [0,1]$ $\binaryentr{x}=-x\log(x)-(1-x)\log(1-x)$ is the binary entropy function.

Then, we invoke a version of Fano's inequality, provided in \cref{lem:fano}, to obtain
\[
\entr{\badset \vert \sw_T}\leq 1 + \proberror  \entr{\badset} \nonumber
\]
where $\proberror =\inf_{M:\parspace \to \{0,1\}^d} \Pr(M(\sw_T)\neq \badset)$. 
Using the well-known inequality $\binaryentr{x} \leq -x\log(x) + x$, we obtain 
\[
\nonumber
\entr{\badset}\leq d(n2^{-n}+2^{-n})=d(n+1)2^{-n}. 
\]
Therefore, 
\[
\minf{\sw_T;S} &\geq nd2^{-n} - (n+1)d2^{-n} \proberror - 1 \nonumber\\
                &= nd2^{-n} \Big(1- \frac{n+1}{n} \proberror \Big)-1\nonumber\\
                & \geq 1.5 n^3 (1- 2 \proberror)-1\label{eq:minf-simple},
\]
where the last line follows from setting $d=0.75 T2^n$, $T=2n^2$, and $(n+1)/n \leq 2$.

Next, we design an estimator $\Psi$ to \textit{decode} $\badset$ from $\sw_T$ and analyze its probability of error. Let $h = (\eta + \eta \lambda T)/2$. Then, the proposed estimator is given by
\[
\label{eq:def-estimator}
\Psi(w)(i) = 
\begin{cases}
  1   & \text{if} ~ |w(i)| \geq h\\
  0   & \text{if} ~ |w(i)| < h
\end{cases}
\]
for $i \in \range{d}$. In words, it compares each coordinate of $w$ with a given threshold, and if it is larger than $h$, then that coordinate declares as a bad coordinate.

Let $V_T = \ww_T + \xi$. Then,
\[
\nonumber
\proberror &\leq \Pr( \exists i \in [d] ~\text{s.t.}~ \Psi(\sw_T)(i)\neq \badset(i))\\
\label{eq:error-prob-main}
           &\leq \Pr( \{\exists i \in [d] ~\text{s.t.}~ \Psi(\sw_T)(i)\neq \badset(i)\} \cap \{\norm{V_T}\leq 1\}) + \Pr(\norm{V_T}\geq 1)
\]

First we show that $\Pr(\norm{V_T}\geq 1)$ is sufficiently small. From \cref{eq:norm-before-proj}, we have $\norm{V_T}\geq 1 = \norm{\ww_T}^2+r^2+2\norm{\ww_T} r \theta(1)$. Then, as shown in \cref{subsec:large-var} given that $\{r\leq \rmax=1-\norm{\ww_T}\}$, then $\norm{V_T}\leq 1$. Using this we obtain
\[
\nonumber 
 \Pr(\norm{V_T}\geq 1) &= \Pr(\norm{\ww_T}^2+r^2+2\norm{\ww_T} r \theta(1)\geq 1)\\
                        &\leq \Pr( r  \geq 1-\norm{\ww_T}) \nonumber.
 \]
Here $\xi = r \theta$ where $r = \norm{\xi}$ and $\theta = \nicefrac{\xi}{\norm{\xi}}$. Recall from \cref{lem:norm-out-gd} that under the event $\{T/2\leq \norm{\badset}_0\leq T\}$, $1/(2\sqrt{n}) \leq \norm{\ww_T}\leq 1/\sqrt{n}$. Therefore,   
\[
 &\Pr( r  \geq 1-\norm{\ww_T})\leq \EE\sbra{\cPr{\trainset}{r  \geq 1-\norm{\ww_T}} \indic{T/2\leq \norm{\badset}_0\leq T} } + 1-\Pr(T/2\leq \norm{\badset}_0\leq T) \nonumber\\
 &\leq \EE\sbra{\cPr{\trainset}{r  \geq 1-1/(2\sqrt{n})} \indic{T/2\leq \norm{\badset}_0\leq T} } + 1-\Pr(T/2\leq \norm{\badset}_0\leq T)\nonumber\\
  &\leq  \EE\sbra{\cPr{\trainset}{r  \geq 1-1/(2\sqrt{n})} \indic{T/2\leq \norm{\badset}_0\leq T} } + 2\exp(-T/36) \nonumber,
\]
where the last line follows from \cref{eq:bad-coordinate-num}. Observe that 
\[
\nonumber
\{r\geq 1-1/(2\sqrt{n})\} \subseteq \{r\geq 2 \beta^\star\},
\]
due to $\beta^\star=0.1$. Also as $\sigma \leq \beta^\star /\sqrt{d}$, we have
\[
\nonumber
\Pr(r\geq 2 \beta^\star) \leq \Pr(r\geq \sqrt{4 d(\sigma^\star)^2}) \leq  2\exp\Big(-\frac{9d}{16}\Big),
\]
where the last inequality comes from the concentration bounds for $r$ in \cref{cor:norm-gauss}. Since $r\indep \trainset$, we have
\[
\nonumber
\cPr{\trainset}{r  \geq 1-1/(2\sqrt{n})}  \leq    2\exp\Big(-\frac{9d}{16}\Big).
\]
Therefore, 
\[
\label{eq:norm-excced}
\Pr(\norm{V_T}\geq 1) \leq 2\exp\Big(-\frac{9d}{16}\Big) + 2\exp(-T/36).
\]
Since under the event $\norm{V_T}\leq 1$, $\sw_T = \proj(V_T)=\ww_T + \xi$,
\[
&\Pr( \{\exists i \in [d] ~\text{s.t.}~ \Psi(\sw_T)(i)\neq \badset(i)\} \cap \{\norm{V_T}\leq 1\}) \nonumber \\
&= \Pr( \{\exists i \in [d] ~\text{s.t.}~ \Psi(\ww_T +\xi)(i)\neq \badset(i)\} \cap \{\norm{V_T}\leq 1\})\nonumber\\
&\leq \Pr(\exists i \in [d] ~\text{s.t.}~ \Psi(\ww_T +\xi)(i)\neq \badset(i)\})\label{eq:error-prob-inside-ball}.
\]
By the definition of the error probability 
\[
 \Pr( \forall i \in \range{d} ~ \Psi(\ww_T +\xi)(i) = \badset(i)) \geq   \EE[\cPr{\trainset}{ \forall i \in \range{d} ~  \Psi(\ww_T +\xi)(i) = \badset(i)} \indic{T/2 \leq \norm{\badset}_0\leq T}]\label{eq:prob-error-estimator-step0}.
\]
Note that $\ww_T$ and $\badset$ are $\trainset$-measurable. Therefore, the inner probability is only over $\xi$. Also, let $\mathcal{B}=\{i_1,\dots,i_{\norm{\badset}_0}\}$ denote the set of bad coordinates. 
Using the closed-form expression in \cref{lem:dynamics-gd} for $\ww_T$ under the event $\{{T/2 \leq \norm{\badset}\leq T}\}$, we have
\[
\label{eq:prob-error-estimator-step1}
   \cPr{\trainset}{ \forall i \in \range{d} ~  \Psi(\ww_T +\xi)(i) = \badset(i)}= \left(\Pi_{i \in \mathcal{B}} \cPr{\trainset}{\eta + \xi(i)\geq h}\right) \Pi_{i \in \range{d} \setminus \mathcal{B}} \left( \cPr{\trainset}{-\ww_T(i) + \xi(i)\leq h} \right).
\]
This identity follows from $\xi \indep \trainset$ and each  coordinate of $\xi$ are i.i.d. As shown in \cref{lem:dynamics-gd}, under the event $\{{T/2 \leq \norm{\badset}\leq T}\}$, $0\leq -\ww_T(i)\leq \lambda \eta T$; therefore, $-\ww_T(i) < h$ for $i \in \range{d}\setminus \badset$. We can simplify \cref{eq:prob-error-estimator-step1} as
\[
\nonumber
 &\left(\Pi_{i \in \mathcal{B}} \cPr{\trainset}{\eta + \xi(i)\geq h}\right) \Pi_{i \in \range{d} \setminus \mathcal{B}} \left( \cPr{\trainset}{-\ww_T(i) + \xi(i)\leq h} \right)  \\
 &=\Big(1-Q\Big(\frac{\eta - \eta \lambda T}{2\sigma^\star}\Big)\Big)^{\norm{\badset}_0} \Pi_{i \in \range{d} \setminus \mathcal{B}} \Big(1-Q\Big(\frac{h+\ww_T(i)}{\sigma^{\star}}\Big)\Big)\nonumber
\]
where for $x \in \Reals$, $Q(x)=  \frac{1}{\sqrt{2\pi}} \int_{t\geq x}  \exp(-\frac{t^2}{2})\text{d}t$ is the tail distribution function of the Gaussian distribution with mean zero and variance one. Since $Q\Big(\frac{h+\ww_T(i)}{\sigma^{\star}}\Big) \leq Q\Big(\frac{h-\eta \lambda T}{\sigma^{\star}}\Big)=Q\Big(\frac{\eta-\eta \lambda T}{2\sigma^{\star}}\Big)$ for all $i\in \range{d}\setminus \mathcal{B}$, we can further lower bound as 
\[
   \cPr{\trainset}{ \forall i \in \range{d} ~  \Psi(\ww_T +\xi)(i) = \badset(i)} &\geq \left(\Pi_{i \in \mathcal{B}} \cPr{\trainset}{\eta + \xi(i)\geq h}\right) \Pi_{i \in \range{d} \setminus \mathcal{B}} \left( \cPr{\trainset}{\eta \lambda T + \xi(i)\leq h} \right) \nonumber\\
   & \geq \Big(1-Q\Big(\frac{\eta - \eta \lambda T}{2\sigma^\star}\Big)\Big)^{\norm{\badset}_0}  \Big(1-Q\Big(\frac{\eta - \eta \lambda T}{2\sigma^\star}\Big)\Big)^{d-\norm{\badset}_0} \label{eq:prob-error-estimator-step2}, 
\]
More precisely, since $ \eta \lambda T< h < \eta$, we have $\cPr{\trainset}{\eta + \xi(i)\geq h} = \cPr{\trainset}{\xi(i)\geq h - \eta} = 1 -  \cPr{\trainset}{\xi(i)\geq \eta -h}$ and $\cPr{\trainset}{\eta \lambda T + \xi(i)\leq h} = \cPr{\trainset}{\xi(i)\leq h - \eta \lambda T } = 1 -  \cPr{\trainset}{\xi(i)\geq h - \eta \lambda T}$. 

Therefore, we can use \cref{eq:prob-error-estimator-step0}, \cref{eq:prob-error-estimator-step1}, and \cref{eq:prob-error-estimator-step2} to obtain
\[
\Pr( \forall i \in \range{d} ~ \Psi(\ww_T +\xi)(i) = \badset(i))  &\geq  \Big(1-Q\Big(\frac{\eta - \eta \lambda T}{2\sigma^\star}\Big)\Big)^{d} ~ \Pr(T/2 \leq \norm{\badset}_0\leq T) \nonumber \\
&\geq   \Big(1-Q\Big(\frac{\eta - \eta \lambda T}{2\sigma^\star}\Big)\Big)^{d} ~ \Big(1-2\exp\Big(-\frac{T}{36}\Big)\Big),\nonumber
\]
where in the last line we have used \cref{eq:bad-coordinate-num}. 
Note that $\eta-\eta\lambda T\geq 0$ since $\lambda \in \bigO{1/(n\sqrt{d})}$. We can use the well-known inequality $(1-x)^n\geq 1-nx$ for $x\leq 1$ , $n\in \Naturals$ to obtain
\[
\nonumber
&1 - \Pr( \forall i \in \range{d} ~ \Psi(\ww_T +\xi)(i) = \badset(i))   \\
&\leq d Q\Big(\frac{\eta - \eta \lambda T}{2\sigma^\star}\Big) + 2\exp\Big(-\frac{T}{36}\Big) - 2d Q\Big(\frac{\eta - \eta \lambda T}{2\sigma^\star}\Big) \exp\Big(-\frac{T}{36}\Big) \nonumber \\
&\leq d Q\Big(\frac{\eta - \eta \lambda T}{2\sigma^\star}\Big) + 2\exp\Big(-\frac{T}{36}\Big) \nonumber,
\]
Then, we invoke the inequality $Q(x)\leq \frac{1}{2}\exp(-\frac{x^2}{2})$ for $x\geq 0$ \citep[][Ex.2.2]{wainwright2019high}, to further upper bound the last equation as follows:
\[
\label{eq:prob-error-estimator-step3}
1 - \Pr( \forall i \in \range{d} ~ \Psi(\ww_T +\xi)(i) = \badset(i)) \leq  \frac{d}{2} \exp\Big(-\frac{d (\eta - \eta \lambda T)^2 }{2(\beta^\star)^2}\Big) + 2\exp\Big(-\frac{T}{36}\Big).
\]

Finally, by combining \cref{eq:error-prob-main}, \cref{eq:norm-excced}, \cref{eq:error-prob-inside-ball}, and \cref{eq:prob-error-estimator-step3}, we obtain
\[
\proberror \leq \frac{d}{2} \exp\Big(-\frac{d (\eta - \eta \lambda T)^2 }{2(\beta^\star)^2}\Big) + 4\exp\Big(-\frac{T}{36}\Big) +  2\exp\Big(-\frac{9d}{16}\Big). \nonumber
\]
By setting the parameters and some simple manipulations, we obtain
\[
\label{eq:proberror-final}
\proberror \leq n^2 2^n \exp(- 2^n /n) + 6\exp(-n^2/18) .
\]
In \cref{fig:errorprob}, we plot the upper bound in \cref{eq:proberror-final}. As can be seen the upper bound is decreasing and smaller than $0.1$ for $n\geq 10$.
\begin{figure}[H]
    \centering
    \includegraphics[scale=0.45]{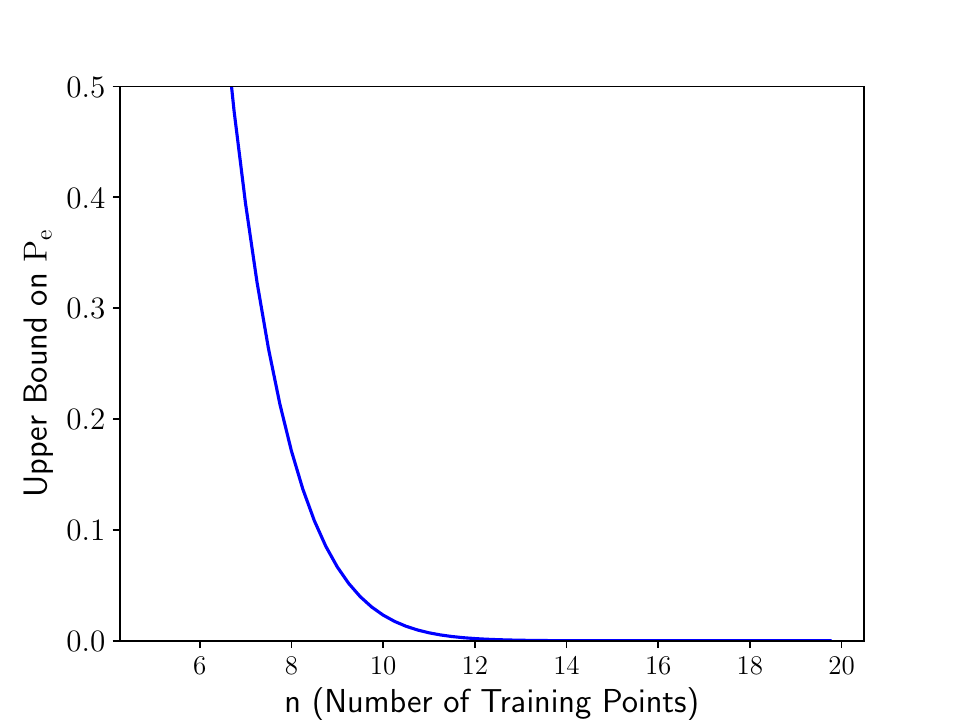}
    \caption{The upper bound in \cref{eq:proberror-final}}
    \label{fig:errorprob}
\end{figure}
Finally, combining \cref{eq:proberror-final} with \cref{eq:minf-simple}, we conclude that for $n\geq 10$ if $\sigma \leq \beta^\star/\sqrt{d}$, we have
\[
\nonumber
\minf{\sw_T;\trainset}\geq 1.2n^3 - 1,
\]
which was to be shown.

\subsection{Noise with Small Variance Fails: CMI}
\label{subsec:small-var-cmi}
In this part of the proof we aim to show that if the variance of the noise is smaller than  $\frac{(\beta^\star)^2}{d}$, then $\cmi$ grows linearly with $n$. We begin this part of the proof with a key lemma. 

We recall the definition of bad coordinates. 
For $i\in \range{d}$, we say the $i-$th coordinate is a \emph{bad coordinate} iff for all $j \in \range{n}$, $Z_{j}(i)=0$. In words, if $i-$th coordinate is a bad coordinate then all the entries in the $i-$th column of $\trainset$ is zero. 
Let $\badset \in \{0,1\}^d$ denote a vector such that $\badset(i)=1$ if and only if $i$ is a bad coordinate. Also $\norm{\badset}_0$ denotes the number of bad coordinates.

Next, we provide a result which shows that $U$ can be identified with high accuracy by having access to the supersample and bad coordinates. The intuition behind the result is as follows. 
Consider a decision making problem where by having access to $\badset$ and matrix of $\supersample$, we want to find which subset of the supersample contained in the training set, i.e., find $U$. 
First, note that, by definition, in each column of $\supersample$ exactly one sample is chosen for the training set. Also, by the definition of the bad coordinates, we know that if $i \in \range{d}$ is a bad coordinate, then for all $Z \in \trainset$, we have $Z(i)=0$. 
In the next theorem we show that the \emph{uncertainty} about $U$ is small conditioned on $\badset,\supersample$. The idea of the proof is to show that by only considering the bad coordinates we can \emph{distinguish} between the points in each column of the supersample.

\begin{lemma}
\label[lemma]{lem:cmi-badset}
$\entr{U\vert \badset,\supersample} \leq  n \EE[2^{-\norm{\badset}_0}]$.
\end{lemma}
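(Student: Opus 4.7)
The plan is to prove the bound in three steps: subadditivity of entropy across the $n$ coordinates of $U$, reduction of each per-coordinate conditional entropy to the probability of a specific ``ambiguity'' event, and direct computation of that probability via a clean independence between the unchosen samples and $\badset$.

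First I would apply subadditivity to write $\entr{U \vert \badset, \supersample} \leq \sum_{i=1}^n \entr{U_i \vert \badset, \supersample}$, and then, for each $i \in \range{n}$, note that given $(\badset, \supersample)$ the variable $U_i$ is necessarily supported on the set $A_i = \{v \in \{0,1\} : \tilde Z_{v,i}(k) = 0 \text{ for all } k \text{ with } \badset(k) = 1\}$. Since the chosen sample $\tilde Z_{U_i,i}$ belongs to $\trainset$ and hence vanishes on every bad coordinate, we always have $U_i \in A_i$, so $1 \leq |A_i| \leq 2$. Letting $E_i$ denote the event $\{|A_i|=2\}$---i.e., both samples in column $i$ of $\supersample$ vanish on every bad coordinate---the generic bound $\entr{U_i \vert \badset, \supersample} \leq \EE[\log|A_i|]$ (with entropy and logarithm in bits) immediately yields $\entr{U_i \vert \badset, \supersample} \leq \Pr(E_i)$.

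The next step is to compute $\Pr(E_i)$. The key observation is an independence property: conditional on $U$, the chosen samples $(\tilde Z_{U_j,j})_{j=1}^n = \trainset$ and the unchosen samples $(\tilde Z_{1-U_j,j})_{j=1}^n$ are two independent i.i.d.\ arrays, each with uniform marginals on $\{0,1\}^d$, because $\supersample$ has mutually independent uniform entries. Since $\badset$ is a deterministic function of $\trainset$ alone, it follows that conditional on $(U, \badset)$ the unchosen sample $\tilde Z_{1-U_i,i}$ is still uniform on $\{0,1\}^d$ and independent of $\badset$. Consequently, the probability that $\tilde Z_{1-U_i,i}$ vanishes on all $\norm{\badset}_0$ bad coordinates is exactly $2^{-\norm{\badset}_0}$; combined with the fact that the chosen sample vanishes on bad coordinates by definition, this gives $\Pr(E_i \vert \badset) = 2^{-\norm{\badset}_0}$ and hence $\Pr(E_i) = \EE[2^{-\norm{\badset}_0}]$.

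Putting the pieces together, $\entr{U \vert \badset, \supersample} \leq \sum_{i=1}^n \Pr(E_i) = n \EE[2^{-\norm{\badset}_0}]$. The main subtlety I expect to justify carefully is the independence underlying the second step: although $\badset$ is a random variable that depends on both $U$ and $\supersample$, it depends on them only through $\trainset$, and within each column $i$ the ``other'' sample $\tilde Z_{1-U_i,i}$ is---once $U$ is fixed---a fresh uniform draw independent of $\trainset$. This single observation is what converts the entropy inequality into the claimed $\EE[2^{-\norm{\badset}_0}]$ factor.
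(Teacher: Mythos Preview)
Your proposal is correct and follows essentially the same route as the paper. The paper's proof defines $J_k = \indic{\exists i \in \mathcal{B}: \tilde Z_{0,k}(i) \neq \tilde Z_{1,k}(i)}$ (so $J_k = 0$ is exactly your event $E_k$), observes that $U_k$ is determined when $J_k = 1$, and bounds $\entr{U\vert \badset,\supersample} = \entr{U\vert \badset,\supersample,J} \leq n - \EE[\norm{J}_0]$ via a cardinality argument; you instead use subadditivity $\entr{U\vert \badset,\supersample}\leq \sum_i \entr{U_i\vert \badset,\supersample}$ and bound each summand by $\Pr(E_i)$. Both arguments reduce to the identical independence computation---the unchosen sample $\tilde Z_{1-U_i,i}$ is uniform on $\{0,1\}^d$ and independent of $\badset$ once $U$ is fixed, giving $\Pr(E_i\vert\badset)=2^{-\norm{\badset}_0}$---so the two proofs are essentially the same, with your subadditivity decomposition arguably a touch more direct than the paper's detour through the $J$-vector.
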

\begin{proof}
Let  $\mathcal{B} = \{i_1,\dots,i_{\norm{\badset}_0}\}\subseteq \range{d}$ contains the \textit{ordered} set of bad coordinates.
 For every $k \in \range{n}$, define the following indicator random variable
\[
\nonumber
J_k = \indic{ \exists i \in \mathcal{B}~\text{s.t.}~ \tilde{Z}_{0,k}(i) \neq \tilde{Z}_{1,k}(i)   }
\]
Let $J=(J_1,\dots,J_n)\in \{0,1\}^n$. Note that $J$ is $(\supersample,\badset)$-measurable. 

The main observation here is that provided that $J_k=1$, then we can perfectly recover $U_k$. The reason is as follows: in each column of $\supersample$, exactly one sample is a member of the training set. Also, since we know $\badset$, the values of the bad coordinates are known for the points in the training set by the definition of bad coordinates. Therefore,  $J_k=1$ iff one of the point in the $k$-th column of $\supersample$ does not have zero on the indices in $\mathcal{B}$, which reveals the sample that is not in the training set. 
Therefore, as $J$ is $(\supersample,\badset)$-measurable, we can write
\begin{equation*}
\begin{aligned}
    \entr{U\vert \badset,\supersample} &= \entr{U\vert \badset,\supersample, J }\\
                                   &= \entr{(U)_{\{i \vert J_i=0\}},(U)_{\{i \vert J_i=1\}}\vert \badset,\supersample, J } \\
                                   &=  \entr{(U)_{\{i \vert J_i=0\}}\vert \badset,\supersample, J },
\end{aligned}
\end{equation*}
where the last line follows from $(U)_{\{i \vert J_i=1\}}$ being known from $J$. Since the cardinality of the support of $(U)_{\{i\vert J_i=0\}}$ is no more than $2^{n-\norm{J}_0}$, we obtain 
\[
\nonumber
 \entr{U\vert \badset,\supersample}\leq  n-\EE[\norm{J}_0].
\]
Then, we claim that
\[
\nonumber
\Pr(J_k=1) & = \EE[1-2^{-\norm{\badset}_0}]\nonumber.
\]
This claim conclude the proof since $\EE[\norm{J}_0]=\sum_{k=1}^{n}\EE[J_k]=\sum_{k=1}^{n}\Pr(J_k=1)$.

To prove the claim: $J_k=0$ iff, conditioned on the $U$ and $\badset$, for all $j$ such that $\badset_j=1$, $Z_{1-U_k,k}(j)=0$.
By the definition of the supersample, the points in the supersample are \iid~, independent of $U$, and drawn from $\bernoulli(1/2)$. Hence, 
\[
\nonumber
\Pr(J_k=0)=\EE[\cPr{U,\trainset}{J_k=0}]=\EE[2^{-\norm{\badset}_0}].
\]

\end{proof}
By the definition of mutual information, we have
\[
\nonumber
\cmi &= \entr{U\vert \supersample}- \entr{U\vert \sw_T,\supersample}\\
    &=\entr{U}- \entr{U\vert \sw_T,\supersample}\nonumber\\
    &=n- \entr{U\vert \sw_T,\supersample},\label{eq:cmi-simplify}
\]
where the second and third steps follow from $U \indep \supersample$ and $\entr{U}=n$, respectively. To analyze the second term in \cref{eq:cmi-simplify}, consider the following equality which comes from the chain rule:
\[
\nonumber
\entr{U,\badset\vert \sw_T,\supersample}&= \entr{U\vert \sw_T,\supersample}+\entr{\badset\vert U,\sw_T,\supersample}\\
                                &=\entr{\badset\vert \sw_T,\supersample}+\entr{U\vert \sw_T,\supersample,\badset}. \nonumber
\]
Notice that $\entr{B\vert U,\sw_T,\supersample}=0$ as $\badset$ is $(U,\supersample)$-measurable. Therefore, 
\[
\label{eq:cmi-decompose-secondterm}
\entr{U\vert \sw_T,\supersample} =\entr{\badset\vert \sw_T,\supersample}+\entr{U\vert \sw_T,\supersample,\badset}.
\]
To analyze the first term, note that conditioning cannot increase the entropy. Therefore, we have $\entr{\badset\vert \sw_T,\supersample}\leq \entr{\badset\vert \sw_T}$. Then, we invoke the Fano's inequality from \cref{lem:fano} to obtain
\[
\nonumber
\entr{\badset \vert \sw_T}\leq 1 + \proberror  \entr{\badset}.
\]
Here,  $\proberror =\inf_{M:\parspace \to \{0,1\}^d} \Pr(M(\sw_T)\neq \badset)$. Consider the estimator $\Psi$ proposed in \cref{eq:def-estimator}. We analyzed its probability of error in \cref{subsec:small-var} and obtained in \cref{eq:proberror-final} that
\[
\nonumber
\proberror  \leq n^2 2^n \exp(- 2^n /n) + 6\exp(-n^2/18).
\]
Note that $\entr{\badset}\leq d(n+1)2^{-n}\leq 2n^3$ for $n\geq 3$ as shown in \cref{subsec:small-var}. Therefore, 
\[
\label{eq:cmi-entr-firstterm-final}
\entr{\badset\vert \sw_T,\supersample} \leq \entr{\badset \vert \sw_T} \leq 2n^3(n^2 2^n \exp(- 2^n /n) + 6\exp(-n^2/18) ) + 1.
\]
Next, we analyze the second term in  \cref{eq:cmi-decompose-secondterm}. Using \cref{lem:cmi-badset} we have 
\[
\label{eq:cmi-entr-secondterm-1}
\entr{U\vert \sw_T,\supersample,\badset}  \leq \entr{U\vert \supersample,\badset} \leq   n\EE[2^{-\norm{\badset}_0}].
\]
The, consider
\[
\nonumber
\EE[2^{-\norm{\badset}_0}] = \EE[2^{-\norm{\badset}_0} \indic{T/2\leq \norm{\badset}_0\leq T}] +  \EE[2^{-\norm{\badset}_0} (\indic{\norm{\badset}_0< T/2} + \indic{\norm{\badset}_0> T} )].
\]
The second term can be upper bounded by $\Pr(\{\norm{\badset}_0< T/2\} \cup \{\norm{\badset}_0> T\})$, and this probability is less than $2\exp(-T/36)$ as shown in \cref{eq:bad-coordinate-num}. By simply upper bounding the first term by the worst-case realization, we can write 
\[
\nonumber
\EE[2^{-\norm{\badset}_0}] &\leq 
\EE[2^{-T/2} \indic{T/2\leq \norm{\badset}_0\leq T}] +  \Pr(\{\norm{\badset}_0< T/2\} \cup \{\norm{\badset}_0> T\})\\
\label{eq:bad-coordinate-exponent}
&\leq 2^{-T/2} + 2\exp(-T/36).
\]
Finally, by \cref{eq:cmi-entr-secondterm-1} and \cref{eq:bad-coordinate-exponent}, we obtain
\[
\label{eq:cmi-entr-secondterm-final}
\entr{U\vert \sw_T,\supersample,\badset} \leq n(2^{-T/2} + 2\exp(-T/36)).
\]
The last step is combining \cref{eq:cmi-decompose-secondterm}, \cref{eq:cmi-entr-firstterm-final}, and \cref{eq:cmi-entr-secondterm-final} to lower bound $\cmi$ as 
\[
\nonumber
\cmi &= n-\entr{U\vert \sw_T,\supersample}\\
     &\geq n - \sbra{n2^{-n^2} + n\exp(-n^2/18) +  2n^5 2^n \exp(- 2^n /n) + 12n^3\exp(-n^2/18)  + 1} \label{eq:final-upper-bound-cmi}
\]
\cref{fig:cmi-entr} shows the upper bound on $n-\cmi$ in \cref{eq:final-upper-bound-cmi} as a function of $n$. As seen for $n\geq 16$, $\cmi \geq n-1.1 $, and the lower bound on $\cmi$ is increasing.
\begin{figure}[H]
    \centering
    \includegraphics[scale=0.45]{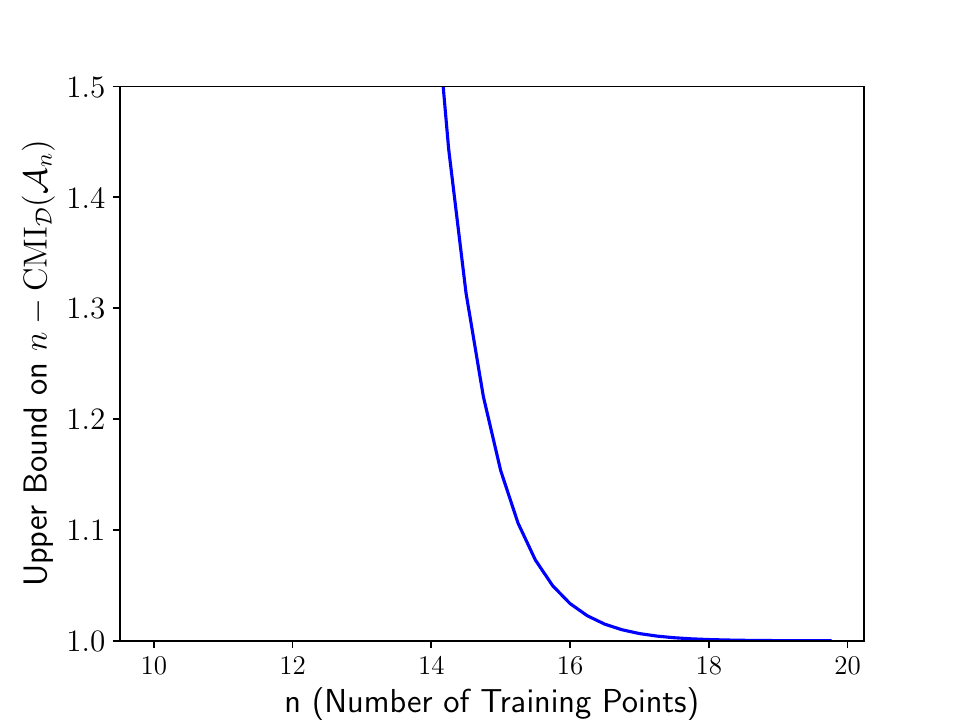}
    \caption{Upper bound on $n-\cmi$ in \cref{eq:final-upper-bound-cmi}.}
    \label{fig:cmi-entr}
\end{figure}
Hence, we obtain 
\[
\nonumber
\cmi\geq \Omega(n),
\]
which was to be shown.

\section{Proof of \cref{thm:main-pacbayes} }
The construction for proving this theorem is exactly the same as in \cref{thm:gd-noisy}.
\label{appx:pac-bayes-pf}
\subsection{Lower Bound on the Residual}
For the case that $\sigma^2 \leq \mathrm{var}_{(n)}^\star$, we showed in \cref{thm:gd-noisy} that for sufficently large $n$, $\EE[\respop(\ww_T) +\resemp(\ww_T)]=M_{\text{res}} \in \Omega(1)$. Since the loss function is $4$-Lipschitz and the space has radius of $1$, we have
\[
\nonumber
\respop(\ww_T) + \resemp(\ww_T)\leq 2L \norm{\ww_T-\sw}\leq 4LR=16 \quad \as
\]
Then, we invoke \cref{lem:reverse-markov} with $m=\tilde{m}=16$ and $a=M_{\text{res}}/2$ to obtain 
\[
\label{eq:hp-lowerbound-res}
\Pr(\respop(\ww_T) + \resemp(\ww_T) > M_{\text{res}}/2) \geq \frac{M_{\text{res}}}{32}.
\]
\subsection{Lower Bound on the Conditional-PAC Bayes Bound}
First of all,  \cref{lem:mixture-num} implies that $\cEE{\trainset}{[\KL{Q(\trainset)}{\frac{1}{2^n}\sum_{u \in \{0,1\}^n}Q(\tilde{S}_u)}]}$ is bounded by $n$ a.s. 
In \cref{thm:gd-noisy} we showed that given $\sigma^2 \geq \mathrm{var}_{(n)}^\star$ for sufficiently large $n$, we have 
$$
\EE[\cEE{\trainset}{[\KL{Q(\trainset)}{\frac{1}{2^n}\sum_{u \in \{0,1\}^n}Q(\tilde{S}_u)}]}]=\cmi \geq 0.2n.
$$
Then, we use \cref{lem:reverse-markov}, with the following parameters: $\tilde{m}=m=n$ and $a=0.1 n$ to obtain  
\[
\label{eq:hp-lowerbound-cmi}
\displaystyle \Pr\Big( \cEE{\trainset}{[\KL{Q(\trainset)}{\frac{1}{2^n}\sum_{u \in \{0,1\}^n}Q(\tilde{S}_u)}]} > 0.1 n\Big)\geq \frac{\cmi-0.1n}{n-0.1n}\geq \frac{1}{9}.
\]
\subsection{Lower Bound on the Classical PAC-Bayes Bound}
For every $s\in \{0,1\}^{n\times d}$, let $Q(s)$ denote the posterior, and $P=\EE[Q(\tilde{\trainset})]$ denote the prior. By construction, the training set $\trainset$ takes all the values in $\{0,1\}^{n\times d}$ uniformly at random. Therefore, by \cref{lem:mixture-num}, we have  
\[
\nonumber
\KL{Q(\trainset)}{P}=\KL{Q(\trainset)}{\frac{1}{2^{nd}}\sum_{s \in \{0,1\}^{n\times d}}Q(s)}\leq nd \quad  \text{a.s.}
\]
Consider the estimator $\Psi: \parspace \to \{0,1\}^d$ in \cref{eq:def-estimator}. For every $s \in \{0,1\}^d$, let $\hat{Q} :\{0,1\}^{n\times d} \to \ProbMeasures{\{0,1\}^d}$ be the pushforward of $Q(s)$ through $\Psi$. Similarly, we can define $\hat{P} \in \ProbMeasures{\{0,1\}^d}$ as the pushforward of $P$ using $\Psi$. 

By the data-processing inequality for the KL divergence \citep{polyanskiy2014lecture}, we have
\[
\label{eq:dataprocessing-kl}
\KL{Q(S)}{P} \geq \KL{\hat{Q}(S)}{\hat{P}} \quad \as
\]
We claim that $\hat{P}=\EE[\hat{Q}(\trainset)]$. By a slight abuse of notation, for every $b\in \{0,1\}^d$, let $\hat{P}(b)$ denote the probability assigned to $b$ by $\hat{P}$. Also, for every $s \in \{0,1\}^{n\times d}$ and a (measurable) set $A \subseteq \parspace$ let $Q(s)(A)$ be the measure assigned to set $A$ by $Q(s)$.  Similarly, we can define $P(A)$. Equipped with these notations, we can write
\begin{align*}
\hat{P}(b)&= \int_{w\in \parspace} P(dw) \indic{\Psi(w)=b}\\
          &=  \int_{w\in \parspace} \frac{1}{2^{nd}} \sum_{s \in \{0,1\}^{n \times d}} Q(s)(dw) \indic{\Psi(w)=b}\\
          &=\frac{1}{2^{nd}} \sum_{s \in \{0,1\}^{n \times d}} \int_{w\in \parspace} Q(s)(dw) \indic{\Psi(w)=b}.
\end{align*}
Here, the second step is by the definition of the prior, and the last step follows from Fubini's theorem. Notice that the expression in the last step is $\EE[\hat{Q}(\trainset)]$ as was to be shown.

Recall the definition of the bad coordinates. Define the \emph{bad coordinate profile} of $s \in \{0,1\}^{n\times d}$ as a binary vector of length $d$ such that its $i-$coordinate is one if and only if $i$ is a bad coordinate, and it is zero otherwise. For every $b \in \{0,1\}^d$, define set 
$$
\mathcal{S}_b=\{s\in \{0,1\}^{n \times d} \vert  \text{bad coordinate profile of $s$ is $b$ } \}.
$$
By construction, each coordinate is a bad coordinate independently with probability $2^{-n}$. Therefore
\[
\label{eq:prob-badcoord-profile}
\Pr(\trainset \in  \mathcal{S}_b) = 2^{-n\norm{b}_0} (1-2^{-n})^{d-\norm{b}_0}.
\]

In what follows, for every $b \in \{0,1\}^d$ that satisfies $T/2 \leq \norm{b}_0\leq T$, we provide an upper bound on $\KL{\hat{Q}(s)}{\hat{P}}$ given $s \in \mathcal{S}_b$. We can write
\[
\nonumber
\KL{\hat{Q}(s)}{\hat{P}} &= \KL{\hat{Q}(s)}{\EE[\hat{Q}(\trainset)]}\\
&\leq \KL{\hat{Q}(s)}{ \Pr(\trainset \in \mathcal{S}_b ) \cEE{\trainset\in \mathcal{S}_b }{[\hat{Q}(\trainset)]} + \Pr(\trainset \not\in \mathcal{S}_b ) \cEE{\trainset\not\in \mathcal{S}_b }{[\hat{Q}(\trainset)}] }.  \nonumber
\]
The last line follows from the law of total expectation. Then, we invoke \cref{lem:mixture-bound}, to obtain 
\[
&\KL{\hat{Q}(s)}{ \Pr(\trainset \in \mathcal{S}_b ) \cEE{\trainset\in \mathcal{S}_b }{[\hat{Q}(\trainset)]} + \Pr(\trainset \not\in \mathcal{S}_b ) \cEE{\trainset\not\in \mathcal{S}_b }{[\hat{Q}(\trainset)]}  }  \nonumber \\
&\leq -\log(\Pr(\trainset \in \mathcal{S}_b )) +  \KL{\hat{Q}(s)}{ \cEE{\trainset \in \mathcal{S}_b}{[\hat{Q}(\trainset)]}}. \nonumber
\]
First, we analyze $\log(\Pr(\trainset \in \mathcal{S}_b ))$. By \cref{eq:prob-badcoord-profile}, we have
\[
\nonumber
-\log(\Pr(\trainset \in \mathcal{S}_b )) = n\norm{b}_0 + (d-\norm{b}_0) \log\Big(\frac{1}{1-2^{-n}}\Big). 
\]
Since $T/2 \leq \norm{b}_0\leq T$, we have $n\norm{b}_0  \leq nT$. Then, using the inequality $-\log(1-x)\leq \frac{x}{1-x}$ for $x\leq 1$, we obtain  $-\log(1-2^{-n}) \leq \nicefrac{2^{-n}}{1-2^{-n}}$.   Therefore, 
\begin{align*}
    (d-\norm{b}_0) \log\Big(\frac{1}{1-2^{-n}}\Big) &\leq d \log\Big(\frac{1}{1-2^{-n}}\Big)\\
                                            & \leq  \frac{d 2^{-n}}{1-2^{-n}}\\
                                            & \leq 2d 2^{-n}.
\end{align*}
Finally setting $d=0.75 T 2^n$, we obtain the following upper bound
\[
\label{eq:log-prob-term}
-\log(\Pr(\trainset \in \mathcal{S}_b )) \leq \frac{5}{2} nT.
\]
Next, we provide an upper bound on $\KL{\hat{Q}(s)}{ \cEE{\tilde{S} \in \mathcal{S}_b}{[\hat{Q}(\tilde{S})]}}$. In \cref{eq:prob-error-estimator-step1}, we analyzed the error probability of the estimator $\Psi$ conditioned on the training set. In particular, we proved that for every training set whose number of bad coordinates is between $T/2$ and $T$, we have almost surely
\[
\nonumber
\cPr{\trainset}{ \exists i \in \range{d} ~  \Psi(\ww_T +\xi)(i) \neq b(i)} &\leq n^2 2^n \exp(-2^n/n)\\
            &\triangleq p_{\text{error}}.   \nonumber
\]
It implies that for all $s \in \mathcal{S}_b$ with $T/2\leq \norm{b}_0\leq T$, $\hat{Q}(s)(b)\geq 1- p_{\text{error}}$ and $\sum_{b'\neq b}\hat{Q}(s)(b')\leq p_{\text{error}}$. For notational convenience let $\cEE{\trainset \in \mathcal{S}_b}{[\hat{Q}(\trainset)]}\triangleq Q_b$. By the definition of the KL divergence, we can write
\[
\KL{\hat{Q}(s)}{\hat{Q}_b} &= \sum_{b' \in \{0,1\}^d} \hat{Q}(s)(b') \log\Big(\frac{\hat{Q}(s)(b')}{\hat{Q}_b(b')}\Big) \nonumber \\
&= \hat{Q}(s)(b) \log\Big(\frac{\hat{Q}(s)(b)}{\hat{Q}_b(b)}\Big)  + \sum_{b' \in \{0,1\}^d,b'\neq b}  \hat{Q}(s)(b') \log\Big(\frac{\hat{Q}(s)(b')}{\hat{Q}_b(b')}\Big). \label{eq:kl-bad-coordinates}
\]
Since for all $s \in \mathcal{S}_b$, $\hat{Q}(s)(b)\geq 1- p_{\text{error}}$, we have $ \hat{Q}_b(b)\geq 1-p_{\text{error}}$. Therefore, we have
\[
\nonumber
\hat{Q}(s)(b) \log\Big(\frac{\hat{Q}(s)(b)}{\hat{Q}_b(b)}\Big) &\leq \hat{Q}(s)(b) \log\Big(\frac{\hat{Q}(s)(b)}{1- p_{\text{error}}}\Big)\\
&\leq -\log(1-p_{\text{error}}) \label{eq:kl-term-hp-1}.
\]
The last step follows from $0 \leq \hat{Q}(s)(b) \leq 1$. Conditioned on $\trainset \in \mathcal{S}_b$, the distribution of the training set is uniform over the set $\mathcal{S}_b$. Using this observation, for every $b' \in \{0,1\}^d$, we can write
\[
\nonumber
\log\Big(\frac{\hat{Q}(s)(b')}{\hat{Q}_b(b')}\Big) & =  \log\Big(\frac{\hat{Q}(s)(b')}{\frac{1}{|\mathcal{S}_b|}\sum_{b' \in \mathcal{S}_b}\hat{Q}(s)(b')} \Big)  \nonumber \\
&\leq \log(|\mathcal{S}_b|) \nonumber\\
&\leq \log(2^{nd})\nonumber.
\]
Therefore, we have 
\[
\nonumber
\sum_{b' \in \{0,1\}^d,b'\neq b}  \hat{Q}(s)(b') \log\Big(\frac{\hat{Q}(s)(b')}{\hat{Q}_b(b')}\Big) &\leq nd \sum_{b' \in \{0,1\}^d,b'\neq b}  \hat{Q}(s)(b') \\
&\leq nd p_{\text{error}}. \label{eq:kl-term-hp-2}
\]
By \cref{eq:log-prob-term}, \cref{eq:kl-bad-coordinates}, \cref{eq:kl-term-hp-1}, and \cref{eq:kl-term-hp-2}, we obtain
\[
-\log(\Pr(\trainset \in \mathcal{S}_b )) +  \KL{\hat{Q}(s)}{ \cEE{\tilde{S} \in \mathcal{S}_b}{[\hat{Q}(\tilde{S})]}} &\leq \frac{5}{2}nT  -\log(1-p_{\text{error}}) + nd p_{\text{error}} \nonumber\\
&\leq \frac{5}{2}nT + \frac{p_{\text{error}}(nd+1)}{1-p_{\text{error}}}. \nonumber
\]
Setting the parameters, we can see that $\frac{p_{\text{error}}(nd+1)}{1-p_{\text{error}}} \leq 1$ for $n\geq 8$. 

Thus, we obtain that for every $s \in \mathcal{S}_b$ such that $T/2\leq \norm{b}_0\leq T$, we have
\[
\label{eq:good-ub-sb}
\KL{\hat{Q}(s)}{\hat{P}} \leq \frac{5}{2}nT + 1,
\]
for $n\geq 8$.

Note that the upper bound in \cref{eq:good-ub-sb} provides a \emph{uniform} upper bound for every $s \in \mathcal{S}_b$ such that $T/2\leq \norm{b}_0\leq T$. Therefore, by a simple contraposition we have
\[
\nonumber
&\{s \in \{0,1\}^{n \times d} \vert \text{the number of bad coordinates of $s \in \left\{T/2,\dots,T \right\}$  } \} \nonumber \\
 &\subseteq \{s \in \{0,1\}^{n \times d} \vert  \KL{\hat{Q}(s)}{\hat{P}} \leq \frac{5}{2}nT + 1\}. \nonumber
\]
By considering the complement of the above statement we obtain
\[
\nonumber
 & \{s \in \{0,1\}^{n \times d} \vert  \KL{\hat{Q}(s)}{\hat{P}} > \frac{5}{2}nT + 1\} \nonumber\\
 &\subseteq \{s \in \{0,1\}^{n \times d} \vert \text{the number of bad coordinates of $s \notin \left\{T/2,\dots,T \right\}$  } \}. \nonumber 
\]
Therefore, we have 
\[
\Pr( \KL{\hat{Q}(\trainset)}{\hat{P}} > \frac{5}{2}nT + 1) &\leq 1-\Pr({T}/{2}\leq \norm{\badset}_0 \leq T) \nonumber\\
                                                            &\leq  2\exp(-{T}/{36}) \nonumber. 
\]
Here, the line follows from \cref{eq:bad-coordinate-num}.

Next, we provide a lower bound on $\EE[\KL{\hat{Q}(\trainset)}{\hat{P}}]$. Let random variable $\badset$ denote the bad coordinate profile of $\trainset$. Notice that $\EE[\KL{\hat{Q}(\trainset)}{\hat{P}}]=\minf{\trainset;\hat{\badset}}$ where $\hat{\badset}$ is the estimate of $\badset$ using the estimator $\Psi$. We have $\minf{\hat{\badset};\trainset} = \entr{\trainset} - \entr{\trainset\vert \badset}$. By construction, $ \entr{\trainset}  = nd$. Since $\badset$ is a function of $\trainset$, we have $ \entr{\trainset,\badset \vert \hat{\badset}}=\entr{\trainset \vert \hat{\badset}}$. Then, by the chain rule for the entropy we can write $\entr{\trainset,\badset \vert \hat{\badset}} = \entr{\badset \vert \hat{\badset}} + \entr{\trainset \vert \badset, \hat{\badset}}$. By conditioning on $\badset$, we know the exact values for the bad coordinates in $\trainset$. Therefore, the cardinally of the possible values for each data-point, conditioned on $\badset$, cannot be more that $2^{d-\norm{\badset}_0}$. Therefore, we have $\entr{\trainset \vert \badset, \hat{\badset}} \leq n(d-\EE[\norm{\badset}_0])$ which gives us $\entr{\badset \vert \hat{\badset}} + \entr{\trainset \vert \badset, \hat{\badset}}\leq \entr{\badset \vert \hat{\badset}} + n(d-\EE[\norm{\badset}_0])$. By Fano's inequality in \cref{lem:fano}, we have $\entr{\badset \vert \hat{\badset}}\leq 1+ \Pr(\hat{\badset}\neq \badset) \entr{\badset}$.   Therefore, we obtain 
\[
\nonumber
\EE[\KL{\hat{Q}(\trainset)}{\hat{P}}] &=  \minf{\trainset;\hat{\badset}} \nonumber\\
&\geq n \EE[\norm{\badset}_0] -1 - \Pr(\hat{\badset}\neq \badset) \entr{\badset} \nonumber \\
\nonumber &\geq nd2^{-n} - (n+1)d2^{-n} \Pr(\hat{\badset}\neq \badset) - 1\\
\nonumber &\geq 1.5 n^3 (1-2\Pr(\hat{\badset}\neq \badset))-1.
\]
Here, we used the following facts. $\EE[\norm{\badset}_0] = \EE[ \sum_{i=1}^{d}\badset(i)] = d \EE[\badset(1)]=d2^{-n}$ since each element of $\trainset$ is \iid and each column is a bad coordinate with probability $2^{-n}$. Also, with the similar reasoning we obtain $\entr{\badset}=\entr{\badset(1),\dots,\badset(d)}=d\binaryentr{2^{-n}}$, where for $x\in [0,1]$ $\binaryentr{x}=-x\log(x)-(1-x)\log(1-x)$ is the binary entropy function. Also, we have used the the well-known inequality $\binaryentr{x} \leq -x\log(x) + x$. Then, our analysis of the error probability of the estimator $\Psi$ in \cref{subsec:small-var} implies that for $n\geq 10$, the following lower bound holds:
\[
\nonumber
\EE[\KL{\hat{Q}(\trainset)}{\hat{P}}] \geq  1.2n^3 - 1.
\]
In the next step, we invoke \cref{lem:reverse-markov} with the following parameters $\hat{m}=nd$, $m=\frac{5}{2}nT + 1=5n^3+1$, and $a = 0.6 n^3 - 0.5$ to write
\[
&\Pr(\KL{\hat{Q}(\trainset)}{\hat{P}} \geq 0.6 n^3 - 0.5 ) \nonumber \\
&\geq \frac{\EE[\KL{\hat{Q}(\trainset)}{\hat{P}}]-a - (nd-(\frac{5}{2}nT + 1))\Pr(X\geq \frac{5}{2}nT + 1)}{5n^3 +1 - a} \nonumber \\
&\geq  \frac{0.6 n^3 - 0.5 - 3n^3 2^n \exp(-n^2/18)}{4.4 n^3 + 1.5}\nonumber.
\]
By numerical evaluations, we can see that the lower bound is greater than $0.1$ for $n\geq 16$. 

From \cref{eq:dataprocessing-kl}, we have 
\[
\Pr(\KL{Q(\trainset)}{P} > 0.6 n^3 - 0.5 ) &\geq \Pr(\KL{\hat{Q}(\trainset)}{\hat{P}} \geq 0.6 n^3 - 0.5 ) \nonumber \\
&\geq 0.1, \label{eq:hp-lowerbound-mi}
\]
for $n\geq 16$ as was to be shown.
\subsection{Concluding the Proof}
In summary, in \cref{eq:hp-lowerbound-res}, \cref{eq:hp-lowerbound-cmi}, and \cref{eq:hp-lowerbound-mi}, we have shown there exist constants $\alpha_1 \in \mathbb{R}_+$, $\alpha_2 \in \mathbb{R}_+$, $\alpha_3 \in \mathbb{R}_+$, $\beta_1 \in (0,1)$, and $\beta_2 \in (0,1)$ such that for sufficiently large $n$,
\begin{enumerate}
    \item $\Pr\Big(\respop(\ww_T) + \resemp(\ww_T) > \alpha_1~\text{or}~\frac{\cEE{\trainset}{[\KL{Q(\trainset)}{\frac{1}{2^n}\sum_{u \in \{0,1\}^n}Q(\tilde{S}_u)}]}}{n} > \alpha_2\Big)\geq 1-\beta_1$.
    \item $\Pr\Big(\respop(\ww_T) + \resemp(\ww_T) > \alpha_1~\text{or}~\frac{\KL{Q(\trainset)}{\EE[Q(\trainset)]}}{n} > \alpha_3 \Big)\geq 1-\beta_2$.
\end{enumerate}
For notational convenience, let $\mathrm{Bad~Event}_1$ and  $\mathrm{Bad~Event}_2$ denote the first and second event above.

Next, we show how this result implies the failure of PAC-Bayes bounds. Consider the decomposition of the generalization error of GD with respect to the surrogate
\[
\nonumber
\cEE{\trainset} \sbra{\Popriskcvx{\ww_T}- \Empriskcvx{\ww_T}} &\leq \cEE{\trainset} \sbra{\Popriskcvx{\sw}- \Empriskcvx{\sw}} + \resemp(\ww_T) + \respop(\ww_T), 
\]
Let $\complexity{n}$ denote both $C_\textnormal{clas}(n)\triangleq \KL{Q(\trainset)}{\EE[Q(\trainset)]}$ and $C_\textnormal{cond}(n)\triangleq\cEE{\trainset}{[\KL{Q(\trainset)}{\frac{1}{2^n}\sum_{u \in \{0,1\}^n}Q(\tilde{S}_u)}]}$. Let $\delta < 1-\max\{\beta_1,\beta_2\}$. Assume we instantiate the PAC-Bayes bounds with the confidence of $1-\delta$. Then, by a simple application of the union bound we have
\[
\Pr\bigg(\bigg\{\cEE{\trainset} \sbra{\Popriskcvx{\sw}- \Empriskcvx{\sw}} &%
\in \bigO{LR\sqrt{\frac{ C_\textnormal{clas}(n) + \log(\nicefrac{n}{\delta}) }{n}}%
}\bigg\}  \nonumber \\
& ~\text{and}~\mathrm{Bad~Event}_1\bigg) \geq 1-\delta-\beta_1,  \nonumber\\
\Pr\bigg(\bigg\{\cEE{\trainset} \sbra{\Popriskcvx{\sw}- \Empriskcvx{\sw}} &%
\in \bigO{LR\sqrt{\frac{ C_\textnormal{cond}(n)+ \log(\nicefrac{n}{\delta}) }{n}}%
}\bigg\} \nonumber\\ 
&~\text{and}~\mathrm{Bad~Event}_2\bigg) \geq 1-\delta-\beta_2. \nonumber
\]
Thus, we conclude that with probability at least $1-\delta-\max\{\beta_1,\beta_2\}$ (over the randomness in the training set) for every $\sigma$ we have
\[
\nonumber
\max\{LR\sqrt{\frac{\complexity{n} + \log(\nicefrac{n}{\delta}) }{n}} , \resemp(\ww_T) + \respop(\ww_T)\} \in \Omega(1),
\]
as was to be shown.
\section{Proof of \cref{th:ecmi-failining-example}}
\label{pf:ecmi-fails}
Let $\dimcvx \in \Naturals$ and $\dataspace = \lbrace \coorvec{i} : i \in \dimcvx \rbrace$, that is, the set of all coordinate vectors in $\lbrace 0, 1 \rbrace^\dimcvx$, where $$\coorvec{i} = (\underbrace{0, \ldots, 0}_{i-1 \textnormal{ times}}, 1, \underbrace{0, \ldots, 0}_{\dimcvx-i \textnormal{ times}}).$$
Let the data distribution on the input be the uniform distribution, that is $\Dist = \textnormal{Uniform}(\dataspace)$. Then, we consider the simple convex, $1$-Lipschitz loss function $\losscvx(w,z) = - \langle w, z \rangle$. Moreover, we consider that the weights $w$ are in a unit ball on $\Reals^\dimcvx$, that is $\parspace = \lbrace w : \norm{w} \leq 1 \rbrace$. Therefore, the problem is in the CLB class.

Next, we analyze the dynamics of GD. The empirical loss is given by 
\begin{equation*}
    \Empriskcvx{w}=-\inner{w}{\empmean},
\end{equation*}
where $\empmean$ is the empirical mean of the instances in the training set, i.e., $\empmean = \frac{1}{n} \sum_{i=1}^n Z_i$. Also, we have that $\partial \Empriskcvx{w} = - \empmean$ for all $w \in \parspace$. Considering the update rule of GD, i.e. $W_{t+1} = \proj(W_t + \eta \empmean)$, one can show by induction that
\begin{equation}
    \label{eq:dynamic-gd-linear}
    \ww_t = 
    \begin{cases}
        \eta t \empmean  & \eta t \norm{\empmean}\leq 1 \\
        \frac{\empmean}{\norm{\empmean}} &\text{ Otherwise}
    \end{cases}.
\end{equation}

Now consider the $\supersample$-measurable random variable $E$ that is equal to one if and only if all the data instances in the supersample are distinct. That is
\begin{equation}
\label{eq:event-distinct}
    E = \indic{ \tilde{Z}_{u,i} \neq \tilde{Z}_{v,j} \textnormal{ for all } i,j \in \range{n} \textnormal{ and all } u,v \in \{ 0, 1 \}}.
\end{equation}

As in the \emph{birthday paradox problem}~\citep[Sec~5]{mitzenmacher2017probability}, we may bound the probability that $E=1$ as follows
\begin{align}
    \Pr(E=1) &= \prod_{k = 0}^{2n -1} \Big(1 - \frac{k}{\dimcvx} \Big)
    \nonumber \\
    &\geq \Big(1 - \frac{2n-1}{\dimcvx} \Big)^{2n-1}, \label{eq:birthday_dim}
\end{align}
This way, we may engineer a dimension $\dimcvx$ for which $\Pr(E=1) \geq c$ for all $n \geq 1$, where $c$ is a constant probability, independent of $n$. Solving for~\cref{eq:birthday_dim} results in
\begin{equation*}
    \dimcvx \geq \frac{2n - 1}{1 - c^{1/(2n-1)}}.
\end{equation*}
For instance, for $c = 0.1$, a dimension $\dimcvx = 2n^2$ suffices, and therefore $\Pr(E=0) \leq 0.9$.
Now, we are ready to study what happens to both the individual conditional mutual information $\minf{W_T;U_i|\tilde{Z}_{0,i},\tilde{Z}_{1,i}}$ and the evaluated mutual information $\ecmigd$ in this particular setting.

\subsection{Individual conditional mutual information}
\label{app:icmi}

Note that the individual CMI may be written as follows
\begin{align}
    \minf{W_T;U_i|\tilde{Z}_{0,i},\tilde{Z}_{1,i}}  &= \entr{U_i \vert \tilde{Z}_{0,i},\tilde{Z}_{1,i}} - \entr{U \vert W_T,\tilde{Z}_{0,i},\tilde{Z}_{1,i}} \nonumber \\
    &= \entr{U_i} - \entr{U_i \vert W_T,\tilde{Z}_{0,i},\tilde{Z}_{1,i}} \nonumber \\
    &= \log 2  - \entr{U_i \vert W_T,\tilde{Z}_{0,i},\tilde{Z}_{1,i}}, \label{eq:birthday_icmi_bound}
\end{align}
where the second and third equations follow from $U_i \perp (\tilde{Z}_{0,i},\tilde{Z}_{1,i})$ and $H(U_i) = \log 2$, respectively. 
Then, we may use Fano's inequality to bound $\entr{U_i \vert W_T,\tilde{Z}_{0,i},\tilde{Z}_{1,i}}$ and obtain the desired result.\footnote{To achieve this, a previous version of the paper instead relied on the false equivalence $\entr{U_i \vert W_T,\tilde{Z}_{0,i},\tilde{Z}_{1,i}} = \entr{U_i \vert W_T,\tilde{Z}_{0,i},\tilde{Z}_{1,i},E}$. This is not correct since $E$ is not $(W_T,\tilde{Z}_{0,i},\tilde{Z}_{1,i})$-measurable. The proof in this version instead relies on the connection between mutual information and Fano's inequality.} More precisely, Fano's inequality states that 
    \begin{align*}
        \entr{U_i \vert W_T,\tilde{Z}_{0,i},\tilde{Z}_{1,i}}
        &\leq \binaryentr{\Pr(U_i \neq \hat{U}_i)},
        \label{eq:birthday_ent_icmi_bound}
    \end{align*}
    for every estimator $\hat{U}_i(W_T, \tilde{Z}_{0,i}, \tilde{Z}_{1,i})$ and where $\binaryentr{\cdot}$ is the binary entropy. Notice that $\hat{U}_i$ is a function of $W_T, \tilde{Z}_{0,i},$ and $\tilde{Z}_{1,i}$. Therefore, showing that $\Pr(U_i \neq \hat{U}_i) < 0.5$ is a constant independent of $n$ ensures that
    \begin{equation}       
        \minf{W_T;U_i|\tilde{Z}_{0,i},\tilde{Z}_{1,i}}  \geq \log 2  - \binaryentr{\Pr(U_i \neq \hat{U}_i)} \in \Omega(1)
        \label{eq:icmi_birthday_bounded}
    \end{equation}
    and completes the proof.

From~\cref{eq:dynamic-gd-linear}, we can see that the non-zero coordinates of $W_T$ are precisely the coordinates of the training samples. That is, if $\tilde{Z}_{U_i,i} = \coorvec{k}$, then $W_T(k) \neq 0$. 
Therefore, under the event $E = 1$ defined in \cref{eq:event-distinct}, one can precisely determine if sample $\tilde{Z}_{0,i}$ or sample $\tilde{Z}_{1,i}$ was used for training after observing $W_T$ since the samples are all distinct. 
In other words, one can completely determine $U_i$ from $(W_T, \tilde{Z}_{0,i}, \tilde{Z}_{1,i})$. More precisely, consider a realization in which $\tilde{Z}_{0,i} = \coorvec{k}$ and $\tilde{Z}_{1,i} = \coorvec{l}$. Then, the estimator $\hat{U}_i(W_T, \tilde{Z}_{0,i}, \tilde{Z}_{1,i})$ is defined  as $\hat{U}_i(W_T, \tilde{Z}_{0,i}, \tilde{Z}_{1,i}) = 0$ if $W_T(k) \neq 0$ and $W_T(l) = 0$; $\hat{U}_i(W_T, \tilde{Z}_{0,i}, \tilde{Z}_{1,i}) = 1$ if $W_T(k) = 0$ and $W_T(l) \neq 0$; otherwise in the case that $W_T(k) \neq 0$ and $W_T(l) \neq 0$ , let  $\hat{U}_i(W_T, \tilde{Z}_{0,i}, \tilde{Z}_{1,i})$ be a Bernoulli random variable with parameter $\nicefrac{1}{2}$ independent of $\supersample$ and $U$. This estimator has a probability of error equal to $0$ given the event $E=1$. Therefore, the probability of error is
\begin{align*}
    \Pr (U_i \neq \hat{U}_i) &= \Pr(E = 0)\cPr{E=0}{U_i \neq \hat{U}_i} + \Pr(E = 1) \cPr{E=1}{U_i \neq \hat{U}_i} \\
    &= \Pr(E = 0)\cPr{E=0}{U_i \neq \hat{U}_i} \\
    &\leq 0.9 \cdot \cPr{E=0}{U_i \neq \hat{U}_i},
\end{align*}
where the last line follows from the construction. Next consider the following random variables 
\begin{equation*}
    G_i = \indic{ \tilde{Z}_{0,i} \neq \tilde{Z}_{1,i} \textnormal{ and } \tilde{Z}_{1-U_i, i} \neq \tilde{Z}_{U_j,j}  \textnormal{ for all } j \neq i \in \range{n} },
\end{equation*}
which describe the situation where the given samples $\tilde{Z}_{0,i}$ and $\tilde{Z}_{1,i}$ are distinct and the sample that is not chosen is also distinct from all other samples in the dataset $S$, even when some of these samples are equal between themselves or to the chosen sample $\tilde{Z}_{U_i,i}$ (e.g. when $E=0$). Therefore, given the event $E=0$ and $G_i = 1$, the estimator $\hat{U}_i$ still has a probability of error equal to zero. Hence, similar to before we may bound the probability of error of the estimator as
\begin{align*}
    \Pr (U_i \neq \hat{U}_i) &\leq 0.9 \cdot \Big( \cPr{E=0}{G_i = 0}\cPr{E = 0, G_i = 0}{U_i \neq \hat{U}_i} + \cPr{E = 0}{G_i = 1} \cPr{E = 0, G_i = 1}{U_i \neq \hat{U}_i}\Big) \\
    &\leq 0.9 \cdot \cPr{E = 0, G_i = 0}{U_i \neq \hat{U}_i}.
\end{align*}

Next, we claim that under the event where $E=0$ and $G_i=0$, the estimator $\hat{U}_i$ is a Bernoulli random variable with parameter $1/2$. Consider a realization in which $U_i = u$, $\tilde{Z}_{0,i} = \coorvec{k}$, and $\tilde{Z}_{1,i} = \coorvec{l}$. Then, we claim that under the event $E=0$ and $G_i=0$, $W_T(k)\neq 0$ and $W_T(l)\neq 0$. The reason is under this event, the following cases may happen: 1) $\tilde{Z}_{0,i}= \tilde{Z}_{1,i}$ or 2) $\tilde{Z}_{0,i} \neq \tilde{Z}_{1,i}$ but there exists another sample in the training set which is equal to $\tilde{Z}_{1-u,i}$. It is easy to see that in these two cases $W_T(k)\neq 0$ and $W_T(l)\neq 0$.

Thus, we conclude that, given $E=0$ and $G_i=0$, we have that $\cPr{E=0,G_i=0}{U_i \neq \hat{U}_i} = \nicefrac{1}{2}$. This is true since $\hat{U}_i$ is a Bernoulli random variable with parameter $\nicefrac{1}{2}$ independent of $U$ and $\tilde{S}$. Therefore, we have that $\Pr (U_i \neq \hat{U}_i) \leq 0.45$, which completes the proof as per \cref{eq:icmi_birthday_bounded}.

\subsection{Evaluated conditional mutual information}

Note that the evaluated CMI may be written as follows
\begin{align}
    \ecmigd &= \minf{\lossveccvx;U\vert \supersample} \nonumber \\
    &= \entr{U\vert \supersample} - \entr{U\vert \supersample,\lossveccvx} \nonumber \\
    &= \entr{U} - \entr{U \vert \lossveccvx, \supersample} \nonumber \\
    &= n \log 2  - \entr{U \vert \lossveccvx, \supersample}, \label{eq:birthday_ecmi_bound}
\end{align}
where the third and fourth equations follow from $U \perp \supersample$ and $H(U) = n \log 2$, respectively.

Then, as in the previous subsection, the proof relies in the fact that $U$ can be completely determined by the loss vector $\lossveccvx$ under the event $E = 1$. More precisely, note that $\lossveccvx_{u,i} = \losscvx(W_T, \tilde{Z}_{u,i}) = - \langle W_T, \tilde{Z}_{u,i} \rangle$. Also, remember from the previous subsection that the non-zero coordinates of $W_T$ are precisely the non-zero coordinates of the samples that are used for training. Therefore, under the event $E=1$, $\lossveccvx_{u,i} = 0$ if and only if $\tilde{Z}_{u,i}$ was not used for training and therefore $Z_i = \tilde{Z}_{1-u,i}$.  Hence, one can completely determine $U$ from $\lossveccvx$ or, equivalently, $\EE \sbra{\centr{\lossveccvx, \supersample, E}{U} \indic{E=1}} = 0$. We may use this fact to bound $\entr{U \vert \lossveccvx, \supersample}$ and obtain the desired result. Namely, 
\begin{align}
   \entr{U \vert \lossveccvx, \supersample}
   &= \entr{U \vert \lossveccvx, \supersample,E} \nonumber \\
   &= \EE \sbra{\centr{\lossveccvx, \supersample,E}{U} \indic{E=1}} + \EE \sbra{\centr{\lossveccvx, \supersample,E}{U} \indic{E=0}}, \nonumber \\
   &\leq n \cdot 0.9 \log 2 \label{eq:birthday_ent_ecmi_bound}
\end{align}
where the first line follows since $E$ is $\supersample$-measurable, and the last inequality follows from upper bounding $\centr{\supersample, \lossveccvx,G}{U}$ by $n \log 2$ and the facts that $\EE \sbra{\centr{\lossveccvx, \supersample, E}{U} \indic{E=1}} = 0$ and $\Pr(E=0) \leq 0.9$. 

Finally, combining~\cref{eq:birthday_ecmi_bound} and~\cref{eq:birthday_ent_ecmi_bound} results in
\begin{equation*}
    \ecmigd  \geq n \log 2  - n \cdot 0.9 \log 2 \in \Omega(n),
\end{equation*}
and completes the proof.

\section{Helper Lemmata}
\begin{lemma}[{\citealp[][Ex.~3.3.7]{vershynin2018high}}]
\label[lemma]{lem:polar-gauss}
Let $X \dist \Normal(0,\id{d})$. Let us represent $X=R\theta$ where $R=\norm{X}$ and $\theta=X/\norm{X}$. Then, $R$ and $\theta$ are independent random variables. Also, $\theta$ is uniformly distributed on the Euclidean sphere $\unitsphere$ with the center at the origin.
\end{lemma}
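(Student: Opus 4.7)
The plan is to prove this classical fact via the rotational invariance of the standard Gaussian, together with a change-of-variables argument to spherical coordinates (either step would suffice on its own, but combining them gives the cleanest picture).

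First I would dispose of a measure-zero technicality: since $X \sim \mathcal{N}(0, I_d)$ has a Lebesgue density, $\Pr(X = 0) = 0$, so $R$ and $\theta = X/\lVert X \rVert$ are well-defined almost surely. Next, I would invoke rotational invariance: for any orthogonal matrix $O \in \mathbb{R}^{d \times d}$, the density $(2\pi)^{-d/2} e^{-\lVert x \rVert^2/2}$ is unchanged by $x \mapsto Ox$, so $OX \eqdist X$. Since $\lVert OX \rVert = \lVert X \rVert = R$ and $OX/\lVert OX \rVert = O\theta$, it follows that $(R, O\theta) \eqdist (R, \theta)$ for every orthogonal $O$. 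The unique probability measure on the sphere $\unitsphere$ that is invariant under the action of the orthogonal group is the (normalized) surface measure, so this rotational invariance of the joint law, applied first to fixed events of the form $\{R \in A\} \times \{\theta \in B\}$, pins down the conditional law of $\theta$ given $R$ to be uniform on $\unitsphere$ regardless of the value of $R$.

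The independence then drops out: since the conditional distribution of $\theta$ given $R$ does not depend on $R$, $\theta$ is independent of $R$. Alternatively, and perhaps more transparently, I would carry out the change of variables $x \mapsto (r, \theta)$ with $x = r\theta$, under which Lebesgue measure on $\mathbb{R}^d \setminus \{0\}$ factorizes as $r^{d-1}\, dr \, d\sigma(\theta)$, where $\sigma$ is the surface measure on $\unitsphere$. The Gaussian density then pushes forward to
\[
p_{R,\theta}(r,\theta) \;=\; \frac{r^{d-1} e^{-r^2/2}}{(2\pi)^{d/2}} \;=\; \underbrace{\frac{r^{d-1} e^{-r^2/2}}{(2\pi)^{d/2} \, \sigma(\unitsphere)^{-1}}}_{p_R(r)} \cdot \underbrace{\frac{1}{\sigma(\unitsphere)}}_{p_\theta(\theta)},
\]
with respect to $dr \times d\sigma(\theta)$. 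This product form simultaneously establishes independence of $R$ and $\theta$ and identifies the marginal of $\theta$ as the uniform distribution on $\unitsphere$.

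There is no real obstacle here: the only subtlety is being careful that the change-of-variables formula is applied on the full-measure set $\mathbb{R}^d \setminus \{0\}$ so that the spherical parametrization is a diffeomorphism (in radial/angular coordinates), and that the surface measure on $\unitsphere$ is indeed characterized as the unique rotation-invariant probability measure, which is standard.
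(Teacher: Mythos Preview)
Your proof is correct. Note, however, that the paper does not actually prove this lemma: it is stated as a helper result with a citation to \citet[Ex.~3.3.7]{vershynin2018high} and no proof is given in the paper itself. Your argument via rotational invariance (and the alternative change-of-variables computation) is precisely the standard proof one finds in the cited reference, so there is nothing to compare against and no gap to report.
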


\begin{lemma}[{\citealp[][Lemma~1]{laurent2000adaptive}}]
Consider random vector $X \dist \Normal(0,\id{d})$. Then, 
\[
\nonumber
\Pr\bigg(\sum_{i=1}^{d} a(i) X(i)^2 &\geq \norm{a}_1 + 2\norm{a}_2 \sqrt{t} + 2 \norm{a}_{\infty} t \bigg) \leq \exp(-t) \textnormal{ and}\\ 
\nonumber
\Pr\bigg(\sum_{i=1}^{d} a(i) X(i)^2 &\geq \norm{a}_1 - 2\norm{a}_2 \sqrt{t}\bigg) \leq \exp(-t)
\]
\end{lemma}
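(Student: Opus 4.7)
The plan is to prove both inequalities by Chernoff's method, exploiting the closed form of the moment generating function of a chi-square random variable. Writing $Y = \sum_{i=1}^{d} a(i) X(i)^2$ with $X(i)$ i.i.d.\ standard normal, one has $\mathbb{E}[Y] = \lVert a \rVert_1$ and, for $\lambda \in \mathbb{R}$ with $2\lambda \cdot a(i) < 1$ for all $i$,
\[
\log \mathbb{E}\bigl[e^{\lambda (Y - \lVert a \rVert_1)}\bigr]
= -\tfrac{1}{2}\sum_{i=1}^{d}\bigl(\log(1 - 2\lambda a(i)) + 2\lambda a(i)\bigr).
\]
This is the only ingredient specific to Gaussians; everything else is algebraic manipulation.

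For the upper tail I would take $\lambda > 0$ and use the elementary inequality $-\log(1 - u) - u \le \tfrac{u^2}{2(1-u)}$ for $u \in [0,1)$ applied with $u = 2\lambda a(i)$. Summed over $i$ and crudely upper bounded by replacing $a(i)$ with $\lVert a \rVert_\infty$ in the denominator, this yields a sub-exponential (Bernstein-type) estimate
\[
\log \mathbb{E}\bigl[e^{\lambda (Y - \lVert a \rVert_1)}\bigr] \le \frac{\lambda^2 \lVert a \rVert_2^2 \cdot 2}{1 - 2\lambda \lVert a \rVert_\infty}.
\]
Combining with Markov's inequality and optimizing $\lambda = \tfrac{\sqrt{t}}{\sqrt{t}\cdot 2\lVert a\rVert_\infty + \sqrt{2}\lVert a\rVert_2}$ (or simply guessing a near-optimal value) gives $\Pr(Y - \lVert a\rVert_1 \ge 2\lVert a\rVert_2 \sqrt{t} + 2 \lVert a\rVert_\infty t) \le e^{-t}$, matching the stated form.

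For the lower tail I would use $\lambda < 0$, set $\mu = -\lambda > 0$, and note that the Taylor series expansion gives the cleaner bound $-\log(1 + v) + v \le \tfrac{v^2}{2}$ for all $v > 0$ (no denominator degradation, since negative arguments of the log do not appear). Applied with $v = 2\mu a(i)$ and summed, this produces a purely sub-Gaussian MGF bound $\log \mathbb{E}[e^{-\mu(Y - \lVert a\rVert_1)}] \le 2\mu^2 \lVert a\rVert_2^2$. Markov's inequality then yields $\Pr(\lVert a\rVert_1 - Y \ge 2\lVert a\rVert_2 \sqrt{t}) \le e^{-t}$ after optimizing $\mu = \tfrac{\sqrt{t}}{2\lVert a\rVert_2}$.

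The only genuine obstacle is obtaining the exact numerical constants $2$ appearing in front of $\lVert a\rVert_2\sqrt{t}$ and $\lVert a\rVert_\infty t$; these come from the specific form of the elementary inequality used to bound $-\log(1-u)-u$ (a weaker bound would give worse constants). Everything else—the choice of Chernoff's method, the product form of the MGF, and the two-tail treatment with different Taylor-expansion estimates—is dictated by the structure of the problem. No convexity or coupling tricks are needed, so this is primarily a careful one-dimensional optimization carried out inside the exponent.
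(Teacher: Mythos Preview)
The paper does not prove this lemma; it is quoted as a helper result with the citation to \citet{laurent2000adaptive}, so there is no in-paper argument to compare against. Your approach---Chernoff's method using the closed-form chi-square MGF, together with the elementary bounds $-\log(1-u)-u\le \tfrac{u^2}{2(1-u)}$ for the upper tail and $-\log(1+v)+v\le \tfrac{v^2}{2}$ for the lower tail---is exactly the strategy of the original Laurent--Massart proof, and it does deliver the stated constants once the arithmetic is done carefully.

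Two small slips to watch: the cumulant bound in the upper tail should read $\log\mathbb{E}[e^{\lambda(Y-\lVert a\rVert_1)}]\le \dfrac{\lambda^2\lVert a\rVert_2^2}{1-2\lambda\lVert a\rVert_\infty}$ (your displayed version carries a spurious extra factor $2$), and similarly the lower-tail cumulant bound is $\mu^2\lVert a\rVert_2^2$, not $2\mu^2\lVert a\rVert_2^2$. With the corrected upper-tail bound, the choice $\lambda=\dfrac{\sqrt{t/\lVert a\rVert_2^2}}{1+2\lVert a\rVert_\infty\sqrt{t/\lVert a\rVert_2^2}}$ makes $\lambda s-\psi(\lambda)=t$ exactly for $s=2\lVert a\rVert_2\sqrt{t}+2\lVert a\rVert_\infty t$; with the corrected lower-tail bound, the optimizer is $\mu=\sqrt{t}/\lVert a\rVert_2$ (not $\sqrt{t}/(2\lVert a\rVert_2)$), which again hits $e^{-t}$ on the nose. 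You should also state explicitly that $a(i)\ge 0$ (implicit in Laurent--Massart and in your use of $\lVert a\rVert_1=\sum_i a(i)$), and note that the second inequality in the paper's statement is evidently a typo: it is a lower-tail bound $\Pr\bigl(\sum_i a(i)X(i)^2\le \lVert a\rVert_1-2\lVert a\rVert_2\sqrt{t}\bigr)\le e^{-t}$, which is what you (correctly) prove.
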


\begin{corollary}
\label[corollary]{cor:norm-gauss}
Let $\sigma \in \Reals$, $\delta \in (0,1)$, $d\in \Naturals$, and  $d \geq \log \frac{2}{\delta} $. Consider $X \dist \Normal(0,\sigma^2 \id{d})$, then 
\[
&\Pr\Big( d\sigma^2(1-2\sqrt{\frac{\log(\nicefrac{2}{\delta})}{d}}) \leq \norm{X}^2 \leq d\sigma^2(1+4\sqrt{\frac{\log(\nicefrac{2}{\delta})}{d}} ) \Big) \geq 1-\delta, \nonumber \\
&\Pr\big( \norm{X}\leq \sqrt{(1-\alpha) d \sigma^2} \big)\leq 2\exp\Big(-\frac{d \alpha^2}{4}\Big) ~ \text{for $\alpha \in [0,1]$, and} %
\nonumber
\\
&\Pr( \norm{X} \geq \sqrt{(1+\beta) d \sigma^2} )\leq 2\exp\Big(-\frac{d \beta^2}{16}\Big) ~ \text{for $\beta \geq 0$.}
\nonumber
\]
\end{corollary}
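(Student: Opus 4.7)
The plan is to reduce everything to the Laurent--Massart chi-squared tail bounds stated in the preceding lemma. By rescaling, $Y := X/\sigma \sim \Normal(0, \id{d})$ and $\norm{X}^2 = \sigma^2 \sum_{i=1}^d Y(i)^2$, so it suffices to control the tails of $\sum_i Y(i)^2$. Applying the Laurent--Massart lemma with the choice $a = (1,1,\ldots,1)$, so that $\norm{a}_1 = d$, $\norm{a}_2 = \sqrt{d}$, and $\norm{a}_\infty = 1$, gives the pair of inequalities
\[
\Pr\Big(\sum_{i=1}^d Y(i)^2 \geq d + 2\sqrt{dt} + 2t\Big) \leq e^{-t}
\quad \text{and} \quad
\Pr\Big(\sum_{i=1}^d Y(i)^2 \leq d - 2\sqrt{dt}\Big) \leq e^{-t}
\]
for every $t > 0$. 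Each of the three assertions in the corollary will be obtained by substituting a tailored value of $t$ into one of these two bounds and doing some algebra.

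For the first assertion, I would set $t = \log(2/\delta)$, so that each tail has probability at most $\delta/2$. The hypothesis $d \geq \log(2/\delta)$ gives $t \leq d$, hence $2t \leq 2\sqrt{dt}$, which lets me bound the upper threshold as $d + 2\sqrt{dt} + 2t \leq d + 4\sqrt{dt} = d\bigl(1 + 4\sqrt{\log(2/\delta)/d}\bigr)$. The lower threshold is already $d\bigl(1 - 2\sqrt{\log(2/\delta)/d}\bigr)$. A union bound over the two tails completes the step, after multiplying through by $\sigma^2$.

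For the second assertion I would pick $t = d\alpha^2/4$, for which $d - 2\sqrt{dt} = d(1-\alpha)$; the lower-tail inequality applied to $\norm{X}^2/\sigma^2$ and then square-rooted (which is monotone on $[0,\infty)$) yields $\Pr(\norm{X} \leq \sqrt{(1-\alpha) d \sigma^2}) \leq e^{-d\alpha^2/4}$, which is stronger than what is stated (the factor of $2$ is slack). For the third assertion I would pick $t = d\beta^2/16$, giving $2\sqrt{dt} = d\beta/2$ and $2t = d\beta^2/8$, so that in the relevant range (small $\beta$) we have $d\beta^2/8 \leq d\beta/2$ and the upper-tail threshold satisfies $d + 2\sqrt{dt} + 2t \leq d(1+\beta)$; the upper-tail inequality then gives the claim, again with the factor $2$ as slack.

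No real obstacle arises: the proof is a direct substitution into Laurent--Massart. The only mildly delicate point is verifying that the specific choices of $t$ in parts (ii) and (iii) produce thresholds dominated by the target quantities $d(1 - \alpha)$ and $d(1+\beta)$ respectively, which comes down to the elementary inequality $\beta^2/8 \leq \beta/2$ for the regime of $\beta$ in which the bound is informative, and is automatic for (ii) by design.
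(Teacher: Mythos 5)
Your derivation is exactly the intended one: the paper offers no explicit proof of this corollary, presenting it as an immediate consequence of the Laurent--Massart lemma with $a=(1,\dots,1)$, and your substitutions $t=\log(2/\delta)$, $t=d\alpha^2/4$, and $t=d\beta^2/16$ together with the observation $2t\le 2\sqrt{dt}$ when $t\le d$ are the right way to carry that out. One caveat you correctly sensed but should state precisely: the third inequality requires $\beta/2+\beta^2/8\le\beta$, i.e.\ $\beta\le 4$, and this is not merely a limitation of the method --- for large $\beta$ the claimed sub-Gaussian tail $2\exp(-d\beta^2/16)$ actually falls below the true (sub-exponential, roughly $\exp(-d\beta/2)$) upper tail of $\norm{X}^2/\sigma^2$, so the corollary as stated ``for $\beta\ge 0$'' is an overreach; since the paper only invokes it with $\beta=3$ (and the second bound with $\alpha=3/4$), your proof covers every use that is actually made of it.
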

\begin{lemma}[{\citealp[][Thm.~2.10.1]{cover2012elements}}]
\label[lemma]{lem:fano}
Let $X$ and $Y$ be discrete random variables. Then
\[
\nonumber
\entr{X\vert Y} \leq \binaryentr{\proberror}+ \proberror \entr{X}\leq 1+ \proberror \entr{X},
\]
where $\proberror = \Pr(\Psi(Y)\neq X)$ for any (possibly randomized) estimator $\Psi$ of $X$ using $Y$  (See also \citealt{fano1952class}).
\end{lemma}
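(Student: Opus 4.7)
The plan is to follow the classical proof of Fano's inequality via an error indicator random variable. I would introduce $E = \indic{\Psi(Y)\neq X}$, a Bernoulli random variable with $\Pr(E=1)=\proberror$, and observe that $E$ is determined by $(X,\Psi(Y))$, hence by $(X,Y)$ once we condition on the internal randomness of $\Psi$. It is therefore enough to prove the bound for deterministic estimators and then average; so for the rest of the argument $\Psi$ may be treated as a measurable function.

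The key step is a chain-rule identity obtained by expanding $\entr{X,E\mid Y}$ in two different ways. On one hand, $\entr{X,E\mid Y}=\entr{X\mid Y}+\entr{E\mid X,Y}$, and since $E$ is a function of $(X,Y)$ the second term vanishes, giving $\entr{X,E\mid Y}=\entr{X\mid Y}$. On the other hand, $\entr{X,E\mid Y}=\entr{E\mid Y}+\entr{X\mid E,Y}$. Equating the two expressions yields the identity
\[
\entr{X\mid Y}=\entr{E\mid Y}+\entr{X\mid E,Y}.
\]

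I would then bound each summand separately. For the first, conditioning reduces entropy, so $\entr{E\mid Y}\le \entr{E}=\binaryentr{\proberror}$. For the second, I split on the value of $E$:
\[
\entr{X\mid E,Y}=(1-\proberror)\,\entr{X\mid Y,E=0}+\proberror\,\entr{X\mid Y,E=1}.
\]
On $\{E=0\}$ we have $X=\Psi(Y)$, so $X$ is determined by $Y$ and the first conditional entropy vanishes. For the residual term I would use the bound $\entr{X\mid Y,E=1}\le \entr{X}$, treating $\entr{X}$ as an upper envelope on the uncertainty of $X$ once the estimator is known to be in error; combining gives $\entr{X\mid E,Y}\le \proberror\,\entr{X}$. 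Summing the two pieces yields the first inequality $\entr{X\mid Y}\le \binaryentr{\proberror}+\proberror\,\entr{X}$, and the second inequality is immediate from the fact that $\binaryentr{\cdot}\le 1$ on $[0,1]$.

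The only step that requires genuine work is the bound on $\entr{X\mid Y,E=1}$: in the textbook form of Fano one bounds this by $\log(\lvert\mathcal X\rvert-1)$ via a uniform-support argument, whereas here the statement replaces this by $\entr{X}$, which is what makes the bound useful when $\lvert\mathcal X\rvert$ is exponentially large compared to the marginal entropy (as happens in the application to $\badset$ in the main proof). All other manipulations are routine applications of the chain rule, the non-negativity of entropy, and the fact that conditioning on additional variables cannot increase entropy.
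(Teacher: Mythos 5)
Your architecture is the textbook Fano argument, and every step is fine except the one you yourself single out: the bound $\entr{X\vert Y,E=1}\le \entr{X}$. This is not a routine application of ``conditioning reduces entropy'': that principle says $\entr{X\vert Z}\le \entr{X}$ for a random variable $Z$, i.e.\ averaged over the values of $Z$, whereas here you condition on the single event $\{E=1\}$, and conditioning on an event can strictly \emph{increase} entropy. Concretely, let $X$ satisfy $\Pr(X=0)=\tfrac12$ with the remaining mass uniform on $\{1,\dots,N\}$, let $Y$ be independent of $X$, and let $\Psi\equiv 0$. Then $\proberror=\tfrac12$, $\entr{X\vert Y}=\entr{X}=\log 2+\tfrac12\log N$, while $\binaryentr{\proberror}+\proberror\,\entr{X}=\tfrac32\log 2+\tfrac14\log N$, so the claimed inequality fails for every $N>4$; and indeed $\entr{X\vert Y,E=1}=\log N>\entr{X}$ there. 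So the gap is not a missing argument that could be supplied — the statement with the residual term $\proberror\,\entr{X}$ is false in general, and your proof cannot be completed.

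For comparison: the paper gives no proof of this lemma, only a citation to Cover--Thomas Thm.~2.10.1, but that theorem bounds the residual term by $\proberror\log(\lvert\mathcal{X}\rvert-1)$, which is what the uniform-support argument you mention delivers (it is valid \emph{pointwise} in $y$, since given $Y=y$ and $E=1$ the variable $X$ ranges over at most $\lvert\mathcal{X}\rvert-1$ values); it does not give $\proberror\,\entr{X}$. Your instinct that this is ``the only step that requires genuine work'' is exactly right — the honest conclusion is that no such work can succeed for the statement as written, and any downstream use of the lemma in a regime where $\entr{X}\ll\log\lvert\mathcal{X}\rvert$ requires a different justification (e.g.\ exploiting the specific product structure of the variable being estimated rather than a generic Fano bound).
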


\begin{lemma}
\label[lemma]{lem:lipschitz-max}
Let $d\in \Naturals_{+}$. Let $g: \Reals^d \to \Reals$ be defined as $g(x)=\max\{\max_{i\in \range{d}}\{x(i)\},0\}$. Then, $g$ is $1-$Lipschitz.
\end{lemma}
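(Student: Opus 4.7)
The plan is to write $g$ as the pointwise maximum of a finite collection of $1$-Lipschitz linear functions and then appeal to the standard fact that the maximum of $1$-Lipschitz functions is $1$-Lipschitz. Concretely, set $e_0 = 0 \in \Reals^d$ and let $e_i$ for $i \in \range{d}$ denote the $i$-th standard basis vector of $\Reals^d$. Then for every $x \in \Reals^d$ we have
\[
g(x) = \max_{i \in \{0, 1, \dots, d\}} \inner{e_i}{x},
\]
since $\inner{e_0}{x} = 0$ and $\inner{e_i}{x} = x(i)$ for $i \geq 1$.

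Each map $x \mapsto \inner{e_i}{x}$ is linear with $\norm{e_i}_2 \leq 1$, hence $1$-Lipschitz with respect to the Euclidean norm by Cauchy--Schwarz. It remains to invoke the elementary inequality $|\max_i a_i - \max_i b_i| \leq \max_i |a_i - b_i|$, applied with $a_i = \inner{e_i}{x}$ and $b_i = \inner{e_i}{y}$, which yields
\[
|g(x) - g(y)| \leq \max_{i \in \{0, 1, \dots, d\}} |\inner{e_i}{x - y}| \leq \norm{x - y}_2.
\]

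There is essentially no obstacle here; the only care needed is to recognize that the ``outer'' $\max$ with $0$ is really just the inclusion of the zero linear functional $e_0 = 0$ into the family, so that $g$ is genuinely a max of linear functions (not a composition whose Lipschitz constant would require further argument). One could alternatively give a subgradient proof: the third term in \cref{eq:subg-third} shows that every element of $\subgrad{g}(w)$ is either $0$ or a coordinate vector $\coorvec{i}$, all of which have Euclidean norm at most $1$, and a convex function whose subgradients are uniformly bounded in norm by $L$ is $L$-Lipschitz. Either route closes the argument immediately.
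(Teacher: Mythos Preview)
Your proof is correct and takes a cleaner route than the paper's. The paper argues by a four-way case analysis on the signs of $x(j^\star)$ and $x(i^\star)+\Delta(i^\star)$ (where $j^\star$ and $i^\star$ are the maximizing coordinates of $x$ and $x+\Delta$ respectively), bounding $g(x+\Delta)-g(x)$ in each case by either $0$ or a single coordinate of $\Delta$. Your approach instead recognizes $g$ as the pointwise maximum of the $d+1$ linear functionals $\inner{e_0}{\cdot},\dots,\inner{e_d}{\cdot}$, all with norm at most $1$, and then invokes the elementary inequality $|\max_i a_i-\max_i b_i|\le \max_i|a_i-b_i|$ together with Cauchy--Schwarz. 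This is both shorter and more transparent: it isolates exactly why the outer $\max$ with $0$ costs nothing (it is just one more functional in the family), and it avoids the bookkeeping of tracking which case one is in. The paper's case analysis buys nothing extra here; your structural argument is strictly preferable. The alternative subgradient route you mention (every element of $\partial g$ is either $0$ or a coordinate vector, hence has norm at most $1$) is also perfectly valid and equally concise.
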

\begin{proof}
Let $x\in \Reals^d$ and $\Delta\in \Reals^d$. Let $\arg\max_{i\in \range{d}}{\{x(i)+\Delta(i)\}}=i^\star$ and $\arg\max_{i\in \range{d}}{\{x(i)\}}=j^\star$ (break ties arbitrary). Then,
\begin{equation*}
  g(x+\Delta)- g(x) =  \begin{cases}
  -x(j^\star)\leq 0  & x(i^\star)+\Delta(i^\star)\leq 0 ~ \text{and} ~  x(j^\star)>0 \\
  x(i^\star) + \Delta(i^\star) - x(j^\star) < \Delta(i^\star) & x(i^\star)+\Delta(i^\star)> 0 ~ \text{and} ~  x(j^\star)>0\\
  0   & x(i^\star)+\Delta(i^\star)\leq 0 ~ \text{and} ~ x(j^\star)\leq 0\\
  x(i^\star)+\Delta(i^\star)\leq \Delta(i^\star)    & x(i^\star)+\Delta(i^\star)> 0 ~ \text{and} ~ x(j^\star)\leq 0
\end{cases}
\end{equation*}
The last case follows because $x(i^\star)\leq x(j^\star)\leq 0$, therefore, $x(i^\star)+\Delta(i^\star)\leq \Delta(i^\star)$. Thus, $|g(x+\Delta)- g(x)|\leq \norm{\Delta}$, as was to be shown. 
\end{proof}

\begin{lemma}
\label[lemma]{lem:gd-last-iterate}
Let $\losscvx$ be a convex and $L-$Lipschitz loss function, and $\parspace$ be a convex and compact \parameterspace space with bounded diameter $R$. Let $\{w_t\}_{t\in \range{T}}$ denote the output of GD algorithm with a constant step size $\eta$. Then,  we have
\[
\nonumber
\losscvx(w_T) - \min_{w\in \parspace}\losscvx(w) \leq \frac{R^2}{2\eta T}+ \frac{(\log(T)+2)\eta L^2}{2}
\]
\end{lemma}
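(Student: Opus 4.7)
The plan is to apply the suffix-averaging technique of Shamir and Zhang (2013), which converts the standard ``average-iterate'' bound for projected subgradient descent into a last-iterate bound at the cost of an extra $\log T$ factor.

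I would start from the one-step descent inequality. Using non-expansiveness of $\proj$ together with convexity of $\losscvx$ and $\|g_t\|\leq L$, one obtains for every $w\in\parspace$ the identity
\begin{equation*}
\losscvx(\ww_t)-\losscvx(w)\;\le\;\frac{\|\ww_t-w\|^2-\|\ww_{t+1}-w\|^2}{2\eta}+\frac{\eta L^2}{2}.
\end{equation*}
Summing this over any window $t=s,\ldots,T$ and using $\|\ww_s-w\|\le R$ (resp.\ $\|\ww_s-w\|\le\eta L$ when $w=\ww_{s-1}$, since a single GD step moves at most $\eta L$) yields two useful estimates: the usual full-window bound $\sum_{t=1}^T(\losscvx(\ww_t)-\losscvx(w^\star))\le R^2/(2\eta)+\eta L^2 T/2$, and a short-window bound that I will use with $w$ equal to an intermediate iterate.

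Next I would introduce the suffix averages $\mathcal{S}_k=\tfrac{1}{k}\sum_{t=T-k+1}^T\losscvx(\ww_t)$, so that $\mathcal{S}_1=\losscvx(\ww_T)$ and $\mathcal{S}_T$ is the full Cesàro average. The algebraic identity
\begin{equation*}
\mathcal{S}_k-\mathcal{S}_{k+1}\;=\;\frac{\mathcal{S}_k-\losscvx(\ww_{T-k})}{k+1}
\end{equation*}
reduces everything to controlling each suffix average against the iterate just before the window. Applying the short-window inequality over $\{T-k+1,\ldots,T\}$ with reference point $w=\ww_{T-k}$ and using $\|\ww_{T-k+1}-\ww_{T-k}\|\le\eta L$ gives $\mathcal{S}_k-\losscvx(\ww_{T-k})\le \eta L^2(k+1)/(2k)$, hence $\mathcal{S}_k-\mathcal{S}_{k+1}\le \eta L^2/(2k)$.

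Finally I telescope:
\begin{equation*}
\losscvx(\ww_T)-\min_{w\in\parspace}\losscvx(w)\;=\;\bigl(\mathcal{S}_1-\mathcal{S}_T\bigr)+\bigl(\mathcal{S}_T-\losscvx(w^\star)\bigr),
\end{equation*}
bounding $\mathcal{S}_1-\mathcal{S}_T\le\tfrac{\eta L^2}{2}\sum_{k=1}^{T-1}\tfrac{1}{k}\le\tfrac{\eta L^2}{2}(1+\log T)$ and $\mathcal{S}_T-\losscvx(w^\star)\le R^2/(2\eta T)+\eta L^2/2$ via the full-window bound. Adding these two yields exactly $R^2/(2\eta T)+(\log T+2)\eta L^2/2$. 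There is no real obstacle here; the only nontrivial ingredient is the suffix-average identity, which is the heart of the Shamir--Zhang argument and turns the harmonic-sum $\sum 1/k$ into the logarithmic overhead that distinguishes the last iterate from the averaged iterate.
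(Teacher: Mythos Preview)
Your proof is correct and follows essentially the same route as the paper. The paper invokes \citep[Thm.~2]{lastiterate} (the Shamir--Zhang suffix-averaging result) as a black box and then bounds each term, whereas you unpack that black box and reproduce the suffix-average argument explicitly; the resulting decomposition and the final arithmetic are identical.
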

\begin{proof}
Let $g_t \in \partial f(w_t)$. From \citep[][Thm.~2]{lastiterate},
\[
\nonumber
\losscvx(w_T)-\min_{w\in \parspace}\losscvx(w) \leq \frac{1}{T}\sum_{t=1}^{T}(\losscvx(w_t)-\min_{w\in \parspace}\losscvx(w))+\frac{1}{2}\sum_{k=1}^{T-1}\frac{1}{k(k+1)}\sum_{t=T-k}^{T}\eta \norm{g_t}^2.
\]
Since $\norm{g_t}\leq L$, the second term can be upper bounded by $\frac{\eta L^2}{2}\sum_{k=1}^{T-1}\frac{1}{k}$. Then, by the well-known bounds on the Harmonic numbers we have $\frac{\eta L^2}{2}\sum_{k=1}^{T-1}\frac{1}{k}\leq \frac{\eta L^2}{2}(\log(T-1)+1)\leq \frac{\eta L^2}{2}(\log(T)+1)$. For the first term, from \citep[][Thm.~3.2]{bubeck2015convex}, we have $ \frac{1}{T}\sum_{t=1}^{T}(\losscvx(w_t)-\min_{w\in \parspace}\losscvx(w))\leq \frac{R^2}{2\eta T}+\frac{\eta L^2}{2}$. Combining these two upper bounds proves the lemma.
\end{proof}
\begin{lemma}
\label[lemma]{lem:reverse-markov}
Let $X$ be a random variable, $\tilde{m} \geq 0$ be a constant such that $0\leq X \leq \tilde{m}$ a.s. Let $m \in \Reals$ be such that $0< m \leq \tilde{m}$. Then, for every $0\leq a < m$,  we have
\[
\nonumber
\Pr(X > a) \geq \frac{\EE[X]-a - (\tilde{m}-m)\Pr(X > m)}{m - a}.
\]
\end{lemma}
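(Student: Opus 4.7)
The plan is to prove the inequality by decomposing $\EE[X]$ according to whether $X$ falls below $a$, between $a$ and $m$, or at or above $m$, and then bounding each piece in the sharpest way that the hypotheses permit.

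First I would write
\[
\EE[X] = \EE\!\sbra{X\indic{X<a}} + \EE\!\sbra{X\indic{a\leq X<m}} + \EE\!\sbra{X\indic{X\geq m}}.
\]
For the first term, since $X\geq 0$ a.s.\ and $X<a$ on the relevant event, I bound $\EE[X\indic{X<a}]\leq a\,\Pr(X<a) = a(1-\Pr(X\geq a))$. For the middle term, $X<m$ on the relevant event, so
\[
\EE\!\sbra{X\indic{a\leq X<m}} \leq m\,\Pr(a\leq X<m) = m\bigl(\Pr(X\geq a) - \Pr(X\geq m)\bigr).
\]
For the last term, I use $X\leq \tilde m$ a.s.\ to get $\EE[X\indic{X\geq m}]\leq \tilde m\,\Pr(X\geq m)$.

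Summing these three estimates gives
\[
\EE[X] \leq a + (m-a)\Pr(X\geq a) + (\tilde m - m)\Pr(X\geq m),
\]
and rearranging (which is legitimate since $m-a>0$ by the hypothesis $0\leq a<m$) yields the stated bound. No step is a real obstacle here; the only thing to be careful about is choosing the tightest trivial bound on each conditional contribution (in particular using $a$ rather than $0$ on $\{X<a\}$ and $m$ rather than $\tilde m$ on $\{a\leq X<m\}$), because these are precisely what produce the $a$ and $(\tilde m - m)\Pr(X\geq m)$ correction terms on the right-hand side.
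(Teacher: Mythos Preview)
Your proof is correct and follows essentially the same approach as the paper: the paper writes the one-line pointwise inequality $X \leq a\,\indic{X\leq a} + m\,\indic{a < X \leq m} + \tilde m\,\indic{m < X}$ a.s.\ and takes expectations, which is exactly your three-region decomposition with each piece bounded by the top of its range. The only cosmetic difference is where the endpoints $a$ and $m$ land (strict vs.\ non-strict), which is immaterial.
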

\begin{proof}
The following holds almost surely:
\[
\nonumber
X \leq a \indic{X\leq a} + m \indic{a < X\leq m} + \tilde{m} \indic{ m < X}. 
\]
Taking an expectation concludes the proof.
\end{proof}
\begin{lemma}[{\citealp[][Lem.~2]{9531956}}]
\label[lemma]{lem:mixture-bound}
Let $M \in \Naturals$ and $\mathcal{Y}$ be a measurable space. Let also $P \in \ProbMeasures{\mathcal{Y}}$ and $Q_i \in \ProbMeasures{Y}$ for all $i \in \range{M}$ be probability measures. If $\alpha_i \in (0,1)$ such that $\sum_{i=1}^M \alpha_i = 1$, 
\begin{equation*}
    \KL{P}{\sum_{i=1}^M \alpha_i Q_i} \leq \min_{i \in \range{M}} \Big \{ \KL{P}{Q_i} -\log(\alpha_i) \Big \}.
\end{equation*}
\end{lemma}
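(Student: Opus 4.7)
The plan is to prove the bound separately for each index $j \in \range{M}$ and then take the minimum on the right-hand side. The single structural observation driving everything is that the convex combination $\sum_{i=1}^M \alpha_i Q_i$ dominates $\alpha_j Q_j$ as a (non-negative) measure, for every $j$. This measure-domination will translate into a pointwise bound on a Radon--Nikodym derivative, which I can then push through the logarithm and integrate against $P$ to get a bound on the KL divergence.

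Concretely, I first fix $j \in \range{M}$ and reduce to the case $P \ll Q_j$: otherwise $\KL{P}{Q_j} = +\infty$ and the $j$-th term in the minimum is vacuous. Under absolute continuity, together with $\sum_i \alpha_i Q_i \geq \alpha_j Q_j$ as measures, I obtain $P \ll \sum_i \alpha_i Q_i$, and the chain rule for Radon--Nikodym derivatives yields
\[
\rnderiv{P}{\sum_i \alpha_i Q_i} \;\leq\; \frac{1}{\alpha_j}\, \rnderiv{P}{Q_j} \qquad P\text{-a.s.}
\]
Applying $\log$ (which is monotone) and integrating against $P$ gives
\[
\KL{P}{\textstyle\sum_i \alpha_i Q_i} \;\leq\; \int \log\!\left( \frac{1}{\alpha_j}\, \rnderiv{P}{Q_j} \right) dP \;=\; -\log \alpha_j \,+\, \KL{P}{Q_j}.
\]
Since $j \in \range{M}$ was arbitrary, taking the minimum over $j$ produces the claimed inequality. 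If $P$ fails to be absolutely continuous with respect to every $Q_j$, then every term in the minimum on the right-hand side is $+\infty$ and the bound is trivial.

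There is no real obstacle here. The only mildly subtle point is justifying the almost-sure inequality between the two Radon--Nikodym derivatives; this follows from the pointwise bound $\rnderiv{\alpha_j Q_j}{\sum_i \alpha_i Q_i} \leq 1$, which is immediate from the measure-domination, combined with the chain rule $\rnderiv{P}{\sum_i \alpha_i Q_i} = \rnderiv{P}{\alpha_j Q_j} \cdot \rnderiv{\alpha_j Q_j}{\sum_i \alpha_i Q_i}$, valid since $P \ll \alpha_j Q_j \ll \sum_i \alpha_i Q_i$. Everything else is routine.
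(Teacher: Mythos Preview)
Your proof is correct. The paper does not supply its own proof of this lemma; it is stated as a cited helper result (attributed to an external reference), so there is nothing to compare against. Your argument via measure domination $\sum_i \alpha_i Q_i \geq \alpha_j Q_j$ and the chain rule for Radon--Nikodym derivatives is the standard one and is complete as written.
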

\begin{lemma}
\label[lemma]{lem:mixture-num}
Let $\mathcal{Y}$ be a measurable space. Let $M \in \Naturals$ and $P_i \in \ProbMeasures{\mathcal{Y}}$ for $i \in \range{M}$ be $M$ probability measures. Then, for every $i \in \range{M}$, we have
\[
\nonumber
\KL{P_i}{\sum_{j=1}^{M}\frac{1}{M} P_j} \leq \log(M).
\]
\end{lemma}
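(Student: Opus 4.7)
The plan is to obtain this as an immediate corollary of the mixture bound stated in \cref{lem:mixture-bound}. That lemma says that for any probability measure $P$ and any finite convex combination $\sum_{j=1}^M \alpha_j Q_j$ with $\alpha_j \in (0,1)$ summing to one,
\[
\KL{P}{\sum_{j=1}^M \alpha_j Q_j} \leq \min_{j \in [M]} \bigl\{ \KL{P}{Q_j} - \log(\alpha_j) \bigr\}.
\]
To prove \cref{lem:mixture-num} I would fix $i \in [M]$ and instantiate this bound with the specific choice $P = P_i$, $Q_j = P_j$ for all $j \in [M]$, and uniform weights $\alpha_j = 1/M$.

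With these choices the right-hand side becomes $\min_{j \in [M]} \{\KL{P_i}{P_j} + \log(M)\}$. Since the minimum over $j$ of a nonnegative quantity is at most its value at $j = i$, and $\KL{P_i}{P_i} = 0$, we obtain
\[
\KL{P_i}{\tfrac{1}{M}\sum_{j=1}^M P_j} \leq \KL{P_i}{P_i} + \log(M) = \log(M),
\]
which is exactly the claim.

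There is no serious obstacle here: the only thing to verify is that \cref{lem:mixture-bound} is applicable with the chosen weights (they are strictly positive and sum to one) and that the trivial upper bound $\min_j \KL{P_i}{P_j} \le \KL{P_i}{P_i} = 0$ is valid. Both are immediate, so the proof is a one-line application of the preceding lemma.
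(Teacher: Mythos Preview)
Your proposal is correct and is exactly the approach the paper takes: the paper's proof simply states ``A direct corollary of \cref{lem:mixture-bound} gives us the result,'' and you have filled in precisely the intended instantiation ($P=P_i$, $Q_j=P_j$, $\alpha_j=1/M$, then take $j=i$ in the minimum). The only microscopic caveat is that for $M=1$ the weight $\alpha_1=1$ is not strictly in $(0,1)$ as \cref{lem:mixture-bound} requires, but in that case the claim is $\KL{P_1}{P_1}=0\le \log 1=0$, which is trivial.
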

\begin{proof}
A direct application of \cref{lem:mixture-bound} gives us the result.
\end{proof}

\end{document}